\newcommand*{\Resize}[2]{\resizebox{#1}{!}{$#2$}}%
\newcommand{\beginsupplement}{%
        \setcounter{table}{0}
        \renewcommand{\thetable}{S\arabic{table}}%
        \setcounter{algorithm}{0}
        \renewcommand{\thealgorithm}{S\arabic{algorithm}}%
        \setcounter{equation}{0}
        \renewcommand{\theequation}{S\arabic{equation}}%
        \setcounter{figure}{0}
        \renewcommand{\thefigure}{S\arabic{figure}}%
        \setcounter{section}{0}
        \renewcommand{\thesection}{S\arabic{section}}%
     }
\newtheorem{theorem}{Theorem}
\newtheorem{lemma}{Lemma}
\newtheorem{proposition}{Proposition}
\newcommand{\norm}[1]{\left\lVert#1\right\rVert}
\newcommand*\samethanks[1][\value{footnote}]{\footnotemark[#1]}
\title{Liquid Time-constant Networks}
\author{
    Ramin Hasani,\textsuperscript{\rm 1,3}\thanks{Authors with equal contributions}
    Mathias Lechner,\textsuperscript{\rm 2}\samethanks[1]
    Alexander Amini,\textsuperscript{\rm 1}
    Daniela Rus,\textsuperscript{\rm 1}
    Radu Grosu\textsuperscript{\rm 3}
    \\
}
\begin{document}

\maketitle

\begin{abstract}
We introduce a new class of time-continuous recurrent neural network models. Instead of declaring a learning system's dynamics by implicit nonlinearities, we construct networks of linear first-order dynamical systems modulated via nonlinear interlinked gates. The resulting models represent dynamical systems with varying (i.e., \emph{liquid}) time-constants coupled to their hidden state, with outputs being computed by numerical differential equation solvers. These neural networks exhibit stable and bounded behavior, yield superior expressivity within the family of neural ordinary differential equations, and give rise to improved performance on time-series prediction tasks. To demonstrate these properties, we first take a theoretical approach to find bounds over their dynamics, and compute their expressive power by the \emph{trajectory length} measure in a latent trajectory space. We then conduct a series of time-series prediction experiments to manifest the approximation capability of Liquid Time-Constant Networks (LTCs) compared to classical and modern RNNs.\footnote[1]{Code and data are available at: \url{https://github.com/raminmh/liquid_time_constant_networks}
}
\end{abstract}

\section{Introduction}

Recurrent neural networks with continuous-time hidden states determined by ordinary differential equations (ODEs), are effective algorithms for modeling time series data that are ubiquitously used in medical, industrial and business settings. The state of a neural ODE, $\textbf{x}(t) \in \mathbb{R}^D$, is defined by the solution of this equation \cite{chen2018neural}: $d\textbf{x}(t)/dt = f(\textbf{x}(t), \textbf{I}(t), t, \theta)$, with a neural network $f$ parametrized by $\theta$. One can then compute the state using a numerical ODE solver, and train the network by performing reverse-mode automatic differentiation \cite{rumelhart1986learning}, either by gradient descent through the solver \cite{lechner2019designing}, or by considering the solver as a black-box \cite{chen2018neural,dupont2019augmented,gholami2019anode} and apply the \emph{adjoint method} \cite{pontryagin2018mathematical}. The open questions are: how expressive are neural ODEs in their current formalism, and can we improve their structure to enable richer representation learning and expressiveness? 

Rather than defining the derivatives of the hidden-state directly by a neural network $f$, one can determine a more stable continuous-time recurrent neural network (CT-RNN) by the following equation \cite{funahashi1993approximation}: 
$\frac{d\textbf{x}(t)}{dt} = - \frac{\textbf{x}(t)}{\tau} + f(\textbf{x}(t), \textbf{I}(t), t, \theta)$, in which the term $- \frac{\textbf{x}(t)}{\tau}$ assists the autonomous system to reach an equilibrium state with a time-constant $\tau$. $\textbf{x}(t)$ is the hidden state, $\textbf{I}(t)$ is the input, t represents time, and $f$ is parametrized by $\theta$. 

We propose an alternative formulation: let the hidden state flow of a network be declared by a system of linear ODEs of the form: $d\textbf{x}(t)/dt = - \textbf{x}(t)/\tau + \textbf{S}(t)$, and let $\textbf{S}(t) \in \mathbb{R}^M$ represent the following nonlinearity determined by $\textbf{S}(t) = f(\textbf{x}(t),\textbf{I}(t), t, \theta) (A - \textbf{x}(t))$, with parameters $\theta$ and $A$. Then, by plugging in $\textbf{S}$ into the hidden states equation, we get:

\begin{equation}
\label{eq:ltc_1}
\begin{split}
    \frac{d\textbf{x}(t)}{dt} = &- \Big[\frac{1}{\tau} + f(\textbf{x}(t), \textbf{I}(t),t, \theta)\Big] \textbf{x}(t) +\\  
    & f(\textbf{x}(t), \textbf{I}(t), t, \theta) A.
\end{split}
\end{equation}

Eq. \ref{eq:ltc_1} manifests a novel time-continuous RNN instance with several features and benefits:

\noindent\textbf{Liquid time-constant.} A neural network $f$ not only determines the derivative of the hidden state $\textbf{x}(t)$, but also serves as an input-dependent varying time-constant ($\tau_{sys} = \frac{\tau}{1+ \tau f(\textbf{x}(t), \textbf{I}(t), t, \theta)}$) for the learning system (Time constant is a parameter characterizing the speed and the coupling sensitivity of an ODE).This property enables single elements of the hidden state to identify specialized dynamical systems for input features arriving at each time-point. We refer to these models as \emph{liquid time-constant} recurrent neural networks (LTCs). LTCs can be implemented by an arbitrary choice of ODE solvers. In Section 2, we introduce a practical fixed-step ODE solver that simultaneously enjoys the stability of the implicit Euler and the computational efficiency of the explicit Euler methods. 

\noindent\textbf{Reverse-mode automatic differentiation of LTCs.} LTCs realize differentiable computational graphs. Similar to neural ODEs, they can be trained by variform of gradient-based optimization algorithms. We settle to trade memory for numerical precision during a backward-pass by using a vanilla backpropagation through-time algorithm to optimize LTCs instead of an adjoint-based optimization method \cite{pontryagin2018mathematical}. In Section 3, we motivate this choice thoroughly.

\noindent\textbf{Bounded dynamics - stability.} In Section 4, we show that the state and the time-constant of LTCs are bounded to a finite range. This property assures the stability of the output dynamics and is desirable when inputs to the system relentlessly increase.

\noindent\textbf{Superior expressivity.} In Section 5, we theoretically and quantitatively analyze the approximation capability of LTCs. We take a functional analysis approach to show the universality of LTCs. We then delve deeper into measuring their expressivity compared to other time-continuous models. We perform this by measuring the \emph{trajectory length} of activations of networks in a latent trajectory representation. 
Trajectory length was introduced as a measure of expressivity of feed-forward deep neural networks \cite{raghu2017expressive}. We extend these criteria to the family of continuous-time recurrent models.

\noindent\textbf{Time-series modeling.} In Section 6, we conduct a series of eleven time-series prediction experiments and compare the performance of modern RNNs to the time-continuous models. We observe improved performance on a majority of cases achieved by LTCs. 

\noindent \textbf{Why this specific formulation?}
There are two primary justifications for the choice of this particular representation:

\noindent I) LTC model is loosely related to the computational models of neural dynamics in small species, put together with synaptic transmission mechanisms \cite{hasani2020natural}. The dynamics of non-spiking neurons' potential, $\textbf{v}(t)$, can be written as a system of linear ODEs of the form \cite{lapicque1907recherches,Koch98}: $d\textbf{v}/dt = - g_l \textbf{v}(t) + \textbf{S}(t)$, where $\textbf{S}$ is the sum of all synaptic inputs to the cell from presynaptic sources, and $g_{l}$ is a leakage conductance.

All synaptic currents to the cell can be approximated in steady-state by the following nonlinearity \cite{Koch98,wicks1996dynamic}: $\textbf{S}(t) = f(\textbf{v}(t), \textbf{I}(t)), (A - \textbf{v}(t))$, where $f(.)$ is a sigmoidal nonlinearity depending on the state of all neurons, $\textbf{v}(t)$ which are presynaptic to the current cell, and external inputs to the cell, $I(t)$. By plugging in these two equations, we obtain an equation similar to Eq. \ref{eq:ltc_1}. LTCs are inspired by this foundation.

\noindent II) Eq. \ref{eq:ltc_1} might resemble that of the famous Dynamic Causal Models (DCMs) \cite{friston2003dynamic} with a Bilinear dynamical system approximation \cite{penny2005bilinear}. DCMs are formulated by taking a second-order approximation (Bilinear) of the dynamical system $d\textbf{x}/dt = F(\textbf{x}(t), \textbf{I}(t), \theta)$, that would result in the following format \cite{friston2003dynamic}: $d\textbf{x}/dt = (A +\textbf{I}(t) B) \textbf{x}(t) + C \textbf{I}(t)$ with $A = \frac{dF}{d\textbf{x}}$, $B = \frac{dF^2}{d\textbf{x}(t) d\textbf{I}(t)}$, $C = \frac{dF}{d\textbf{I}(t)}$. DCM and bilinear dynamical systems have shown promise in learning to capture complex fMRI time-series signals. LTCs are introduced as variants of continuous-time (CT) models that are loosely inspired by biology, show great expressivity, stability, and performance in modeling time series.


\section{LTCs forward-pass by a fused ODE solvers}

Solving Eq. \ref{eq:ltc_1} analytically, is non-trivial due to the nonlinearity of the LTC semantics. The state of the system of ODEs, however, at any time point $T$, can be computed by a numerical ODE solver that simulates the system starting from a trajectory $x(0)$, to $x(T)$. An ODE solver breaks down the continuous simulation interval $[0,T]$ to a temporal discretization, $[t_0,t_1,\dots t_n]$. As a result, a solver's step involves only the update of the neuronal states from $t_i$ to $t_{i+1}$.

LTCs’ ODE realizes a system of stiff equations \cite{Press2007}. This type of ODE requires an exponential number of discretization steps when simulated with a Runge-Kutta (RK) based integrator. Consequently, ODE solvers based on RK, such as Dormand–Prince (default in torchdiffeq \cite{chen2018neural}), are not suitable for LTCs. Therefore, We design a new ODE solver that fuses the explicit and the implicit Euler methods \cite{Press2007}. This choice of discretization method results in achieving stability for an implicit update equation. To this end, the \emph{Fused Solver} numerically unrolls a given dynamical system of the form $dx/dt = f(x)$ by:
\begin{equation}\label{eq:hybrid}
x(t_{i+1}) =  x(t_i) + \Delta t f(x(t_i), x(t_{i+1})).
\end{equation}

\begin{algorithm}[t]
\caption{LTC update by fused ODE Solver}
\label{algorithm:chap3_LTC_Cell_Update}
\begin{algorithmic}
\STATE \textbf{Parameters:} $\theta =$ \{$\tau^{(N \times 1)}$ = time-constant, $\gamma^{(M \times N)}$ = weights, $\gamma_{r}^{(N \times N)}$ = recurrent weights, $\mu^{(N \times 1)}$ = biases\}, $A^{(N \times 1)}$ = bias vector, $L=$ Number of unfolding steps, $\Delta t =$ step size, $N = $ Number of neurons, 
\STATE \textbf{Inputs:} $M$-dimensional Input $\textbf{I}(t)$ of length $T$, $\textbf{x}(0)$
\STATE \textbf{Output:} Next LTC neural state $\textbf{x}_{t+\Delta t}$
\STATE \textbf{Function:} FusedStep($\textbf{x}(t)$, $\textbf{I}(t)$, $\Delta t$, $\theta$)
\STATE $\textbf{x}(t+\Delta t)^{(N \times T)} = \frac{\textbf{x}(t)~+~ \Delta t f(\textbf{x}(t), \textbf{I}(t), t, \theta) \odot A} {1 + \Delta t \big( 1/\tau + f(\textbf{x}(t), \textbf{I}(t), t, \theta)\big)}$
\STATE $\triangleright$~~$f(.)$, and all divisions are applied element-wise.
\STATE $\triangleright$~~$\odot$ is the Hadamard product.
\STATE \textbf{end Function}
\STATE $\textbf{x}_{t+\Delta t} = \textbf{x}(t)$
\FOR{$i = 1 \dots L$}
\STATE $\textbf{x}_{t+\Delta t} =$ FusedStep($\textbf{x}(t)$, $\textbf{I}(t)$, $\Delta t$, $\theta$)
\ENDFOR
\RETURN $\textbf{x}_{t+\Delta t}$
\end{algorithmic}
\end{algorithm}

In particular, we replace only the $x(t_i)$ that occur linearly in $f$ by $x(t_{i+1})$. As a result, Eq \ref{eq:hybrid} can be solved for $x(t_{i+1})$, symbolically. Applying the Fused solver to the LTC representation, and solving it for $\textbf{x}(t+\Delta t)$, we get:

\begin{equation}
    \label{eq:ltc_fused_1}
 \textbf{x}(t+\Delta t)= \frac{\textbf{x}(t)~+~ \Delta t f(\textbf{x}(t),\textbf{I}(t), t, \theta) A} {1 + \Delta t \big( 1/\tau + f(\textbf{x}(t), \textbf{I}(t), t, \theta)\big)}.
\end{equation}




Eq. \ref{eq:ltc_fused_1} computes one update state for an LTC network. Correspondingly, Algorithm \ref{algorithm:chap3_LTC_Cell_Update} shows how to implement an LTC network, given a parameter space $\theta$. $f$ is assumed to have an arbitrary activation function (e.q. for a $tanh$ nonlinearity $f = \tanh(\gamma_r \textbf{x} + \gamma \textbf{I} + \mu)$).
The computational complexity of the algorithm for an input sequence of length $T$ is $O(L \times T)$, where $L$ is the number of discretization steps. Intuitively, a dense version of an LTC network with $N$ neurons, and a dense version of a long short-term memory (LSTM) \cite{hochreiter1997long} network with $N$ cells, would be of the same complexity.

\section{Training LTC networks by BPTT}

Neural ODEs were suggested to be trained by a constant memory cost for each layer in a neural network $f$ by applying the adjoint sensitivity method to perform reverse-mode automatic differentiation \cite{chen2018neural}. The adjoint method, however, comes with numerical errors when running in reverse mode. This phenomenon happens because the adjoint method forgets the forward-time computational trajectories, which was repeatedly denoted by the community \cite{gholami2019anode,zhuang2020ada}.


On the contrary, direct backpropagation through time (BPTT) trades memory for accurate recovery of the forward-pass during the reverse mode integration \cite{zhuang2020ada}. Thus, we set out to design a vanilla BPTT algorithm to maintain a highly accurate backward-pass integration through the solver. For this purpose, a given ODE solver's output (a vector of neural states), can be recursively folded to build an RNN and then apply the learning algorithm described in Algorithm \ref{algorithm:chap3_training} to train the system. Algorithm \ref{algorithm:chap3_training} uses a vanilla stochastic gradient descent (SGD). One can substitute this with a more performant variant of the SGD, such as Adam \cite{kingma2014adam}, which we use in our experiments.

\begin{algorithm}[t]
\caption{Training LTC by BPTT}
\label{algorithm:chap3_training}
\begin{algorithmic}
\STATE \textbf{Inputs:} Dataset of traces $[I(t),y(t)]$ of length $T$, RNNcell $= f(I,x)$
\STATE \textbf{Parameter:} Loss func $L(\theta)$, initial param $\theta_0$, learning rate $\alpha$, Output w $= W_{out}$, and bias $= b_{out}$
\FOR{$i = 1 \dots$ number of training steps}
\STATE $(I_b$,$y_b) =$ Sample training batch,~~~~~$x := x_{t_0} \sim p(x_{t_0})$ 
\FOR{$j = 1 \dots T$}
\STATE $x = f(I(t),x)$,~~~$\hat{y}(t) = W_{out}.x + b_{out}$,~~~$L_{total} = \sum_{j=1}^{T} L(y_j(t), \hat{y}_j(t))$,~~~$\nabla L(\theta) = \frac{\partial L_{tot}}{\partial \theta}$
\STATE $\theta = \theta - \alpha \nabla L(\theta)$
\ENDFOR
\ENDFOR
\RETURN $\theta$
\end{algorithmic}
\end{algorithm}

\begin{table}[t]
\footnotesize
    \centering
    \caption{Complexity of the vanilla BPTT compared to the adjoint method, for a single layer neural network $f$}
    \vspace{-3mm}
    \begin{tabular}{c|cc}
    \hline
         & \textbf{Vanilla BPTT} & \textbf{Adjoint} \\
         \hline
        Time & $O(L \times T \times 2)$  & $O((L_f + L_b) \times T)$ \\
        Memory & $O(L \times T)$ & $\textbf{O(1)}$ \\
        Depth & $O(L)$ & $O(L_b)$ \\
        FWD acc  & High & High\\ 
        BWD acc & \textbf{High} & Low \\
        \hline
    \end{tabular}
    \caption*{\tiny \textbf{Note:} $L =$ number of discretization steps, $L_f =$ L during forward-pass. $L_b =$ L during backward-pass. $T = $ length of sequence, Depth = computational graph depth.}
    \label{tab:complexity}
    \vspace{-9mm}
\end{table}

\noindent\textbf{Complexity.} Table \ref{tab:complexity} summarizes the complexity of our vanilla BPTT algorithm compared to an adjoint method. We achieve a high degree of accuracy on both forward and backward integration trajectories, with similar computational complexity, at large memory costs. 

\section{Bounds on $\tau$ and neural state of LTCs}
LTCs are represented by an ODE which varies its time-constant based on inputs. It is therefore important to see if LTCs stay stable for unbounded arriving inputs \cite{hasani2019response,lechner2020gershgorin}. In this section, we prove that the time-constant and the state of LTC neurons are bounded to a finite range, as described in Theorems \ref{lemma_tau} and \ref{lemma_neuralstate}, respectively.
\begin{theorem}
\label{lemma_tau}
Let $x_i$ denote the state of a neuron $i$ within an LTC network identified by Eq. \ref{eq:ltc_1}, and let neuron $i$ receive $M$ incoming connections. Then, the time-constant of the neuron, $\tau_{sys_i}$, is bounded to the following range:
\begin{equation}
\label{eq:tau}
 \tau_i/(1+\tau_i W_i) \leq \tau_{sys_i} \leq \tau_i,
\end{equation}
\end{theorem}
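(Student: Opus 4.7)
The plan is to start from the definition of the liquid time-constant given in the introduction, namely $\tau_{sys_i} = \tau_i / \bigl(1 + \tau_i\, f_i(\mathbf{x}(t), \mathbf{I}(t), t, \theta)\bigr)$, and then reduce the claim to a pointwise bound on the scalar quantity $f_i$ at neuron $i$. Since $\tau_{sys_i}$ depends on the input only through $f_i$, the two-sided inequality in Eq.~\ref{eq:tau} will follow by elementary monotonicity once I establish that $0 \le f_i \le W_i$ for every admissible input and hidden state.

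First I would unpack what $f_i$ looks like for a single neuron. Following the biological motivation in Section~1 and the algorithmic realization in Algorithm~\ref{algorithm:chap3_LTC_Cell_Update}, $f_i$ is a weighted sum over the $M$ presynaptic contributions, where each contribution passes through a sigmoidal (bounded, non-negative) nonlinearity $\sigma(\cdot) \in (0,1)$. Writing $f_i(\cdot) = \sum_{j=1}^{M} w_{ij}\,\sigma_j(\cdot)$ with non-negative synaptic weights $w_{ij}$, the trivial bound $0 < \sigma_j < 1$ gives $0 \le f_i \le \sum_{j=1}^{M} w_{ij} =: W_i$. This defines $W_i$ as the sum of incoming weights into neuron $i$ and makes the statement of the theorem well-posed.

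Next I would substitute the two extremes into the expression for $\tau_{sys_i}$. Because the map $z \mapsto \tau_i / (1 + \tau_i z)$ is strictly decreasing on $z \ge 0$ (its derivative is $-\tau_i^2/(1+\tau_i z)^2 < 0$ for $\tau_i > 0$), the maximum of $\tau_{sys_i}$ is attained at $f_i = 0$, yielding $\tau_{sys_i} \le \tau_i$, and the minimum is attained at $f_i = W_i$, yielding $\tau_{sys_i} \ge \tau_i / (1 + \tau_i W_i)$. Chaining the two inequalities gives exactly Eq.~\ref{eq:tau}. As a sanity check, both bounds are strictly positive, which is consistent with $\tau_{sys_i}$ being a genuine time-constant.

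The main obstacle, as I see it, is not the algebra but nailing down the precise class of activations for which the lower bound on $f_i$ holds. If one allowed $f$ to be, for example, a $\tanh$ nonlinearity of a signed affine combination (as mentioned parenthetically after Eq.~\ref{eq:ltc_fused_1}), then $f_i$ could in principle be negative and $\tau_{sys_i}$ could exceed $\tau_i$ or even blow up when $1 + \tau_i f_i \to 0^+$. So the proof has to either restrict attention to non-negative sigmoidal activations (faithful to the biological derivation that motivated LTCs) or, more generally, require non-negative $f_i$; this hypothesis should be stated explicitly at the start of the proof. Modulo that hypothesis, the remainder of the argument is a one-line application of monotonicity.
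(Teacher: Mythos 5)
Your proposal is correct and follows essentially the same route as the paper's proof: both reduce the claim to the bound $0 \le f_i \le W_i$ obtained from a bounded, monotonically increasing sigmoidal nonlinearity with positive incoming weights, and then evaluate the time-constant at the two extremes (the paper does this by substituting $f=0$ and $f=W_i$ into Eq.~\ref{eq:ltc_1} and reading the decay rate off the resulting linear ODE's solution, which is algebraically the same as your monotonicity argument on $z \mapsto \tau_i/(1+\tau_i z)$). Your explicit remark that the hypothesis excludes signed activations such as $\tanh$ is a point the paper leaves implicit, and is worth stating.
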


The proof is provided in Appendix. It is constructed based on bounded, monotonically increasing sigmoidal nonlinearity for neural network $f$ and its replacement in the LTC network dynamics. A stable varying time-constant significantly enhances the expressivity of this form of time-continuous RNNs, as we discover more formally in Section 5.

\begin{theorem}
\label{lemma_neuralstate}
Let $x_i$ denote the state of a neuron $i$ within an LTC, identified by Eq. \ref{eq:ltc_1}, and let neuron $i$ receive $M$ incoming connections. Then, the hidden state of any neuron $i$, on a finite interval $Int \in[0,T]$, is bounded as follows:
\begin{equation}
\label{eq:vbound}
 {min}(0, A_{i}^{min}) \leq x_i(t) \leq {max}(0, A_{i}^{max}),
\end{equation}
\end{theorem}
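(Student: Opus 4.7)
The plan is to treat Eq.~\ref{eq:ltc_1}, on a per-neuron basis, as a scalar linear time-varying ODE with a strictly negative coefficient in front of $x_i$, and then apply an elementary extremum principle on the finite interval $[0,T]$. I would first unfold the dynamics of neuron $i$ in the per-synapse form suggested by the biological derivation in the introduction and by the hypothesis that neuron $i$ receives $M$ incoming connections:
$$\dot{x}_i(t) = -\alpha_i(t)\, x_i(t) + \beta_i(t),$$
with $\alpha_i(t) = 1/\tau_i + \sum_{j=1}^{M} f_{ij}(\cdot)$ and $\beta_i(t) = \sum_{j=1}^{M} f_{ij}(\cdot)\, A_{ij}$. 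Since $f_{ij}$ is a sigmoidal gate (the same assumption already invoked in Theorem~\ref{lemma_tau}), every $f_{ij} \geq 0$, hence $\alpha_i(t) \geq 1/\tau_i > 0$ uniformly in $t$.

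The key algebraic observation is that
$$\frac{\beta_i(t)}{\alpha_i(t)} = \frac{1/\tau_i}{\alpha_i(t)}\cdot 0 \;+\; \sum_{j=1}^{M} \frac{f_{ij}(t)}{\alpha_i(t)}\, A_{ij}$$
is a convex combination of $\{0, A_{i1}, \dots, A_{iM}\}$, the leak term $1/\tau_i$ contributing the ``0'' weight. In particular $\min(0, A_i^{\min}) \leq \beta_i(t)/\alpha_i(t) \leq \max(0, A_i^{\max})$ at every $t$. With this in hand, I would finish by a one-line maximum-principle argument: suppose $x_i$ attains a maximum over $[0,T]$ at some $t^\star$; the boundary case $t^\star=0$ is controlled by the standing assumption on the initial state, and for $t^\star\in(0,T]$ we have $\dot{x}_i(t^\star)\geq 0$, so the ODE forces $x_i(t^\star)\leq \beta_i(t^\star)/\alpha_i(t^\star) \leq \max(0, A_i^{\max})$. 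A symmetric argument at a minimizer yields the lower bound.

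If a more computational proof is preferred, an equivalent route is to integrate the scalar linear ODE with the integrating factor $\exp\bigl(\int_0^t \alpha_i(s)\,ds\bigr)$ and write
$$x_i(t) = e^{-\int_0^t \alpha_i(s)\,ds} x_i(0) + \int_0^t e^{-\int_s^t \alpha_i(u)\,du}\,\beta_i(s)\,ds,$$
then observe that, because $\int_0^t e^{-\int_s^t \alpha_i(u)\,du}\alpha_i(s)\,ds = 1 - e^{-\int_0^t \alpha_i(s)\,ds}$, this expresses $x_i(t)$ as a convex combination of $x_i(0)$ and the time-varying equilibrium $\beta_i(s)/\alpha_i(s)$, giving containment in the desired interval directly.

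The main obstacle is notational rather than mathematical: Eq.~\ref{eq:ltc_1} displays $f$ and $A$ as if they were single vectors indexed by the hidden state, whereas the bound $A_i^{\min}\leq \cdot\leq A_i^{\max}$ together with the hypothesis on $M$ incoming connections clearly invokes the per-synapse decomposition described in the introduction. Once that decomposition is made explicit, the positivity of $\alpha_i$ and the convex-hull structure of $\beta_i/\alpha_i$ deliver both inequalities immediately; no ODE-theoretic machinery beyond the elementary maximum principle for a scalar linear ODE is needed.
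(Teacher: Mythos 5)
Your proof is correct, and it reaches the bound by a somewhat different and in fact tighter route than the paper. The paper's argument plugs the candidate extreme values $M=\max(0,A_i^{\max})$ and $m=\min(0,A_i^{\min})$ into the right-hand side of Eq.~\ref{eq:ltc_1}, checks that the resulting derivative has the confining sign (using $f\ge 0$, positive weights, and $A_i\le M$), and then concludes invariance through an explicit-Euler approximation $\dot{x}_i\approx (x_i(t+\Delta t)-x_i(t))/\Delta t$ — essentially a barrier argument carried out with a discretization heuristic. You instead recast the per-neuron dynamics as a scalar linear time-varying ODE $\dot{x}_i=-\alpha_i(t)x_i+\beta_i(t)$, observe that the instantaneous equilibrium $\beta_i/\alpha_i$ is a convex combination of $0$ (from the leak term $1/\tau_i$) and the reversal values $A_{ij}$, and close either by the extremum principle at a maximizer/minimizer or exactly via the variation-of-constants formula, which exhibits $x_i(t)$ as a convex combination of $x_i(0)$ and the quasi-steady states. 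What your route buys is a fully continuous-time argument with no discretization step, an explicit identification of why $0$ appears in the bound (the leak contributes the zero endpoint of the convex hull), and an honest treatment of the initial-condition dependence, which the paper's statement and proof leave implicit (both arguments genuinely require $x_i(0)\in[\min(0,A_i^{\min}),\max(0,A_i^{\max})]$, e.g.\ $x_i(0)=0$); your reading of $A_i^{\min},A_i^{\max}$ as extrema over the $M$ per-synapse parameters $A_{ij}$ matches the intended interpretation. Incidentally, your symmetric argument at a minimizer is stated with the correct inequality directions, whereas the paper's lower-bound derivation contains a sign slip ($x(t+\Delta t)\le m$ where $\ge$ is meant), so your version is also the cleaner record of the second half of the proof.
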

The proof is given in Appendix. It is constructed based on the sign of the LTC's equation's compartments, and an approximation of the ODE model by an explicit Euler discretization. Theorem \ref{lemma_neuralstate} illustrates a desired property of LTCs, namely \emph{state stability} which guarantees that the outputs of LTCs never explode even if their inputs grow to infinity. Next we discuss the expressive power of LTCs compared to the family of time-continuous models, such as CT-RNNs and neural ordinary differential equations \cite{chen2018neural,rubanova2019latent}. 

\begin{figure}[t]
\centering
\includegraphics[width=0.4\textwidth]{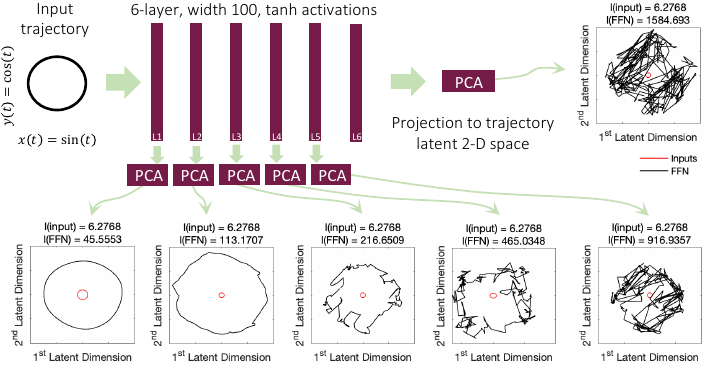}
\caption{Trajectory's latent space becomes more complex as the input passes through hidden layers.}
\label{fig:traj_ffn}
\end{figure}
\section{On the expressive power of LTCs}
Understanding how the structural properties of neural networks determine which functions they can compute is known as the expressivity problem. The very early attempts on measuring expressivity of neural nets include the theoretical studies based on functional analysis. They show that neural networks with three-layers can approximate any finite set of continuous mapping with any precision. This is known as the \emph{universal approximation theorem} \cite{hornik1989multilayer,funahashi1989approximate,cybenko1989approximation}. Universality was extended to standard RNNs \cite{funahashi1989approximate} and even continuous-time RNNs \cite{funahashi1993approximation}. By careful considerations, we can also show that LTCs are also universal approximators. 

\begin{figure*}[b]
 \centering
 \includegraphics[width=0.9\textwidth]{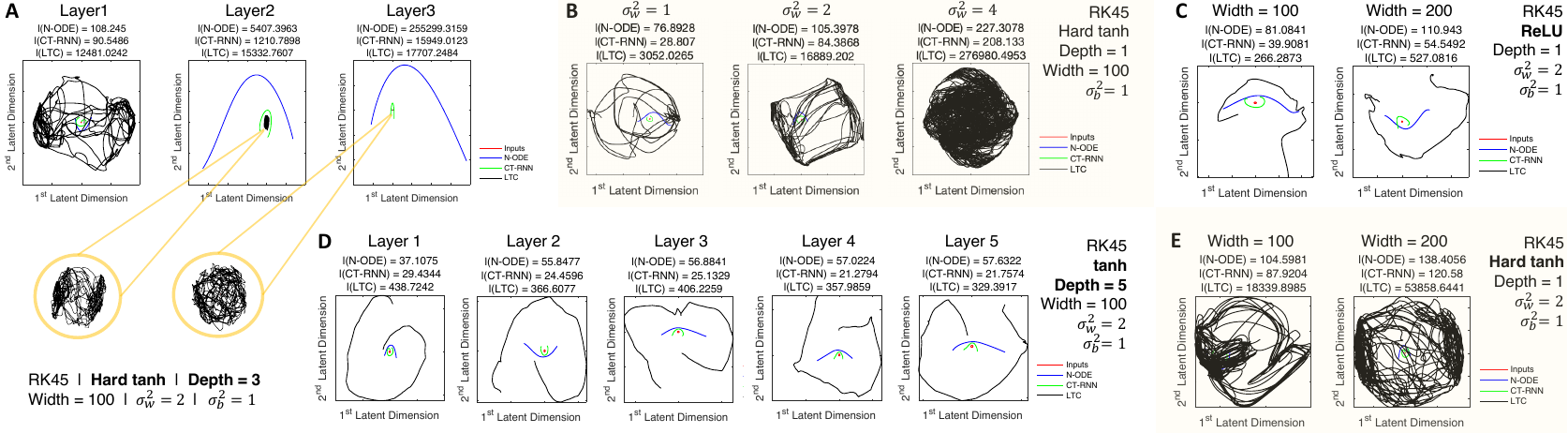}
 \caption{Trajectory length deformation A) in network layers with \url{Hard-tanh} activations, B) as a function of the weight distribution scaling factor, C) as a function of network width (\url{ReLU}), D) in network layers with \url{logistic-sigmoid} activations and E) as a function of width (\url{Hard-tanh}).}
 \label{fig:traj_glimpse}
\end{figure*}

\begin{theorem}
\label{univ_approx_ltc}
Let $\textbf{x} \in \mathbb{R}^n$, $S\subset\mathbb{R}^n$ and $\dot{\textbf{x}} = F(\textbf{x})$ be an autonomous ODE with $F:S\rightarrow\mathbb{R}^n$ a $C^1$-mapping on $S$. Let $D$ denote a compact subset of $S$ and assume that the simulation of the system is bounded in the interval $I = [0,~T]$. Then, for a positive $\epsilon$, there exist an LTC network with $N$ hidden units, $n$ output units, and an output internal state $\textbf{u}(t)$, described by Eq. \ref{eq:ltc_1}, such that for any rollout $\{\textbf{x}(t)|t\,{\in}\,I\}$ of the system with initial value $x(0)\,{\in}\,D$, and a proper network initialization, 
\begin{equation}
{max}_{t\,{\in}\,I} |\textbf{x}(t)\,{-}\,\textbf{u}(t)|\,{<}\,\epsilon
\end{equation}
\end{theorem}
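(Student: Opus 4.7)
The plan is to follow the two-step template of the Funahashi--Nakamura (1993) universality theorem for continuous-time recurrent networks: first establish that the LTC vector field is rich enough to approximate the target $F$ uniformly on a suitably enlarged compact set, then upgrade this pointwise vector-field approximation into a uniform trajectory approximation on the finite horizon $I$ via Grönwall's inequality.

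For the vector-field approximation step, rewrite the autonomous LTC dynamics as $\dot{u} = V(u) := f(u) \odot (A - u) - u/\tau$, with $f$ the embedded neural network having $N$ hidden units. Pick a compact neighborhood $D' \supset D$ that contains a closed $\epsilon$-tube around every $F$-trajectory starting in $D$; since $F$ is $C^1$, it is Lipschitz on $D'$ with some constant $L$. Choose each componentwise bias $A_i > \sup_{u \in D'} u_i + 1$, so that $g(u) := (F(u) + u/\tau) / (A - u)$ is well-defined and continuous on $D'$. By the Cybenko/Hornik universal approximation theorem applied coordinatewise, for any $\delta > 0$ there is a choice of $N$ and weights such that $\|f - g\|_{C^0(D')} < \delta$; multiplying through by $(A - u)$, which is uniformly bounded on $D'$, yields $\|V - F\|_{C^0(D')} < C \delta$ for a constant $C$ depending only on $A$ and $D'$.

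For the trajectory bound, let $x(t)$ solve $\dot{x} = F(x)$ and $u(t)$ solve $\dot{u} = V(u)$ with the same initial condition $x(0) = u(0) \in D$. While $u(t) \in D'$, one has $|F(x(t)) - V(u(t))| \le L\, |x(t) - u(t)| + C\delta$, so the integral form of Grönwall's inequality gives $|x(t) - u(t)| \le (C\delta/L)(e^{LT} - 1)$ for all $t \in I$. Choosing $\delta$ small enough makes the right-hand side smaller than $\epsilon$, and simultaneously keeps $u(t)$ inside the $\epsilon$-tube $D'$, closing the bootstrap and yielding the claim.

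The main obstacle is structural: the LTC right-hand side is not a free $C^1$ field but the constrained form $f(u) \odot (A - u) - u/\tau$, so density of such expressions in $C^0(D', \mathbb{R}^n)$ is the crux of the argument. The division trick above works precisely because the bias vector $A$ can be chosen coordinatewise to force $A - u$ bounded away from zero on $D'$, which reduces the problem to the standard universal approximation statement for the single factor $f$. A secondary subtlety is ensuring the perturbed trajectory $u(t)$ does not leave $D'$ while we apply the Lipschitz and approximation estimates; this is handled by the standard invariant-tube argument together with Theorem \ref{lemma_neuralstate}, which already guarantees that LTC state trajectories stay in a bounded box determined by $A$.
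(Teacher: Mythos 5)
Your analytic skeleton---uniform vector-field approximation on an enlarged compact set, then a Gr\"onwall/invariant-tube argument on the finite horizon---is sound and essentially coincides with Part~1 of the paper's proof (the paper's Lipschitz-closeness lemma, Lemma~\ref{lem6}, is exactly your Gr\"onwall step). Your division trick, choosing $A_i$ so that $A_i-u_i\geq 1$ on $D'$ and approximating $g(u)=(F(u)+u/\tau)/(A-u)$ so that $f(u)\odot(A-u)-u/\tau$ is uniformly close to $F$, is a genuinely different and more economical way of handling the gated decay term than the paper's choice, which approximates $F$ directly by $A f(\gamma x+\mu)$ and then makes the liquid time-constant contribution negligible by imposing conditions (a)--(b) (effectively $\tau W_l \ll 1$) so that the state-dependent decay is a small perturbation.

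There is, however, a genuine gap: your construction does not produce the object the theorem asserts, namely an LTC \emph{network} with $N$ hidden units, $n$ output units and output state $\textbf{u}(t)$ described by Eq.~\ref{eq:ltc_1}. In your system $\dot u = f(u)\odot(A-u)-u/\tau$ the ``$N$ hidden units'' are a static hidden layer inside $f$, evaluated instantaneously on the $n$-dimensional output state; in the theorem (as in Funahashi--Nakamura) the hidden units are dynamical units of the recurrent network, whose states themselves obey the LTC equation and to which the sigmoid is applied through an affine map of the full network state. Turning the intermediate ODE into such a network is precisely Part~2 of the paper's proof: augment the state with $y=\gamma x+\mu$, obtain a system of the form $\dot z = -z/\tau_{sys}+W f(z)+\mu_1/\tau_{sys}$, show the residual $\mu_1/\tau_{sys}$ term is harmless (another application of Lemma~\ref{lem6}), and read off hidden states $h(t)$ and output states $u(t)$; the paper explicitly identifies this realization step, with the input-dependent time-constant threaded through it, as the non-trivial part. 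Your proposal skips it entirely, so as written it establishes universality of ``ODEs whose right-hand side has LTC shape with a feedforward coefficient network,'' not of LTC networks with hidden units. A secondary flaw: you invoke Theorem~\ref{lemma_neuralstate} to confine $u(t)$ to a box, but that theorem assumes a positive sigmoidal $f$ with values in $(0,1)$ and positive weights, whereas the $f$ you construct must approximate the signed function $g$ and so violates those hypotheses; the finite-horizon tube bootstrap you also describe suffices on its own, so the appeal to Theorem~\ref{lemma_neuralstate} should be dropped.
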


The main idea of the proof is to define an $n$-dimensional dynamical system and place it into a higher dimensional system. The second system is an LTC. The fundamental difference of the proof of LTC's universality to
that of CT-RNNs \cite{funahashi1993approximation} lies in the distinction of the semantics of both systems where the LTC network contains a nonlinear input-dependent term in its time-constant module which makes parts of the proof non-trivial. 

The universal approximation theorem broadly explores the expressive power of a neural network model. The theorem however, does not provide us with a foundational measure on where the separation is between different neural network architectures. Therefore, a more rigorous measure of expressivity is demanded to compare models, specifically those networks specialized in spatiotemporal data processing, such as LTCs. The advances made on defining measures for the expressivity of static deep learning models \cite{pascanu2013number,montufar2014number,eldan2016power,poole2016exponential,raghu2017expressive} could presumably help measure the expressivity of time-continuous models, both theoretically and quantitatively, which we explore in the next section.

\subsection{Measuring expressivity by trajectory length}
A measure of expressivity has to take into account what degrees of complexity a learning system can compute, given the network's capacity (depth, width, type, and weights configuration). A unifying expressivity measure of static deep networks is the \emph{trajectory length} introduced in \cite{raghu2017expressive}. In this context, one evaluates how a deep model transforms a given input trajectory (e.g., a circular 2-dimensional input) into a more complex pattern, progressively.

\begin{table}[t]
\tiny
    \centering
    \caption{\textbf{Computational depth of models}}
    \begin{tabular}{l|ccc}
        \toprule
        & \multicolumn{3}{c}{\textbf{Computational Depth}}\\
        Activations &  Neural ODE  & CT-RNN & \textbf{LTC}   \\
        \midrule
        \url{tanh} & 0.56 $\pm$ 0.016 & 4.13 $\pm$ 2.19 & 9.19 $\pm$ 2.92 \\
        \url{sigmoid}  & 0.56 $\pm$ 0.00 & 5.33 $\pm$ 3.76 & 7.00 $\pm$ 5.36\\
        \url{ReLU} & 1.29 $\pm$ 0.10 & 4.31 $\pm$ 2.05 & 56.9 $\pm$ 9.03\\
        \url{Hard-tanh} & 0.61 $\pm$ 0.02 &  4.05 $\pm$ 2.17 & 81.01 $\pm$ 10.05\\
        \bottomrule
    \end{tabular}
    \caption*{\tiny \textbf{Note:} $\#$ of tries = 100, input samples' $\Delta t = 0.01$, $T = 100$ sequence length. $\#$ of layers = 1, width = 100, $\sigma^{2}_{w} = 2$, $\sigma^{2}_{b} = 1$.}
    \label{tab:depth}
    \vspace{-8mm}
\end{table}

We can then perform principle component analysis (PCA) over the obtained network's activations. Subsequently, we measure the length of the output trajectory in a 2-dimensional latent space, to uncover its relative complexity (see Fig. \ref{fig:traj_ffn}). The trajectory length is defined as the \emph{arc length} of a given trajectory $I(t)$, (e.g. a circle in 2D space) \cite{raghu2017expressive}: $l(I(t)) = \int_t \norm{dI(t)/dt} dt$. By establishing a lower-bound for the growth of the trajectory length, one can set a barrier between networks of shallow and deep architectures, regardless of any assumptions on the network's weight configuration \cite{raghu2017expressive}, unlike many other measures of expressivity \cite{pascanu2013number,montufar2014number,serra2017bounding,gabrie2018entropy,hanin2018start,hanin2019complexity,lee2019towards}.
\begin{figure*}[t]
 \centering
 \includegraphics[width=0.8\textwidth]{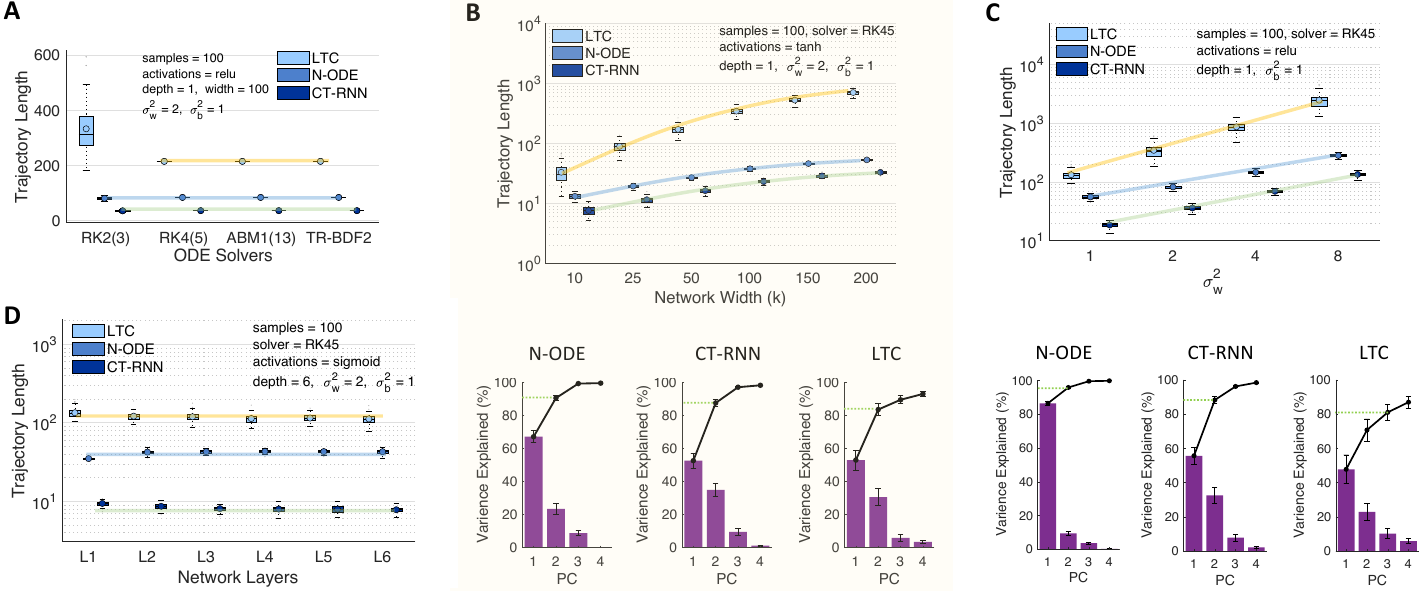}
\caption{Dependencies of the trajectory length measure. A) trajectory length vs different solvers (variable-step solvers). RK2(3): Bogacki-Shampine Runge-Kutta (2,3) \protect\cite{bogacki19893}. RK4(5): Dormand-Prince explicit RK (4,5) \protect\cite{dormand1980family}. ABM1(13): Adams-Bashforth-Moulton \protect\cite{shampine1975computer}. TR-BDF2: implicit RK solver with 1st stage trapezoidal rule and a 2nd stage backward differentiation \protect\cite{hosea1996analysis}. 
B) Top: trajectory length vs network width. Bottom: Variance-explained of principle components (purple bars) and their cumulative values (solid black line). C) Trajectory length vs weights distribution variance. D) trajectory length vs layers. (More results in the supplements)
}
 \label{fig:traj_results}
\end{figure*}
We set out to extend the trajectory-space analysis of static networks to time-continuous (TC) models, and to lower-bound the trajectory length to compare models' expressivity. To this end, we designed instances of Neural ODEs, CT-RNNs and LTCs with shared $f$. The networks were initialized by weights $ \sim \mathcal{N}(0,\sigma^{2}_{w} / k)$, and biases $\sim \mathcal{N}(0,\sigma^{2}_{b})$. We then perform forward-pass simulations by using different types of ODE solvers, for arbitrary weight profiles, while exposing the networks to a circular input trajectory $I(t)= \{I_1(t) = \sin(t), I_2(t) = \cos(t)\}$, for $t \in [0, 2\pi]$. By looking at the first two principle components (with an average variance-explained of over $80\%$) of hidden layers' activations, we observed consistently more complex trajectories for LTCs. Fig. \ref{fig:traj_glimpse} gives a glimpse of our empirical observations. All networks are implemented by the Dormand-Prince explicit Runge-Kutta(4,5) solver \cite{dormand1980family} with a variable step size. We had the following \textbf{observations}: 
\textbf{I)} Exponential growth of the trajectory length of Neural ODEs and CT-RNNs with \url{Hard-tanh} and \url{ReLU} activations (Fig. \ref{fig:traj_glimpse}A) and unchanged shape of their latent space regardless of their weight profile.
\textbf{II)} LTCs show a slower growth-rate of the trajectory length when designed by \url{Hard-tanh} and \url{ReLU}, with the compromise of realizing great levels of complexity (Fig. \ref{fig:traj_glimpse}A, \ref{fig:traj_glimpse}C and \ref{fig:traj_glimpse}E). \textbf{III)} Apart from multi-layer time-continuous models built by \url{Hard-tanh} and \url{ReLU} activations, in all cases, we observed a longer and a more complex latent space behavior for the LTC networks (Fig. \ref{fig:traj_glimpse}B to \ref{fig:traj_glimpse}E). \textbf{IV)} Unlike static deep networks (Fig. \ref{fig:traj_ffn}), we witnessed that the trajectory length does not grow by depth in multi-layer continuous-time networks realized by \url{tanh} and \url{sigmoid} (Fig. \ref{fig:traj_glimpse}D). \textbf{V)} conclusively, we observed that the trajectory length in TC models varies by a model's activations, weight and bias distributions variance, width and depth. We presented this more systematically in Fig. \ref{fig:traj_results}. \textbf{VI)} Trajectory length grows linearly with a network's width (Fig. \ref{fig:traj_results}B - Notice the logarithmic growth of the  curves in the log-scale Y-axis). \textbf{VII)} The growth is considerably faster as the variance grows (Fig. \ref{fig:traj_results}C). \textbf{VIII)} Trajectory length is reluctant to the choice of ODE solver (Fig. \ref{fig:traj_results}A). \textbf{IX)} Activation functions diversify the complex patterns explored by the TC system, where \url{ReLU} and \url{Hard-tanh} networks demonstrate higher degrees of complexity for LTCs. A key reason is the presence of recurrent links between each layer's cells. 
\textbf{Definition of Computational Depth (\textit{L}). } For one hidden layer of $f$ in a time-continuous network, $L$ is the average number of integration steps taken by the solver for each incoming input sample. Note that for an $f$ with $n$ layers we define the total depth as $n \times L$. These observations have led us to formulate Lower bounds for the growth of the trajectory length of continuous-time networks.

\begin{theorem}
\label{theorem:neural_ode}
 \emph{Trajectory Length growth Bounds for Neural ODEs and CT-RNNs.} Let $dx/dt = f_{n,k}(\textbf{x}(t),\textbf{I}(t), \theta)$ with $\theta = \{W, b\}$, represent a Neural ODE and $\frac{d\textbf{x}(t)}{dt} = - \frac{\textbf{x}(t)}{\tau} +  f_{n,k}(\textbf{x}(t),\textbf{I}(t), \theta)$ with $\theta = \{W, b, \tau \}$ a CT-RNN. $f$ is randomly weighted with\emph{\url{Hard-tanh}} activations. Let $\textbf{I}(t)$ be a 2D input trajectory, with its progressive points (i.e. $I(t+\delta t)$) having a perpendicular component to $\textbf{I}(t)$ for all $\delta t$, with $L =$ number of solver-steps. Then, by defining the projection of the first two principle components' scores of the hidden states over each other, as the 2D \emph{latent trajectory space} of a layer $d$, $z^{(d)}(\textbf{I}(t)) = z^{(d)}(t)$, for Neural ODE and CT-RNNs respectively, we have:
\begin{gather}
\mathbb{E}\Bigg[ l(z^{(d)}(t))\Bigg] 
    \geq
    O\Bigg(\frac{\sigma_w\sqrt{k}}{\sqrt{ \sigma^{2}_{w} + \sigma^{2}_{b} + k \sqrt{\sigma^{2}_{w} + \sigma^{2}_{b}}}}\Bigg)^{d \times L} l(I(t)),
    \\
\mathbb{E}\Bigg[ l(z^{(d)}(t))\Bigg] 
    \geq
    O\Bigg(\frac{(\sigma_w-\sigma_b)\sqrt{k}}{\sqrt{ \sigma^{2}_{w} + \sigma^{2}_{b} + k \sqrt{\sigma^{2}_{w} + \sigma^{2}_{b}}}}\Bigg)^{d \times L} l(I(t)).
\end{gather}
\end{theorem}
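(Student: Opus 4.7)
The plan is to reduce the trajectory-length analysis of the continuous-time models to the feedforward case treated in Raghu et al.~(2017), by unrolling the ODE solver into an effectively-feedforward computation and then transferring their perpendicular-component growth lemma to the residual and damped updates induced by the discretization. The assumption that $I(t+\delta t)$ has a perpendicular component to $\mathbf{I}(t)$ is precisely the hypothesis needed to apply Raghu's framework (it prevents the tangent from collapsing).

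First, I would discretize both models with a one-step explicit Euler scheme of step size $\Delta t$, matching the unrolled structure of the solver. For the Neural ODE with an $n$-layer $f$ having \url{Hard-tanh} activations, each solver iteration contributes $n$ affine-plus-activation compositions wrapped inside a residual identity: $\mathbf{x}(t_{i+1}) = \mathbf{x}(t_i) + \Delta t\, f_{n,k}(\mathbf{x}(t_i),\mathbf{I}(t_i),\theta)$. Over $L$ solver steps this produces $d\cdot L$ effective layer applications (taking $d=n$). For the CT-RNN the update acquires an extra contraction, $\mathbf{x}(t_{i+1}) = (1-\Delta t/\tau)\mathbf{x}(t_i) + \Delta t\, f_{n,k}(\mathbf{x}(t_i),\mathbf{I}(t_i),\theta)$, which I will treat as a perturbation of the Neural-ODE case by a bias-like random quantity of scale $\sigma_b$ (since $1/\tau$ is drawn from a distribution of that scale in the random initialization assumed).

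Second, I would lift Raghu et al.'s lower bound on the per-layer growth of the perpendicular component of the tangent trajectory. With weights $w_{ij}\sim\mathcal{N}(0,\sigma_w^2/k)$ and biases $\sim\mathcal{N}(0,\sigma_b^2)$, their argument shows that for \url{Hard-tanh} the perpendicular component of $dz/dt$ expands in expectation by at least $\sigma_w\sqrt{k}/\sqrt{\sigma_w^2+\sigma_b^2+k\sqrt{\sigma_w^2+\sigma_b^2}}$ per layer. For Neural ODE the identity branch of the residual update only adds a direction parallel to the incoming state and so cannot cancel the perpendicular growth from the nonlinear branch; absorbing $\Delta t$ into the constants, the per-step expansion inequality carries over, and iterating it $d\cdot L$ times yields the first bound. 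For CT-RNN, the contraction $(1-\Delta t/\tau)$ damps the identity-direction contribution proportionally to $\sigma_b$, so the same iteration produces the second bound with numerator $\sigma_w-\sigma_b$. Finally, to pass from the full-dimensional latent trajectory to the projection $z^{(d)}(t)$ onto the top two principal components, I would invoke a standard concentration-of-variance argument for high-dimensional random activations to show that the PCA projection contracts the arc length by at most a universal constant, which is absorbed into the $O(\cdot)$ notation.

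The main obstacle will be making the residual-plus-damping update rigorous inside Raghu's perpendicular-component recursion, because the state $\mathbf{x}(t_i)$ and the nonlinear increment $\Delta t\,f(\mathbf{x}(t_i),\mathbf{I}(t_i))$ are no longer independent in the way successive layers of a pure feedforward net are. The cleanest route is to track the two-dimensional subspace spanned by $\mathbf{x}$ and $f(\mathbf{x})$, argue that under a fresh draw of $W$ at each effective layer the nonlinear term is close to orthogonal to the identity direction with high probability, and derive a modified one-step inequality in which the residual and damping terms only contribute lower-order corrections to the expansion factor. Once this step is established, exponentiating over $d\cdot L$ effective layers and projecting onto the two leading principal components is mechanical and yields the stated bounds.
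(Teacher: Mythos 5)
Your high-level strategy---porting Raghu et al.'s perpendicular-component growth analysis to the continuous-time setting, and attributing the CT-RNN's weaker bound to a $\sigma_b$-scale random $1/\tau$---is the same family of argument as the paper's, but the concrete route you propose diverges where it matters and has gaps. The paper never has to wrestle with a residual identity branch at all: it works with the time-derivative itself, writing $\tfrac{dz}{dt}^{(d+1)} = f\bigl(W^{(d)}z^{(d)}+b^{(d)}\bigr)$ in the latent space, decomposing $W^{(d)}$ into parallel/perpendicular blocks via Raghu's lemmas to obtain a one-layer recurrence of the form $\mathbb{E}\bigl[\lVert \tfrac{dz}{dt}^{(d+1)}_{\bot}\rVert\bigr] \geq c(\sigma_w,\sigma_b,k)\,\mathbb{E}\bigl[\lVert z^{(d)}_{\bot}\rVert\bigr]$, bounding the Hard-tanh unsaturation probability, and then recovering arc length by integrating the derivative norm and rolling the recurrence back through the layers of $f$; the factor $L$ enters as the number of solver steps multiplying the depth, not as $d\cdot L$ independent random layers. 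Your plan instead leans on a ``fresh draw of $W$ at each effective layer,'' which is false for the model being analyzed: the same weight matrices are reused at every one of the $L$ solver steps, so the per-step expansion factors are not independent and their expectations cannot simply be multiplied across the unrolled steps. Likewise, your claim that the identity branch ``only adds a direction parallel to the incoming state and so cannot cancel the perpendicular growth'' conflates direction in state space with the parallel/perpendicular decomposition of the tangent that the recursion actually tracks; in Raghu's argument cancellation is controlled through independence of the decomposed weight blocks and norm lower bounds in expectation, not by such a geometric exclusion.

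The CT-RNN half of your proposal is also internally inconsistent: if the identity branch were irrelevant to perpendicular growth (as your Neural-ODE step asserts), then damping it by $(1-\Delta t/\tau)$ could not reduce the numerator from $\sigma_w$ to $\sigma_w-\sigma_b$. In the paper the $(\sigma_w-\sigma_b)$ arises through a different mechanism: the derivative being lower-bounded contains the additive term $-W_{\tau}z^{(d+1)}$, with $W_{\tau}=1/\tau$ a strictly positive folded-normal variable whose mean scales with $\sigma_b$, and it is the subtraction of this term's expected contribution from the growth of $f(h^{(d)})$ that shrinks the base of the exponent. Finally, your closing PCA step (a ``concentration-of-variance'' argument that projection onto two principal components loses only a universal constant) is not something the paper proves or needs---it simply posits the same dynamics for the two-dimensional latent representation---and as stated it would require its own proof, since a projection is $1$-Lipschitz and only shortens arc length, which is the wrong direction for the lower bound you want.
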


The proof is provided in Appendix. It follows similar steps as \cite{raghu2017expressive} on the trajectory length bounds established for deep networks with piecewise linear activations, with careful considerations due to the continuous-time setup. The proof is constructed such that we formulate a recurrence between the norm of the hidden state gradient in layer $d+1$, $\norm{dz/dt^{(d+1)}}$, in principle components domain, and the expectation of the norm of the right-hand-side of the differential equations of neural ODEs and CT-RNNs. We then roll back the recurrence to reach the inputs. 

Note that to reduced the complexity of the problem, we only bounded the orthogonal components of the hidden state image $\norm{dz/dt^{(d+1)}_{\bot}}$, and therefore we have the assumption on input $I(t)$, in the Theorem's statement \cite{raghu2017expressive}. Next, we find a lower-bound for the LTC networks.

\begin{theorem}
\label{theorem:LTC}
 \emph{Growth Rate of LTC's Trajectory Length.} Let Eq. \ref{eq:ltc_1} determine an LTC with $\theta = \{W, b, \tau, A \}$. With the same conditions on $f$ and $I(t)$, as in Theorem \ref{theorem:neural_ode}, we have:
 
\begin{equation}
\label{eq:bound_ltc}
\begin{split}
\mathbb{E}\Bigg[ l(z^{(d)}(t))\Bigg] 
     \geq 
   O\Bigg(&\Big(\frac{\sigma_w\sqrt{k}}{\sqrt{ \sigma^{2}_{w} + \sigma^{2}_{b} + k \sqrt{\sigma^{2}_{w} + \sigma^{2}_{b}}}}\Big)^{d \times L} \times\\
   & \Big(\sigma_{w} + \frac{\norm{z^{(d)}}}{\min(\delta t, L)}\Big)\Bigg) l(I(t)).\end{split}
\end{equation}
\end{theorem}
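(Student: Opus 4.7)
The plan is to adapt the argument used for Theorem~\ref{theorem:neural_ode} to the LTC setting, tracking the additional contribution that the input-dependent time-constant term makes to the arc-length growth. I would fix a layer $d$ and project its hidden state onto its first two principal directions to obtain the latent image $z^{(d)}(t)$. Writing the LTC right-hand side from Eq.~\ref{eq:ltc_1} as a sum of a CT-RNN-like source $g\,A$ and a bilinear self-coupling $-[\tau^{-1}+g]\,x$, where $g := f_{n,k}(\textbf{x},\textbf{I},t,\theta)$, I would then differentiate with respect to the trajectory parameter $t$, pass to principal components, and isolate the orthogonal component $\norm{dz^{(d)}/dt_{\perp}}$, which by the hypothesis on $I(t)$ is what drives the arc-length integral.

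The recurrence machinery from Theorem~\ref{theorem:neural_ode} applies almost verbatim to the $g\,A$ piece, producing, after unrolling across $d$ layers and $L$ solver steps per layer, the base factor
\[
\left(\frac{\sigma_w\sqrt{k}}{\sqrt{\sigma_w^2+\sigma_b^2+k\sqrt{\sigma_w^2+\sigma_b^2}}}\right)^{d\times L}.
\]
The bilinear piece $-[\tau^{-1}+g]\,x$ is the essential novelty of LTCs and is absent in the other two models. After differentiation it contributes a $(\partial g/\partial z)\,x$ part and a $g\,(\partial x/\partial z)$ part. Using the same random-weight expectation calculation as in Raghu et al.\ for piecewise-linear $f$ with $W\sim\mathcal{N}(0,\sigma_w^2/k)$, the first part yields an additive term of order $\sigma_w$, while the second part, once evaluated at a Fused-solver step of size $\Delta t$ with $L$ unfolding iterations (Eq.~\ref{eq:ltc_fused_1}), contributes $\norm{z^{(d)}}/\min(\Delta t,L)$; the minimum appears because the implicit-explicit update saturates in per-step amplification at whichever of the stepsize or iteration count is the more restrictive. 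Packaging these two contributions gives the extra multiplicative factor $\bigl(\sigma_w+\norm{z^{(d)}}/\min(\Delta t,L)\bigr)$ appearing in Eq.~\ref{eq:bound_ltc}.

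With the two factors in hand, the bound follows by using arc-length invariance under the PCA rotation, $l(z^{(d)}(t))=\int_t\norm{dz^{(d)}/dt}\,dt\geq\int_t\norm{dz^{(d)}/dt_{\perp}}\,dt$, and by taking expectations, in exactly the same way the base theorem is closed out. The hard part will be the rigorous quantification of the bilinear contribution: the $-g\,x$ term couples the state at $t_i$ and $t_{i+1}$ implicitly through the Fused solver, and one has to show that neither the amplification blows up nor collapses to a trivial constant after unrolling the $L$ fused iterations, so that the $\sigma_w+\norm{z^{(d)}}/\min(\Delta t,L)$ correction is both an honest lower bound and independent of the base exponential factor. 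Everything else---the piecewise-linear $f$ expectation bounds, the projection onto principal components, and the unrolling across layers and solver steps---parallels Theorem~\ref{theorem:neural_ode} and the trajectory-length arguments of Raghu et al.
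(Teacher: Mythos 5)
Your overall skeleton---the layer-wise latent gradient, the Raghu-style parallel/orthogonal projection machinery, unrolling over $d\times L$, and closing with the arc-length inequality on the orthogonal component---matches the paper's proof. The gap is in the step that actually produces the new factor $\big(\sigma_w+\norm{z^{(d)}}/\min(\delta t,L)\big)$, which is the whole content of this theorem beyond Theorem \ref{theorem:neural_ode}. In the paper no differentiation of the right-hand side is performed: the arc-length integrand is simply $\norm{dz^{(d+1)}/dt}$, i.e.\ the norm of the LTC right-hand side itself. That norm is split by a reverse-triangle-type inequality into the two pieces $\mathbb{E}\norm{A^{(d)}f(h^{(d)})}$ and $\mathbb{E}\norm{(w^{(d+1)}_{\tau}+f(h^{(d)}))\,z^{(d+1)}}$, and each piece is lower-bounded with the same projection lemmas as before. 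The additive $\sigma_w$ does \emph{not} come from the bilinear term: it comes from the $A f(h)$ piece, because $A$ is itself a random weight of scale $\sigma_w$, so $\mathbb{E}\norm{Af(h)}$ carries an extra factor of order $\sigma_w$ on top of the base exponential factor. The $\norm{z^{(d+1)}}$ contribution comes from the state-coupled piece, in which the time-constant weights enter only as a small shift $w_{\tau}/|\mathcal{A}|$ inside the projection bound (handled via the folded-normal law of $w_\tau$), and the division by $\min(\delta t,L)$ appears when the per-step recurrence is converted to the unrolled arc-length form.

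Your proposed mechanism---applying the product rule to $-[\tau^{-1}+g]\,x$ so that a $(\partial g/\partial z)x$ part yields $\sigma_w$ and a $g(\partial x/\partial z)$ part yields $\norm{z}/\min(\Delta t,L)$---does not go through as stated: the quantity to be bounded contains no such derivatives (it is the first derivative of the state, already given by Eq.\ \ref{eq:ltc_1}), so the product rule introduces second-order objects that appear nowhere in Eq.\ \ref{eq:bound_ltc}, and you give no computation turning the Fused-solver update of Eq.\ \ref{eq:ltc_fused_1} into the claimed saturation at $\min(\Delta t,L)$. Compounding this, your claim that the $gA$ piece contributes only the base factor drops exactly the $|A|\sim\sigma_w$ multiplication that supplies the $\sigma_w$ term, so the assembled bound would not match the theorem. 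You correctly identify the quantification of the bilinear/state-coupled contribution as the hard part, but that is precisely the missing argument rather than a detail to be filled in later; the paper resolves it by the norm-splitting and separate projection bounds described above, not by differentiating the gate.
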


\begin{table*}[t!]
\footnotesize
 \centering
 \caption{\textbf{Time series prediction} Mean and standard deviation, n=5}
 \begin{tabular}{l c |c|c|c|c|c}
 \toprule
Dataset & Metric  & LSTM   & CT-RNN & Neural ODE  & CT-GRU & LTC (ours) \\\hline
  Gesture &(accuracy) & 64.57\% $\pm$ 0.59 & 59.01\% $\pm$ 1.22 & 46.97\% $\pm$ 3.03 & 68.31\% $\pm$ 1.78 & \textbf{69.55\% $\pm$ 1.13}  \\
  Occupancy & (accuracy) & 93.18\% $\pm$ 1.66 & 94.54\% $\pm$ 0.54  & 90.15\% $\pm$ 1.71 & 91.44\% $\pm$ 1.67 & \textbf{94.63\% $\pm$ 0.17} \\
  Activity recognition & (accuracy) & 95.85\% $\pm$ 0.29 & 95.73\% $\pm$ 0.47 & \textbf{97.26}\% $\pm$ 0.10 & 96.16\% $\pm$ 0.39& 95.67\% $\pm$ 0.575   \\
  Sequential MNIST & (accuracy)  & \textbf{98.41}\% $\pm$ 0.12 & 96.73\% $\pm$ 0.19 & 97.61\% $\pm$ 0.14 & 98.27\% $\pm$ 0.14 & 97.57\% $\pm$ 0.18\\
  Traffic  & (squared error) & 0.169 $\pm$ 0.004 & 0.224 $\pm$ 0.008  & 1.512 $\pm$ 0.179 & 0.389 $\pm$ 0.076 & \textbf{0.099} $\pm$ 0.0095 \\
  Power & (squared-error)  & 0.628 $\pm$ 0.003 & 0.742 $\pm$ 0.005 & 1.254 $\pm$ 0.149 & \textbf{0.586} $\pm$ 0.003 & 0.642 $\pm$ 0.021\\
  Ozone & (F1-score) & 0.284 $\pm$ 0.025 & 0.236 $\pm$ 0.011 & 0.168 $\pm$ 0.006  & 0.260 $\pm$ 0.024& \textbf{0.302} $\pm$ 0.0155  \\
 \bottomrule
 \end{tabular}
 \label{tab:res_32}
\end{table*}

\begin{table}[t]
    \centering
    \caption{Person activity, 1st setting - n=5}
    \vspace{-3mm}
    \label{tab:per-time-point_classification}
    \begin{tabular}{lc}
    \toprule
         \textbf{Algorithm} & \textbf{Accuracy} \\
         \midrule
        LSTM & 83.59$\% \pm$ 0.40 \\
        CT-RNN & 81.54$\% \pm$ 0.33 \\
        Latent ODE & 76.48$\% \pm$ 0.56 \\
        CT-GRU & 85.27$\% \pm$ 0.39 \\
        LTC (ours) & \textbf{85.48$\% \pm$ 0.40}\\
      \bottomrule
    \end{tabular}
    \vspace{-5mm}
\end{table} 

\begin{table}[t] 
    \centering
    \caption{Person activity, 2nd setting}
    \label{tab:per-time-point_classification_2}
    \begin{tabular}{lc}
    \toprule
         \textbf{Algorithm} & \textbf{Accuracy} \\
         \midrule
        RNN $\Delta_t$ $^{*}$  & 0.797$\pm$ 0.003 \\
        RNN-Decay$^{*}$  & 0.800$\pm$ 0.010 \\
        RNN GRU-D$^{*}$  & 0.806$\pm$ 0.007 \\
        RNN-VAE$^{*}$  & 0.343$\pm$ 0.040 \\
        Latent ODE (D enc.)$^{*}$ & 0.835$\pm$ 0.010 \\
        ODE-RNN $^{*}$ & 0.829 $\pm$ 0.016 \\
        Latent ODE(C enc.)${^*}$ & 0.846 $\pm$ 0.013 \\
        LTC (ours) & \textbf{0.882 $\pm$ 0.005}\\
      \bottomrule
    \end{tabular}
    \caption*{\small \textbf{Note:} Accuracy for algorithms indicated by $*$, are taken directly from \cite{rubanova2019latent}. RNN $\Delta_t$ = classic RNN + input delays \cite{rubanova2019latent}. RNN-Decay =  RNN with exponential decay on the hidden
states \protect\cite{mozer2017discrete}. GRU-D = gated recurrent unit + exponential decay + input imputation \protect\cite{che2018recurrent}. D-enc. = RNN encoder \cite{rubanova2019latent}. C-enc = ODE encoder \cite{rubanova2019latent}. n=5}
\end{table}

The proof is provided in Appendix. A rough outline: we construct the recurrence between the norm of the hidden state gradients and the components of the right-hand-side of LTC separately which progressively build up the bound.

\subsection{Discussion of the theoretical bounds}
\textbf{I)} As expected, the bound for the Neural ODEs is very similar to that of an $n$ layer static deep network with the exception of the exponential dependencies to the number of solver-steps, $L$. \textbf{II)} The bound for CT-RNNs suggests their shorter trajectory length compared to neural ODEs, according to the base of the exponent. This results consistently matches our experiments presented in Figs. \ref{fig:traj_glimpse} and \ref{fig:traj_results}. \textbf{III)} Fig. \ref{fig:traj_glimpse}B and Fig. \ref{fig:traj_results}C show a faster-than-linear growth for LTC's trajectory length as a function of weight distribution variance. This is confirmed by LTC's lower bound shown in Eq. \ref{eq:bound_ltc}. \textbf{IV)} LTC's lower bound also depicts the linear growth of the trajectory length with the width, $k$, which validates the results presented in \ref{fig:traj_results}B. \textbf{V)} Given the computational depth of the models $L$ in Table \ref{tab:depth} for \url{Hard-tanh} activations, the computed lower bound for neural ODEs, CT-RNNs and LTCs justify a longer trajectory length of LTC networks in the experiments of Section 5. Next, we assess the expressive power of LTCs in a set of real-life time-series prediction tasks.

\section{Experimental Evaluation}
\textbf{6.1~~Time series predictions.} We evaluated the performance of LTCs realized by the proposed Fused ODE solver against the state-of-the-art discretized RNNs, LSTMs \cite{hochreiter1997long}, CT-RNNs (ODE-RNNs) \cite{funahashi1993approximation,rubanova2019latent}, continuous-time gated recurrent units (CT-GRUs) \cite{mozer2017discrete}, and Neural ODEs constructed by a $4^{th}$ order Runge-Kutta solver as suggested in \cite{chen2018neural}, in a series of diverse real-life supervised learning tasks. The results are summarized in Table \ref{tab:res_32}. The experimental setup are provided in Appendix. We observed between $5\%$ to $70\%$ performance improvement achieved by the LTCs compared to other RNN models in four out of seven experiments and comparable performance in the other three (see Table \ref{tab:res_32}). 

\noindent\textbf{6.2~~Person activity dataset.} We use the "Human Activity" dataset described in \cite{rubanova2019latent} in two distinct frameworks. The dataset consists of 6554 sequences of activity of humans (e.g. lying, walking, sitting), with a period of 211 ms. we designed two experimental frameworks to evaluate models' performance. In the \textit{1st Setting}, the baselines are the models described before, and the input representations are unchanged (details in Appendix). LTCs outperform all models and in particular CT-RNNs and neural ODEs with a large margin as shown in Table \ref{tab:per-time-point_classification}. Note that the CT-RNN architecture is equivalent to the ODE-RNN described in \cite{rubanova2019latent}, with the difference of having a state damping factor $\tau$.

In the \textit{2nd Setting}, we carefully set up the experiment to match the modifications made by \cite{rubanova2019latent} (See supplements), to obtain a fair comparison between LTCs and a more diverse set of RNN variants discussed in \cite{rubanova2019latent}. LTCs show superior performance with a high margin compared to other models. The results are summarized in Table \ref{tab:per-time-point_classification_2}).

\noindent\textbf{6.3~~Half-Cheetah kinematic modeling.} We intended to evaluate how well continuous-time models can capture physical dynamics. To perform this, we collected 25 rollouts of a pre-trained controller for the HalfCheetah-v2 gym environment \cite{brockman2016openai}, generated by the MuJoCo physics engine \cite{todorov2012mujoco}. The task is then to fit the observation space time-series in an autoregressive fashion (Fig. \ref{fig:half_cheetah}). To increase the difficulty, we overwrite $5\%$ of the actions by random actions. 
The test results are presented in Table \ref{tab:per-sequence}, and root for the superiority of the performance of LTCs compared to other models. 

\begin{figure}[t]
    \centering
    \includegraphics[width=0.40\textwidth]{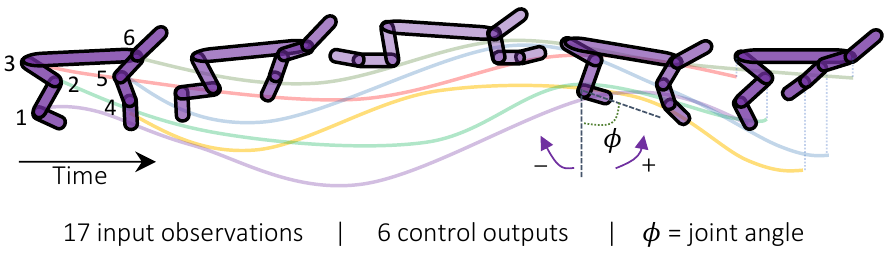}
    \caption{Half-cheetah physics simulation}
    \label{fig:half_cheetah}
\end{figure}

 \begin{table}[t]
    \centering
    \caption{Sequence modeling. Half-Cheetah dynamics n=5}
    \begin{tabular}{lc}
    \toprule
         \textbf{Algorithm} & \textbf{MSE} \\
    \hline
        LSTM & 2.500$ \pm$ 0.140  \\
        CT-RNN & 2.838$ \pm$ 0.112 \\
        Neural ODE & 3.805 $ \pm$ 0.313 \\
        CT-GRU & 3.014$ \pm$ 0.134\\
        LTC (ours) & \textbf{2.308$ \pm$ 0.015}\\
    \bottomrule
    \end{tabular}
    \label{tab:per-sequence}
\end{table}

\section{Related Works}
\noindent\textbf{Time-continuous models.} TC networks have become unprecedentedly popular. This is due to the manifestation of several benefits such as adaptive computations, better continuous time-series modeling, memory, and parameter efficiency \cite{chen2018neural}. A large number of alternative approaches have tried to improve and stabilize the adjoint method \cite{gholami2019anode}, use neural ODEs in specific contexts \cite{rubanova2019latent,lechner2019designing} and to characterize them better \cite{dupont2019augmented,durkan2019neural,jia2019neural,hanshu2020robustness,holl2020learning,quaglino2020snode}. In this work, we investigated the expressive power of neural ODEs and proposed a new ODE model to improve their expressivity and performance.

\noindent\textbf{Measures of expressivity.} A large body of modern works tried to find answers to the questions such as why deeper networks and particular architectures perform well, and where is the boundary between the approximation capability of shallow networks and deep networks? In this context, \cite{montufar2014number} and \cite{pascanu2013number} suggested to count the number of linear regions of neural networks as a measure of expressivity, \cite{eldan2016power} showed that there exists a class of radial functions that smaller networks fail to produce, and \cite{poole2016exponential} studied the exponential expressivity of neural networks by transient chaos. 

These methods are compelling; however, they are bound to particular weight configurations of a given network in order to lower-bound expressivity similar to \cite{serra2017bounding,gabrie2018entropy,hanin2018start,hanin2019complexity,lee2019towards}. \cite{raghu2017expressive} introduced an interrelated concept which quantifies the expressiveness of a given static network by trajectory length. We extended their expressivity analysis to time-continuous networks and provided lower-bound for the growth of the trajectory length, proclaiming the superior approximation capabilities of LTCs.

\section{Conclusions, Scope and Limitations}
We investigated the use of a novel class of time-continuous neural network models obtained by a combination of linear ODE neurons and special nonlinear weight configurations. We showed that they could be implemented effectively by arbitrary variable and fixed step ODE solvers, and be trained by backpropagation through time. We demonstrated their bounded and stable dynamics, superior expressivity, and superseding performance in supervised learning time-series prediction tasks, compared to standard and modern deep learning models.

\noindent\textbf{Long-term dependencies.} Similar to many variants of time-continuous models, LTCs express the vanishing gradient phenomenon \cite{pascanu2013difficulty,lechner2020learning}, when trained by gradient descent. Although the model shows promise on a variety of time-series prediction tasks, they would not be the obvious choice for learning long-term dependencies in their current format. 

\noindent\textbf{Choice of ODE solver.} Performance of time-continuous models is heavily tided to their numerical implementation approach \cite{hasani2020interpretable}. While LTCs perform well with advanced variable-step solvers and the Fused fixed-step solver introduced here, their performance is majorly influenced when off-the-shelf explicit Euler methods are used. 

\noindent\textbf{Time and Memory.} Neural ODEs are remarkably fast compared to more sophisticated models such as LTCs. Nonetheless, they lack expressivity. Our proposed model, in their current format, significantly enhances the expressive power of TC models at the expense of elevated time and memory complexity which must be investigated in the future.


\noindent\textbf{Causality.} 
Models described by time-continuous differential equation semantics inherently possess causal structures \cite{scholkopf2019causality}, especially models that are equipped with recurrent mechanisms to map past experiences to next-step predictions. Studying causality of performant recurrent models such as LTCs would be an exciting future research direction to take, as their semantics resemble \emph{dynamic causal models} \cite{friston2003dynamic} with a \emph{bilinear dynamical system} approximation \cite{penny2005bilinear}. Accordingly, a natural application domain would be the control of robots in continuous-time observation and action spaces where causal structures such as LTCs can help improve reasoning \cite{lechner2020neural}.

\section*{Acknowledgments}
R.H. and D.R. are partially supported by Boeing. R.H. and R.G. were partially supported by the Horizon-2020 ECSEL Project grant No. 783163 (iDev40). M.L. was supported in part by the Austrian Science Fund (FWF) under grant Z211-N23 (Wittgenstein Award). A.A. is supported by the National Science Foundation (NSF) Graduate Research Fellowship Program. This research work is partially drawn from the PhD dissertation of R.H.


\clearpage

\beginsupplement
\onecolumn

\noindent \text{\huge \textbf{Supplementary Materials}}
\vspace{+10mm}

\section{Proof of Theorem 1}
\begin{proof}
Assuming the neural network $f$ in Eq. \ref{eq:ltc_1}, possesses a bounded sigmoidal nonlinearity which is a monotonically increasing between $0$ and $1$. Then for each neuron $i$, we have:
 \begin{equation}
    0 < f(x_j(t), \gamma _{ij}, \mu _{ij}) < 1
 \end{equation} 
 
By replacing the upper-bound of $f$ in Eq. \ref{eq:ltc_1}, and assuming a scaling weight matrix $W_i^{M \times 1}$, for each neuron $i$ in $f$, we get: 
\begin{equation}
\label{eq:neuron1}
  \frac{d x_i}{dt} = - \Big[\frac{1}{\tau_i} + W_i \Big] x_i(t) +  W_i A_i.
\end{equation} 

The Equation simplifies to a linear ODE, of the form:
\begin{align}
\frac{dx_i}{dt} = -\underbrace{\Big[\frac{1}{\tau_i} + W_i \Big] }_\text{a} x_i
 - \underbrace{W_i A_i}_\text{b},~~\rightarrow~~\frac{dx_i}{dt} = -a x_i + b,
\label{eq:neuron2}
\end{align} 

with a solution of the form: 
\begin{equation}
\label{eq:neuron33}
 x_i(t) = k_1 e^{-at} + \frac{b}{a}.
\end{equation}
From this solution, we derive the lower bound of the system's time constant, $\tau_{sys _i}^{min}$: 
\begin{equation}
\tau_{sys _i}^{min} = \frac{1}{a} = \frac{1}{1~+~\tau_i W_i}. 
\end{equation}

By replacing the lower-bound of $f$ in Eq. \ref{eq:ltc_1}, the equation simplifies to an autonomous linear ODE as follows:

\begin{equation}
 \frac{d x_i}{dt} = - \frac{1}{\tau_i} x_i(t).
\label{eq:neuron4}
\end{equation}

which gives us the upper-bound of the system's time-constant, $\tau _{sys_i}^{max}$:

\begin{equation}
\tau _{sys_i}^{max} = \tau_i
\end{equation}
\end{proof}

\section{Proof of Theorem \ref{lemma_neuralstate}}
\begin{proof}
Let us insert $ M = max\{0, A_{i}^{max}\}$ as the neural state of neuron $i$, $x_i(t)$ into Equation \ref{eq:ltc_1}:
\begin{equation}
 \frac{d x_i}{dt} = - \Big[\frac{1}{\tau} + f(\textbf{x}_j(t), t, \theta)\Big] M + f(\textbf{x}_j(t), t, \theta) A_i.
\end{equation}
Now by expanding the brackets, we get

\begin{equation}
\label{eq:neuronmax}
 \frac{d x_i}{dt} = \underbrace{-\frac{1}{\tau}M}_\text{$\leq 0$} + \underbrace{- f(\textbf{x}_j(t), t, \theta)M + f(\textbf{x}_j(t), t, \theta) A_i}_\text{$\leq 0$}.
\end{equation}

The right-hand side of Eq. \ref{eq:neuronmax}, is negative based on the conditions on $M$, positive weights, and the fact that $f(x_j)$ is also positive, Therefore, the left-hand-side must also be negative and if we perform an approximation on the derivative term, the following holds:
\begin{equation}
 \frac{dx_i}{dt} \leq 0,~~~ 
 \frac{dx_i}{dt} \approx \frac{x_i(t+\Delta t) - x_i(t)}{\Delta t} \leq 0,
\end{equation}

By substituting $x_i(t)$ with $M$, we get: 
\begin{equation}
 \frac{x(t+\Delta t) - M}{\Delta t} \leq 0~\rightarrow~x(t+\Delta t) \leq M
\end{equation}

and therefore:
\begin{equation}
 x_i(t) \leq {max}(0, A_{i}^{max}).
\end{equation}

Now if we replace $x_{(i)}$ by $ m = min\{0, A_{i}^{min}\}$, and follow a similar methodology used for the upper bound, we can derive:
\begin{equation}
 \frac{x(t+\Delta t) - m}{\Delta t} \leq 0~\rightarrow~x(t+\Delta t) \leq m,
\end{equation}

and therefore:

\begin{equation}
 x_i(t) \geq {min}(0, A_{i}^{min}).
\end{equation}  
\end{proof}

\section{Proof of Theorem 3}
We prove that any given $n$-dimensional dynamical system for a finite simulation time can be approximated by the internal and output states of an LTC, with $n$-outputs, some hidden nodes, and a proper initial condition. We base our proof on the fundamental universal approximation theorem \cite{hornik1989multilayer} on feedforward neural networks \cite{funahashi1989approximate,cybenko1989approximation,hornik1989multilayer}, recurrent neural networks (RNN) \cite{funahashi1989approximate,schafer2006recurrent} and continuous-time RNNs \cite{funahashi1993approximation}. The fundamental difference of the proof of the universal approximation capability of LTCs compared to that of CT-RNNs lies in the distinction of the semantics of both ODE systems. LTC networks contain a nonlinear input-dependent term in their time-constant module, represented in Eq. \ref{eq:ltc_1}, which alters the entire dynamical system from that of CT-RNNs. Therefore, careful considerations have to be adjusted while taking the same approach to that of CT-RNNs for proving their universality. We first revisit preliminary statements that are used in the proof and are about basic topics on dynamical systems. 

THEOREM (The fundamental approximation theorem) \cite{funahashi1989approximate}. Let $\textbf{x}=~(x_1,...,x_n)$ be an $n$-dimensional Euclidean space $\mathbb{R}^n$. \textit{Let $f(x)$ be a sigmoidal function (a non-constant, monotonically increasing and bounded continous function in $\mathbb{R}$). Let $K$ be a compact subset of $\mathbb{R}^n$, and $f(x_1,...,x_n)$ be a continuous function on K. Then, for an arbitrary $\epsilon > 0$, there exist an integer $N$, real constants $c_i$, $\theta_i (i = 1,..., N)$ and $w_{ij}(i = 1, ..., N; j = 1, ..., n)$, such that
\begin{equation}
 \underset{x \in K}{max} |g(x_1,..., x_n) - \sum^{N}_{i=1} c_i f(\sum^{n}_{j=1} w_{ij} x_j - \theta_i)| < \epsilon
\end{equation}
holds.} 

This theorem illustrates that three-layer feedforward neural networks (Input-hidden layer-output), can approximate any continuous mapping $g: \mathbb{R}^n \rightarrow \mathbb{R}^m$ on a compact set.

THEOREM (Approximation of dynamical systems by continuous time recurrent neural networks) \cite{funahashi1993approximation}.  \textit{Let $D\subset\mathbb{R}^n$ and $F:D\rightarrow\mathbb{R}^n$ be an autonomous ordinary differential equation and $C^1$-mapping, and let $\dot{\textbf{x}} = F(\textbf{x})$ determine a dynamical system on $D$. Let $K$ denote a compact subset of $D$ and we consider the trajectories of the system on the interval $I = [0,~T]$. Then, for an arbitrary positive $\epsilon$, there exist an integer $N$ and a recurrent neural network with $N$ hidden units, $n$ output units, and an output internal state $\textbf{u}(t) = (U_1(t), ..., U_n(t))$, expressed as:
\begin{equation}
 \frac{du_i(t)}{dt} = - \frac{u_i(t)}{\tau _i} + \sum^{m}_{j=1} w_{ij} f(u_j(t)) + I_i(t),
\end{equation}
where $\tau_i$ is the time-constant, $w_{ij}$ are the weights, $I_i(t)$ is the input, and $f$ is a $C^1$-sigmoid function ($f(x) = 1/(1 + exp(-x))$, such that for any trajectory $\{x(t); t \in I\}$ of the system with initial value $x(0) \in K$, and a proper initial condition of the network the statement below holds:
 \begin{center}
  $\underset{t \in I}{max} |\textbf{x}(t) - \textbf{u}(t)|< \epsilon$.
 \end{center}}
  
The theorem was proved for the case where the time-constants, $\tau$, were kept constant for all hidden states, and the RNN was without inputs ($I_i(t) = 0$) \cite{funahashi1993approximation}.

We now restate the necessary concepts from dynamical systems to be used in the proof. Where necessary, we adopt modifications and extensions to the Lemmas, for proving Theorem 1. 

\textbf{Lipschitz.} The mapping $F: S\rightarrow\mathbb{R}^n$, where S is an open subset of $\mathbb{R}^n$, is called Lipschitz on $S$ if there exist a constant $L$ (Lipschitz constant), such that: 

\begin{equation}
 |F(x) - F(y)| \leq L |x-y|,~~~ \forall x, y \in S.
\end{equation}

\textbf{Locally Lipschitz.} If every point of $S$ has neighborhood $S_0$ in $S$, such that the restriction $F~|~S_0$ is Lipschitz, then $F$ is locally Lipschitz.

\begin{lemma}
\label{lem1}
 Let a mapping $F: S\rightarrow\mathbb{R}^n$ be $C^1$. Then F is locally Lipschitz. Also, if $D \subset S$ is compact, then the restriction $F~|~D$ is Lipschitz. (Proof in \cite{morris1973differential}, chapter 8, section 3).
\end{lemma}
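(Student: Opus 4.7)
The plan is to prove the two claims in sequence, using the Mean Value Inequality for vector-valued $C^1$ maps as the main tool in both parts.

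For the local Lipschitz claim, I would fix an arbitrary $p \in S$ and exploit the openness of $S$ to pick a radius $r>0$ such that the closed ball $\bar{B}(p,r)$ is contained in $S$. Since $F$ is $C^1$, the Jacobian $DF$ is continuous on $S$ and hence bounded on the compact set $\bar{B}(p,r)$ by some constant $M_p$. For any $x,y$ in the convex open ball $B(p,r)$, the line segment $[x,y]$ lies inside $B(p,r)$, so the Mean Value Inequality (applied coordinate-wise to $t \mapsto F(x+t(y-x))$ via the fundamental theorem of calculus) yields $|F(x)-F(y)| \leq M_p |x-y|$. This gives a Lipschitz neighborhood around every point of $S$.

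For the compact case, the key step is to build a compact ``tube'' around $D$ inside $S$. Since $D$ is compact and $S^c$ is closed and disjoint from $D$, the distance $\delta := d(D, S^c)$ is strictly positive (interpreted as $+\infty$ if $S=\mathbb{R}^n$). I would fix $\epsilon \in (0,\delta/2)$ and set $D_\epsilon := \{z: d(z,D) \leq \epsilon\}$, which is compact and contained in $S$. By continuity, $\|DF\|$ is bounded on $D_\epsilon$ by some constant $M$, and $|F|$ is bounded on the compact $D$ by some $N$. Given $x,y \in D$, I split into two cases: if $|x-y|\leq \epsilon$, the segment $[x,y]$ stays inside $D_\epsilon$ and the Mean Value Inequality gives $|F(x)-F(y)| \leq M|x-y|$; if $|x-y|>\epsilon$, the crude bound $|F(x)-F(y)| \leq 2N \leq (2N/\epsilon)|x-y|$ applies. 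Taking $L := \max\{M, 2N/\epsilon\}$ yields a global Lipschitz constant on $D$.

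The main obstacle is the ``long-distance'' case in the compact part: a naive cover-by-balls argument combined with Lebesgue's number lemma only controls pairs of nearby points, and one must remember to use boundedness of $F$ on $D$ (which is what makes compactness, rather than just local compactness, essential) in order to stitch the local estimate into a single Lipschitz bound. Everything else reduces to a routine application of the Mean Value Inequality to a vector-valued $C^1$ map over a line segment, so no delicate analytic input beyond continuity of $DF$ and compactness is needed.
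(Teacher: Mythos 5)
Your proposal is correct, and it is essentially the standard argument: the paper gives no proof of this lemma, deferring instead to Hirsch and Smale (Chapter 8, Section 3), whose proof likewise bounds the derivative on a compact neighborhood and applies the mean value inequality in integral form. Your handling of the compact case—working in a compact tube $D_\epsilon\subset S$, using the mean value inequality for nearby points and the bound $|F(x)-F(y)|\le 2N\le (2N/\epsilon)|x-y|$ for far-apart points—correctly supplies the stitching step that makes the local estimates into a single Lipschitz constant, so there are no gaps.
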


\begin{lemma}
\label{lem2}
 Let $F: S \rightarrow \mathbb{R}^n$ be a $C^1$-mapping and $x_0 \in S$. There exists a positive $a$ and a unique solution $x: (-a, a) \rightarrow S$ of the differential equation
 \begin{equation}
  \dot x = F(x),
 \end{equation}
 which satisfies the initial condition $x(0) = x_0$. (Proof in \cite{morris1973differential}, chapter 8, section 2, Theorem 1.)
\end{lemma}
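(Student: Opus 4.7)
The plan is to prove this classical local existence-and-uniqueness result (Picard--Lindel\"of) by recasting the initial value problem as a fixed point problem for an integral operator and invoking the Banach fixed point theorem, leveraging Lemma~\ref{lem1} to supply the required Lipschitz estimate.

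First I would reformulate the ODE: a continuous $x:(-a,a)\to S$ solves $\dot x = F(x)$ with $x(0)=x_0$ if and only if it satisfies the Volterra integral equation
\begin{equation*}
x(t) = x_0 + \int_0^t F(x(s))\,ds.
\end{equation*}
Since $F$ is $C^1$, Lemma~\ref{lem1} applied on a closed ball $\overline{B}_r(x_0)\subset S$ (which exists because $S$ is open) shows that the restriction $F|\overline{B}_r(x_0)$ is Lipschitz with some constant $L$, and by continuity $F$ is bounded on this ball by some $M>0$. I would then pick $a>0$ small enough that both $aM\le r$ and $aL<1$ hold simultaneously; this joint choice is the first delicate step, because the self-mapping condition and the contraction condition compete for smallness of $a$.

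Next I would set up the functional-analytic framework. Consider the space $X$ of continuous maps $y:[-a,a]\to \overline{B}_r(x_0)$ with the sup norm $\|y\|_\infty = \sup_{t\in[-a,a]} |y(t)-x_0|$; this is a closed subset of the Banach space $C([-a,a],\mathbb{R}^n)$ and hence complete. Define the Picard operator
\begin{equation*}
(Ty)(t) = x_0 + \int_0^t F(y(s))\,ds.
\end{equation*}
Using $aM\le r$ I would verify that $Ty$ takes values in $\overline{B}_r(x_0)$, so $T:X\to X$. Using the Lipschitz bound on $F$, for $y_1,y_2\in X$ I would estimate
\begin{equation*}
|(Ty_1)(t)-(Ty_2)(t)| \le \int_0^{|t|} L\,|y_1(s)-y_2(s)|\,ds \le aL\,\|y_1-y_2\|_\infty,
\end{equation*}
so $T$ is a contraction with factor $aL<1$.

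Finally I would apply the Banach fixed point theorem to obtain a unique fixed point $x\in X$, which by construction satisfies the integral equation. Because $F\circ x$ is continuous, the fundamental theorem of calculus upgrades $x$ from continuous to $C^1$ and shows $\dot x = F(x)$ with $x(0)=x_0$ on $(-a,a)$. Uniqueness on the open interval then follows from uniqueness of the fixed point together with a standard patching argument (any two solutions agree on a maximal subinterval, and if they were to disagree one could restart the Picard argument at a boundary point to contradict maximality). The main obstacle I anticipate is the balancing act in choosing $a$: one must shrink the interval enough to enforce both invariance of the ball and the contraction, while still obtaining a genuinely positive $a$, which is precisely why the result is only local rather than global.
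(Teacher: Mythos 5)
Your proposal is correct: the paper does not prove this lemma itself but defers to Hirsch and Smale (Chapter 8, Section 2, Theorem 1), whose argument is the classical Picard--Lindel\"of proof by successive approximations, which is exactly the Banach-fixed-point argument you give in abstract packaging. Your handling of the two competing smallness conditions on $a$ and the patching argument for uniqueness on the open interval are both standard and sound, so this is essentially the same approach as the cited source.
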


\begin{lemma}
\label{lem3}
 Let $S$ be an open subset of $\mathbb{R}^n$ and $F: S \rightarrow \mathbb{R}^n$ be a $C^1$-mapping. On a maximal interval $J = (\alpha, \beta) \subset \mathbb{R}$, let x(t) be a solution. Then for any compact subset $D \subset S$, there exists some $t \in (\alpha, \beta)$, for which $x(t) \notin D$. (Proof in \cite{morris1973differential}, Chapter 8, section 5, Theorem).
\end{lemma}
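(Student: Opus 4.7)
The plan is to prove this classical \emph{escape-from-compacts} property by contradiction, using the local existence result (Lemma 2) together with the fact that $F$ is continuous (hence bounded) on any compact subset of $S$. Under the natural reading of the statement — the non-trivial content concerns the behaviour of the trajectory near a finite endpoint of the maximal interval — I would argue without loss of generality that $\beta<\infty$ and suppose toward contradiction that $x(t)\in D$ for every $t\in(\alpha,\beta)$, then derive a contradiction with the maximality of $J$.

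First I would exploit compactness of $D$: since $F$ is $C^1$ on $S$ and $D\subset S$ is compact, $F$ is bounded on $D$, say by some $M>0$. The assumed confinement $x(t)\in D$ then yields the uniform estimate $|\dot x(t)|=|F(x(t))|\le M$ on $(\alpha,\beta)$, so $x$ is $M$-Lipschitz on its maximal interval. Combined with $\beta<\infty$, Lipschitz continuity implies $x$ is uniformly continuous on $(\alpha,\beta)$ and therefore Cauchy as $t\to\beta^-$; consequently the one-sided limit $x_\beta:=\lim_{t\to\beta^-}x(t)$ exists in $\mathbb{R}^n$.

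Next I would relocate $x_\beta$ inside the state space. Because $x(t)\in D$ for all $t$ and $D$ is closed, the limit satisfies $x_\beta\in D\subset S$. Invoking Lemma 2 at $x_\beta$ produces a local $C^1$-solution $y:(\beta-a,\beta+a)\to S$ of $\dot y=F(y)$ with $y(\beta)=x_\beta$. A gluing argument — defining $\tilde x=x$ on $(\alpha,\beta)$, $\tilde x(\beta)=x_\beta$, and $\tilde x=y$ on $[\beta,\beta+a)$ — then yields a $C^1$-solution on the strictly larger interval $(\alpha,\beta+a)$. Uniqueness on a left-neighbourhood of $\beta$, inherited from the local Lipschitz property of $F$ guaranteed by Lemma 1, is used to match the two branches smoothly at $\beta$. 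This strict extension contradicts the maximality of $J$, completing the proof.

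The main obstacle is ensuring that the candidate limit $x_\beta$ actually lies in the open set $S$ and not merely on its boundary: a limit in $\partial S$ would place $F$ outside its domain of definition and block the local-extension step. This is precisely where the hypothesis $D\subset S$ (compact, strictly inside) is indispensable — closedness of $D$ forces $x_\beta\in D$, and $D\subset S$ then guarantees $x_\beta\in S$. A secondary subtlety is the gluing: one must appeal to uniqueness to conclude that $x$ and $y$ agree on the overlap where both are defined, so that $\tilde x$ is a genuine differentiable solution rather than a mere continuous concatenation.
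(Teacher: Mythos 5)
Your argument is correct and coincides with the standard proof in the cited reference (Hirsch and Smale, Ch.~8, \S 5), which the paper invokes without reproducing: confinement in $D$ bounds $F$ along the trajectory, forces the existence of a limit $x_\beta\in D\subset S$ at a finite endpoint, and local existence plus uniqueness then extend the solution past $\beta$, contradicting maximality of $J$. Your restriction to $\beta<\infty$ correctly supplies the finiteness hypothesis that the paper's restatement leaves implicit (as written, a constant solution on $J=\mathbb{R}$ would be a counterexample), so the proposal matches the intended statement and its standard proof.
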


\begin{lemma}
\label{lem4}
 For an $F: \mathbb{R}^n \rightarrow \mathbb{R}^n$ which is a bound $C^1$-mapping, the differential equation
 \begin{equation}
  \dot x = -\frac{x}{\tau} + F(x), 
 \end{equation}
  where $\tau > 0 $ has a unique solution on $[0, \infty)$. (Proof in \cite{funahashi1993approximation}, Section 4, Lemma 4).
\end{lemma}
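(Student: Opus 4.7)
The plan is to combine the local existence and uniqueness guaranteed by Lemma \ref{lem2} with a global a priori bound derived from the boundedness of $F$, and then invoke the escape-from-compacta principle of Lemma \ref{lem3} to rule out finite-time blow-up. Define $G(x) = -x/\tau + F(x)$; since $F$ is $C^1$ on $\mathbb{R}^n$ and the linear term is smooth, $G$ is $C^1$ on $\mathbb{R}^n$. Lemma \ref{lem2} then yields, for any initial condition $x(0) = x_0$, a unique solution on some maximal open interval $(\alpha, \beta) \ni 0$. The goal is to show $\beta = \infty$; uniqueness on the entire half-line $[0, \infty)$ will then follow from the local uniqueness of Lemma \ref{lem2} by a standard continuation argument (the set of $t \geq 0$ on which two solutions with the same initial value coincide is nonempty, closed, and open in $[0, \infty)$, hence is all of $[0, \infty)$).

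The key step is an explicit a priori bound. Treating $F(x(t))$ as a time-dependent forcing of the linear equation $\dot{x} + x/\tau = F(x(t))$, variation of parameters gives
\[
x(t) = e^{-t/\tau} x_0 + \int_0^t e^{-(t-s)/\tau}\, F(x(s))\, ds
\]
for every $t \in [0, \beta)$. Since $F$ is bounded, say $|F(y)| \leq M$ for all $y \in \mathbb{R}^n$, this yields
\[
|x(t)| \leq |x_0|\, e^{-t/\tau} + M\tau \bigl(1 - e^{-t/\tau}\bigr) \leq |x_0| + M\tau,
\]
so the forward trajectory $\{x(t) : t \in [0, \beta)\}$ lies entirely inside the compact ball $D := \{y \in \mathbb{R}^n : |y| \leq |x_0| + M\tau\}$.

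Suppose, for contradiction, that $\beta < \infty$. Applying Lemma \ref{lem3} with $S = \mathbb{R}^n$ and the compact set $D$ above, there must exist some $t \in (\alpha, \beta)$ with $x(t) \notin D$, contradicting the uniform bound just established. Hence $\beta = \infty$, and the solution extends to all of $[0, \infty)$. The main obstacle is really just justifying the variation-of-parameters step cleanly: it requires $F \circ x$ to be continuous on $[0, \beta)$ (which holds because $x$ is $C^1$ and $F$ is continuous) and the $L^\infty$ bound on $F \circ x$ to be \emph{global} rather than merely local, which is precisely the boundedness hypothesis on $F$. Once the a priori bound is in hand, the rest is a direct invocation of Lemmas \ref{lem2} and \ref{lem3}, with no appeal to regularity beyond $C^1$.
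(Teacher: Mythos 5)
Your proof is correct and follows essentially the same route as the paper, which defers this lemma to Funahashi--Nakamura but proves the analogous Lemma \ref{lem5} by exactly this skeleton: an a priori bound trapping the trajectory in a compact set, followed by an appeal to Lemmas \ref{lem2} and \ref{lem3} to rule out finite escape time. Your variation-of-parameters estimate $|x(t)| \leq |x_0| + M\tau$ is a cleaner, more explicit version of the comparison bound the paper uses there, so no substantive difference in approach.
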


\begin{lemma}
\label{lem5}
 For an $F: \mathbb{R}^n \rightarrow \mathbb{R^+}^n$ which is a bounded $C^1$-mapping, the differential equation
 \begin{equation}
 \label{eq7-a}
  \dot x = - (1/\tau +F(x))x + A F(x),
 \end{equation}
 in which $\tau$ is a positive constant, and $A$ is constant coefficients bound to a range $[-\alpha, \beta]$ for $ 0 <\alpha < +\infty$, and $0 \leq \beta < +\infty$,
 has a unique solution on $[0, \infty)$.
\end{lemma}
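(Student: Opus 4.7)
The strategy is to reduce the statement to the general framework provided by Lemmas 1--3. Writing the right-hand side as $G(x) := -(1/\tau + F(x))x + A\,F(x)$, I first observe that $G:\mathbb{R}^n\to\mathbb{R}^n$ inherits $C^1$-regularity from $F$ (it is built from $F$ by sums, products and scalar multiples), and is therefore locally Lipschitz on $\mathbb{R}^n$ by Lemma 1. Lemma 2 then supplies, for every initial datum $x_0\in\mathbb{R}^n$, a unique maximal forward solution $x:[0,T_{\max})\to\mathbb{R}^n$ with $x(0)=x_0$. The remaining task is to show $T_{\max}=+\infty$, and by Lemma 3 it suffices to confine the orbit to a compact subset of $\mathbb{R}^n$ on $[0,T_{\max})$.

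For the a priori bound I would work coordinatewise. Rewriting the $i$-th scalar equation as $\dot x_i = -x_i/\tau + F_i(x)(A_i - x_i)$ and setting the energy $V_i(t) := x_i(t)^2$, a direct differentiation together with the AM--GM inequality $2F_i(x)A_i x_i \le F_i(x)(A_i^2 + x_i^2)$ yields
\begin{equation*}
\dot V_i \;\le\; -\tfrac{2}{\tau}\, V_i \;-\; F_i(x)\,V_i \;+\; F_i(x)\,A_i^2 \;\le\; -\tfrac{2}{\tau}\, V_i \;+\; F_{\max}\,\max(\alpha,\beta)^2,
\end{equation*}
where $F_{\max}$ is the uniform bound on $F$ guaranteed by hypothesis. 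Grönwall's inequality then gives $V_i(t) \le V_i(0) + \tfrac{\tau}{2} F_{\max}\max(\alpha,\beta)^2$ for every $t \in [0,T_{\max})$, so the orbit remains inside a fixed compact ball of $\mathbb{R}^n$. Lemma 3 rules out $T_{\max}<\infty$, and uniqueness on $[0,\infty)$ is inherited from the local Lipschitz property of $G$ along the orbit.

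\textbf{Main obstacle.} The delicate point compared to Lemma 4 is that the ``effective time-constant'' $1/\tau + F(x)$ is itself state-dependent, so the clean linear comparison argument used for Lemma 4 no longer applies directly. The crucial observation that unlocks the argument is that the positivity hypothesis $F(x)>0$ renders the extra term $-F_i(x)V_i$ appearing in the energy inequality non-positive, so it may be harmlessly dropped in favour of the uniform linear decay $-(2/\tau)V_i$. Once this decoupling is in place, the rest of the argument is a routine combination of Grönwall's inequality with Lemma 3.
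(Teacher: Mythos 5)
Your proposal is correct, and its overall skeleton matches the paper's: both proofs reduce the problem to Lemmas \ref{lem1}--\ref{lem3} (the right-hand side is $C^1$, hence locally Lipschitz, hence a unique maximal solution exists, and global existence follows once the orbit is confined to a compact set so that Lemma \ref{lem3} forbids finite-time escape). Where you genuinely diverge is in how the a priori bound is obtained. The paper replaces $F_i(x)$ by a uniform bound $M$ and reads off componentwise bounds from the explicit solution of the resulting constant-coefficient linear ODE $\dot x = -(1/\tau + M)x + AM$, obtaining $\min\{|x_i(0)|,\ \tau A M/(1+\tau M)\} \le x_i(t) \le \max\{|x_i(0)|,\ \tau A M/(1+\tau M)\}$ and hence a compact trapping region; this step implicitly invokes a comparison-principle style argument that is left informal. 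You instead differentiate the coordinatewise energy $V_i = x_i^2$, use AM--GM and the positivity of $F$ to discard the state-dependent damping term $-F_i(x)V_i$, and close the estimate with Gr\"onwall's inequality, which yields $V_i(t) \le V_i(0) + \tfrac{\tau}{2}F_{\max}\max(\alpha,\beta)^2$. Your computation checks out (note $A_i \in [-\alpha,\beta]$ gives $A_i^2 \le \max(\alpha,\beta)^2$), and this route is arguably more self-contained, since it does not lean on an unproved monotone comparison with the frozen-coefficient system; what it gives up is the paper's more explicit, interpretable bound tied to the equilibrium $\tau A M/(1+\tau M)$ of the bounding linear ODE, which the paper then packages into the constants $C_1, C_2$. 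Either way the conclusion via Lemmas \ref{lem2} and \ref{lem3} is the same, so your proof is a valid alternative.
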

\begin{proof}
 Based on the assumptions, we can take a positive $M$, such that 
 \begin{equation}
  0 \leq F_i(x) \leq M (\forall i = 1,...,n)
 \end{equation}
 by looking at the solutions of the following differential equation: 
 \begin{equation}
  \dot x = - (1/\tau +M)x + A M,
 \end{equation} 
we can show that
\begin{equation}
 min\{|x_i(0)|, \frac{\tau(A M)}{1+\tau M}\} \leq x_i(t) \leq max\{|x_i(0)|, \frac{\tau(A M)}{1+\tau M}\}, 
\end{equation}
if we set the output of the max to $C_{max_i}$ and the output of the min to $C_{min_i}$ and also set $C_1 = min\{C_{min_i}\}$ and $C_2 = max\{C_{max_i}\}$, then the solution $x(t)$ satisfies
\begin{equation}
 \sqrt{n}C_1 \leq x(t) \leq \sqrt{n}C_2.
\end{equation}
Based on Lemma \ref{lem2} and Lemma \ref{lem3} a unique solution exists on the interval $[0,+\infty)$. 
\end{proof}

Lemma \ref{lem5} demonstrates that an LTC network defined by Eq. \ref{eq7-a}, has a unique solution on $[0,\infty)$, since the output function is bounded and is a $C^1$ mapping. 

\begin{lemma}
 \label{lem6}
Let two continuous mapping $F,\tilde F:S \rightarrow \mathbb{R}^n$ be Lipschitz, and $L$ be a Lipschitz constant of $F$. if $\forall x\in S$, 
\begin{equation}
 |F(\textbf{x}) - \tilde F(\textbf{x})| < \epsilon,
\end{equation}
holds, if $\textbf{x}(t)$ and $\textbf{y}(t)$ are solutions to
\begin{equation}
 \dot{\textbf{x}} = F(\textbf{x}),
\end{equation}
\begin{equation}
 \dot{\textbf{y}} = \tilde F(\textbf{x}),
\end{equation}
on some interval $J$, such that $x(t_0) = y(t_0)$, then
\begin{equation}
 |\textbf{x}(t) - \textbf{y}(t)| \leq \frac{\epsilon}{L}(e^{L|t-t_0|} - 1).
\end{equation}
(Proof in \cite{morris1973differential}, chapter 15, section 1, Theorem 3).
\end{lemma}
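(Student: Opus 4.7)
The plan is to reduce this to the classical Grönwall inequality by comparing the integral forms of the two ODE solutions. First I would rewrite both trajectories as integral equations starting from the common initial condition $\textbf{x}(t_0) = \textbf{y}(t_0)$: namely $\textbf{x}(t) = \textbf{x}(t_0) + \int_{t_0}^{t} F(\textbf{x}(s))\,ds$ and $\textbf{y}(t) = \textbf{y}(t_0) + \int_{t_0}^{t} \tilde{F}(\textbf{y}(s))\,ds$. Subtracting and inserting the pivot term $\pm F(\textbf{y}(s))$ inside the integrand decomposes the discrepancy into a piece controlled by the Lipschitz constant of $F$ and a piece controlled by the uniform $\epsilon$ bound on $F - \tilde{F}$.

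Next I would take norms, apply the triangle inequality, invoke $|F(\textbf{x}(s)) - F(\textbf{y}(s))| \leq L\,|\textbf{x}(s) - \textbf{y}(s)|$ on the first piece and the hypothesis $|F(\textbf{y}(s)) - \tilde{F}(\textbf{y}(s))| < \epsilon$ on the second. For $t \geq t_0$ this yields the integral inequality
\begin{equation*}
|\textbf{x}(t) - \textbf{y}(t)| \;\leq\; \epsilon\,(t-t_0) \;+\; L\int_{t_0}^{t} |\textbf{x}(s) - \textbf{y}(s)|\,ds.
\end{equation*}
Setting $u(t) = |\textbf{x}(t) - \textbf{y}(t)|$, this is exactly the hypothesis of Grönwall's lemma with a linear-in-$t$ forcing term.

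Then I would apply Grönwall's inequality in the form that handles a non-constant (here, linear) driving term, which gives $u(t) \leq \epsilon(t-t_0) + \int_{t_0}^{t} \epsilon(s-t_0)\,L\,e^{L(t-s)}\,ds$. Evaluating this integral collapses to the stated closed form $\frac{\epsilon}{L}\bigl(e^{L(t-t_0)} - 1\bigr)$. An equivalent shortcut is to define $v(t) = \int_{t_0}^{t} u(s)\,ds$, observe that $v'(t) - L\,v(t) \leq \epsilon(t-t_0)$, multiply by the integrating factor $e^{-L(t-t_0)}$, and integrate to reach the same bound; I would pick whichever route is cleaner.

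Finally I would handle the case $t < t_0$ by applying the same argument with the time variable reversed (running both ODEs backward), which replaces $t - t_0$ by $t_0 - t$ and recovers the absolute value $|t - t_0|$ in the statement. The only real subtlety is the two-sided nature of the interval $J$ and making sure the Grönwall step uses the linear forcing term correctly; once that is in place the rest is just bookkeeping, and there is no serious obstacle since $F$ being Lipschitz on $S$ (with constant $L$) is given outright and $\tilde{F}$ need not itself be Lipschitz for this estimate to go through.
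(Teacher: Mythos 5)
Your argument is correct: the pivot decomposition $F(\textbf{x}(s))-\tilde F(\textbf{y}(s)) = [F(\textbf{x}(s))-F(\textbf{y}(s))] + [F(\textbf{y}(s))-\tilde F(\textbf{y}(s))]$ correctly charges the first term to the Lipschitz constant of $F$ and the second to the uniform $\epsilon$ bound, and the Gr\"onwall step with linear forcing $\epsilon(t-t_0)$ does evaluate to $\frac{\epsilon}{L}(e^{L|t-t_0|}-1)$. The paper gives no proof of its own (it defers to Hirsch and Smale, Ch.~15), and your route is precisely the standard argument in that reference, so there is nothing further to reconcile.
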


\subsection{Proof of the Theorem:}
\begin{proof}
Using the above definitions and lemmas, we prove that LTCs are universal approximators.

Part 1. We choose an $\eta$ which is in range $(0, min\{\epsilon, \lambda\})$, for $\epsilon > 0$, and $\lambda$ the distance between $\tilde D$ and boundary $\delta S$ of $S$. $D_{\eta}$ is set:
\begin{equation}
 D_{\eta} = \{ \textbf{x} \in \mathbb{R}^n; \exists z \in \tilde D, |\textbf{x}-\textbf{z}| \leq \eta \}.
\end{equation}
$D_{\eta}$ stands for a compact subset of $S$, because $\tilde D$ is compact. Thus, $F$ is Lipschitz on $D_{\eta}$ by Lemma \ref{lem1}. Let $L_F$ be the Lipschitz constant of $F | D_{\eta}$, then, we can choose an $\epsilon _l > 0$, such that
\begin{equation}
 \epsilon _l < \frac{\eta L_F}{2(e^{L_F T-1)}}.
\end{equation}

Based on the universal approximation theorem, there is an integer $N$, and an $n \times N$ matrix $A$, and an $N \times n$ matrix $C$ and an $N$-dimensional vector $\mu$ such that
\begin{equation}
\label{eq26}
 max |F(\textbf{x}) - A f(\gamma \textbf{x} + \mu)| < \frac{\epsilon_l}{2}.
\end{equation}

We define a $C^1$-mapping $\tilde F: \mathbb{R}^n \rightarrow \mathbb{R}^n$ as: 
\begin{equation}
\label{eq27}
 \tilde F(\textbf{x}) = - (1/\tau +W_l f(\gamma \textbf{x}+\mu))\textbf{x} + W_l f(\gamma \textbf{x} + \mu) A,
\end{equation}
with parameters matching that of Eq. \ref{eq:ltc_1} with $W_l =W$. 

We set system's time-constant, $\tau_{sys}$ as:

\begin{equation}
 \tau_{sys} = \frac{1}{\tau/1 +\tau W_l f(\gamma x+\mu)}. 
\end{equation}
We chose a large $\tau_{sys}$, conditioned with the following:

\begin{align}
&(a)~~\forall x \in D_\eta;~~| \frac{x}{\tau _{sys}}| < \frac{\epsilon _l}{2}\\
&(b)~~| \frac{\mu}{\tau _{sys}}|< \frac{\eta L_{\tilde G}}{2(e^{L_{\tilde G} T}- 1)}~\text{and}~| \frac{1}{\tau _{sys}}| < \frac{L_{\tilde G}}{2},
\end{align}

where $L_{\tilde G}/2$ is a lipschitz constant for the mapping $W_l f: \mathbb{R}^{n+N} \rightarrow \mathbb{R}^{n+N}$ which we will determine later. To satisfy conditions (a) and (b), $\tau W_l << 1$ should hold true.

Then by Eq. \ref{eq26} and \ref{eq27}, we can prove:
\begin{equation}
 \underset{x \in D_\eta}{max} | F(\textbf{x}) - \tilde F(\textbf{x})| < \epsilon _l
\end{equation}

Let's set $\textbf{x}(t)$ and $\tilde{\textbf{x}}t)$ with initial state $x(0) = \tilde x(0) = x_0 \in D$, as the solutions of equations below:
\begin{equation}
 \dot{\textbf{x}} = F(\textbf{x}),
\end{equation}
\begin{equation}
 \dot{\tilde{\textbf{x}}} = \tilde F(\textbf{x}).
\end{equation}

Based on Lemma \ref{lem6} for any $t \in I$,

\begin{align}
| \textbf{x}(t) - \tilde{\textbf{x}}(t) | &\leq \frac{\epsilon _l}{L_F}(e^{L_F t} -1) \\
&\leq \frac{\epsilon _l}{L_F}(e^{L_F T} -1).
\end{align}

Thus, based on the conditions on $\epsilon$,

\begin{equation}
\label{eq36}
 \underset{t \in I}{max} | \textbf{x}(t) - \tilde{\textbf{x}}(t)|< \frac{\eta}{2}.
\end{equation}

Part 2. Let's Consider the following dynamical system defined by $\tilde F$ in Part 1:

\begin{equation}
\label{eq37}
 \dot{\tilde{\textbf{x}}} = -\frac{1}{\tau_{sys}} \tilde{\textbf{x}} + W_l f(\gamma \tilde{\textbf{x}} + \mu) A.
\end{equation}

Suppose we set $\tilde{\textbf{y}} = \gamma \tilde{\textbf{x}} + \mu$; then:
\begin{equation}
 \dot{\tilde{\textbf{y}}} = \gamma  \dot{\tilde{\textbf{x}}} = -\frac{1}{\tau_{sys}} \tilde{\textbf{y}} + E f(\tilde{\textbf{y}}) + \frac{\mu}{\tau_{sys}},
\end{equation}
where $E = \gamma W_l A$, an $N \times N$ matrix. We define

\begin{equation}
 \tilde{\textbf{z}} = (\tilde x_1, ..., \tilde x_n, \tilde y_1,...,\tilde y_n), 
\end{equation}
and we set a mapping $\tilde G: \mathbb{R}^{n+N} \rightarrow \mathbb{R}^{n+N}$ as:
\begin{equation}
\label{eq40}
  \tilde G(\tilde{\textbf{z}}) = -\frac{1}{\tau_{sys}} \tilde{\textbf{z}} + W f(\tilde{\textbf{z}}) + \frac{\mu _1}{\tau_{sys}},
\end{equation}

where;

\begin{align}
 &W^{(n+N)\times(n+N)} = \left(\begin{array}{cc} 0 & A\\ 0 & E \end{array}\right), \\
 &\mu _1^{n+N} = \left(\begin{array}{c} 0 \\ {\mu} \end{array}\right). 
\end{align}

Now using Lemma \ref{lem2}, we can show that solutions of the following dynamical system:
\begin{equation}
 \dot{\tilde{\textbf{z}}} = \tilde G(\tilde{\textbf{z}}),~~~~\tilde y(0) = \gamma \tilde x(0) + \mu, 
\end{equation}
are equivalent to the solutions of the Eq. \ref{eq37}. 

Let's define a new dynamical system $G: \mathbb{R}^{n+N} \rightarrow \mathbb{R}^{n+N}$ as follows:
\begin{equation}
\label{eq44}
 G(\textbf{z}) = -\frac{1}{\tau_{sys}} \textbf{z} + W f(\textbf{z}),
\end{equation}
where $\textbf{z} = (x_1, ..., x_n, y_1,...,y_n)$. Then the dynamical system below
\begin{equation}
\label{eqzzz}
 \dot{\textbf{z}} = -\frac{1}{\tau_{sys}} \textbf{z} + W f(\textbf{z}),
\end{equation}
can be realized by an LTC, if we set $\textbf{h}(t) = (h_1(t),...,h_N(t))$ as the hidden states, and $\textbf{u}(t) = (U_1(t),...,U_n(t))$ as the output states of the system. Since $\tilde G$ and $G$ are both $C^1$-mapping and $f^{\prime}(\textbf{x})$ is bound, therefore, the mapping $\tilde{\textbf{z}} \mapsto W f(\tilde{\textbf{z}})$ is Lipschitz on $\mathbb{R}^{n+N}$, with a Lipschitz constant $L_{\tilde G}/2$. As $L_{\tilde G}/2$ is lipschitz constant for $- \tilde z/\tau_{sys}$ by condition (b) on $\tau_{sys}$, $L_{\tilde G}$ is a Lipschitz constant of $\tilde G$. 

From Eq. \ref{eq40}, Eq. \ref{eq44}, and condition (b) of $\tau_{sys}$, we can derive the following:
\begin{equation}
 | \tilde G(\textbf{z}) - G(\textbf{z})| = | \frac{\mu}{\tau_{sys}}| < \frac{\eta L_{\tilde G}}{2(e^{L_{\tilde G}T}-1)}.
\end{equation}
Accordingly, we can set $\tilde{\textbf{z}}(t)$ and $\textbf{z}(t)$, solutions of the dynamical systems:
\begin{equation}
 \dot{\tilde{\textbf{z}}} = \tilde G(\textbf{z}),~~~ \begin{cases} \tilde x(0) = x_0 \in D \\ \tilde y(0) =  \gamma x_0 +\mu \end{cases}
\end{equation} 
\begin{equation}
 \dot{\textbf{z}} = G(\textbf{z}),~~~ \begin{cases} u(0) = x_0 \in D \\ \tilde h(0) = \gamma x_0 +\mu \end{cases}
\end{equation} 
By Lemma \ref{lem6}, we achieve
\begin{equation}
 \underset{t \in I}{max} | \tilde{\textbf{z}}(t) - \textbf{z}(t)|< \frac{\eta}{2},
\end{equation}
and therefore we have:
\begin{equation}
\label{eq50}
 \underset{t \in I}{max} | \tilde{\textbf{x}}(t) - \textbf{u}(t)|< \frac{\eta}{2},
\end{equation}

Part3. Now by using Eq. \ref{eq36} and Eq. \ref{eq50}, for a positive $\epsilon$, we can design an LTC with internal dynamical state $\textbf{z}(t)$, with $\tau_{sys}$ and $W$. For x(t) satisfying $\dot{\textbf{x}} = F(\textbf{x})$, if we initialize the network by $u(0) = x(0)$ and $h(0) = \gamma x(0) +\mu$, we obtain:
\begin{equation}
 \underset{t \in I}{max} | \textbf{x}(t) - \textbf{u}(t)|< \frac{\eta}{2}+\frac{\eta}{2}= \eta < \epsilon.
\end{equation}
\end{proof}

REMARKS. LTCs allow the elements of the hidden layer to have recurrent connections to each other. However, it assumes a feed-forward connection stream from hidden nodes to output units. We assumed no inputs to the system and principally showed that the hidden nodes' together with output units, could approximate any finite trajectory of an autonomous dynamical system. 

\section{Proof of Theorem 4}

In this section, we describe our mathematical notions and revisit concepts that are required to state the proof. The main statements of our theoretical results about the expressive power of time-continuous neural networks are chiefly built over the expressivity measure, \emph{trajectory length}, introduced for static deep neural networks in \cite{raghu2017expressive}. It is therefore intuitive to follow similar steps with careful considerations, due to the continuous nature of the models.

\subsection{Notations}

\textit{Neural network architecture --} We determine a neural network architecture by $f_{n,k}(x(t),I(t),\theta)d$, with n layers (depth),width k and total number of neurons, $N= n \times k$.

\textit{Neural state, x(t) --} For a layer d of a network $f$, $x^{(d)}(t)$ represent the neural state of the layer and is a matrix of the size $k \times m$, with m being the size of the input time series. 

\textit{Inputs, I(t) --} is a 2-dimensional matrix containing a 2-D trajectory for $t \in [0,t_{max}]$.

\textit{Network parameters, $\theta$ --} include weights matrices for each layer d of the form $W^{(d)} \sim \mathcal{N}(0,\sigma^{2}_{w}/k)$ and bias vectors as $b^{(d)} \sim \mathcal{N}(0,\sigma^{2}_{b})$. For CT-RNNs the vector parameter $\tau^{(d)}$ is also sampled from $\sim \mathcal{N}(0,\sigma^{2}_{b})$

\textit{Perpendicular and parallel components --} For given vectors $x$ and $y$ we can decompose each vector in respect to one another as $y = y_{\parallel} + y_{\bot}$. That is, $y_{\parallel}$ stands for component of $y$ parallel to $x$ and $y_{\bot}$ is the perpendicular component in respect to $x$.

\textit{Weight matrix decomposition --} \cite{raghu2017expressive} showed that for given non-zero vectors $x$ and $y$, and a full rank matrix $W$, one can write a matrix decomposition for $W$ in respect to $x$ and $y$ as follows: $W = \tensor[^\parallel]{W}{_\parallel} + \tensor[^\parallel]{W}{_\bot} + \tensor[^\bot]{W}{_\parallel} + \tensor[^\bot]{W}{_\bot}$, such that, $\tensor[^\parallel]{W}{_\bot} x = 0$, $\tensor[^\bot]{W}{_\bot} x = 0$, $ y^T \tensor[^\bot]{W}{_\parallel} = 0$ and $ y^T \tensor[^\bot]{W}{_\bot} = 0$. In this notation, the decomposition superscript on left is in respect to $y$ and the subscript on right is in respect to $x$. It has also been observed that $W_{\bot}$ in respect to x can be obtained by: $W_{\bot} = W - W_{\parallel}$ \cite{raghu2017expressive}.

\begin{lemma}
\label{lem:independence_of_projections}
\emph{Independence of Projections \cite{raghu2017expressive}.} Given a matrix W with iid entries drawn form $\mathcal{N}(0,\sigma^{2})$, then its decomposition matrices $W_{\bot}$ and $W_{\parallel}$ in respect to x, are independent random variables. 
\end{lemma}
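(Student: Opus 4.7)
The plan is to exploit the rotational invariance of the iid Gaussian law on matrices. Setting $\hat{x} = x / \|x\|$ and $P = \hat{x}\hat{x}^T$, the decomposition can be written compactly as $W_\parallel = WP$ and $W_\bot = W(I - P)$, since $WP$ captures precisely the action of $W$ on the line spanned by $x$ while $W(I-P)$ annihilates $x$ and encodes only the action of $W$ on $x^\perp$. So $W_\parallel$ is determined by the single column $W\hat{x}$, whereas $W_\bot$ is determined by $W$ restricted to the orthogonal complement of $\hat{x}$; establishing independence amounts to showing that these two pieces of information are probabilistically independent.

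First I would choose any orthogonal matrix $U$ whose first column is $\hat{x}$ and pass to the rotated matrix $\tilde{W} = WU$. Right-multiplication by an orthogonal matrix preserves the iid $\mathcal{N}(0,\sigma^2)$ distribution on matrix entries, because the joint law of the rows factors as independent standard Gaussians on $\mathbb{R}^k$ and each row is pushed forward by $U$, under which the spherical Gaussian is invariant. Consequently the columns of $\tilde{W}$ are mutually independent Gaussian vectors, the first being $W\hat{x}$ and the remaining $k-1$ being $W$ applied to basis vectors orthogonal to $\hat{x}$.

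Next I would rewrite the decomposition in the rotated basis: $W_\parallel = \tilde{W}\, e_1 e_1^T\, U^T$ is a deterministic linear function of the first column of $\tilde{W}$ alone, while $W_\bot = \tilde{W}(I - e_1 e_1^T)\, U^T$ is a deterministic linear function of columns $2,\ldots,k$. Since disjoint sets of columns of $\tilde{W}$ are independent, and measurable (here linear) functions of independent random variables remain independent, $W_\parallel$ and $W_\bot$ are independent as claimed.

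The only genuine obstacle is bookkeeping: one must verify (i) that right-multiplication by $U$ really does preserve the iid Gaussian law, which follows from orthogonal invariance applied row-wise, and (ii) that the rotation carries the original $W_\parallel, W_\bot$ decomposition into a clean column-splitting of $\tilde{W}$, which is the identity computation $U e_1 e_1^T U^T = \hat{x}\hat{x}^T = P$. Once these two checks are made the rest of the argument is automatic and requires no estimate on the iid Gaussian law beyond column independence.
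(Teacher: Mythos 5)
Your proposal is correct: the identities $W_\parallel = \tilde{W}e_1e_1^TU^T$ and $W_\bot = \tilde{W}(I-e_1e_1^T)U^T$ with $\tilde{W}=WU$ iid Gaussian (by row-wise orthogonal invariance) do split the two pieces across disjoint, hence independent, columns of $\tilde{W}$. The paper itself gives no proof, deferring to Lemma 2 in the appendix of \cite{raghu2017expressive}, and that cited proof is exactly this rotational-invariance / change-of-basis argument, so your route coincides with the intended one.
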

Proof in \cite{raghu2017expressive}, Appendix, Lemma 2.

\begin{lemma}
\label{lem:norm_of_gaussian_vecor}
\emph{Norm of Gaussian Vector \cite{raghu2017expressive}.} The norm of a Gaussian vector $X \in \mathbb{R}^k$, with its entries sampled iid $\sim \mathcal{N}(0,\sigma^2)$ is given by:
\begin{equation}
    \mathbb{E}[\norm{X}] = \sigma \sqrt{2} \frac{\Gamma((k+1)/2)}{\Gamma(k/2)}.
\end{equation}
\end{lemma}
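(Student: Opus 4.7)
The plan is to reduce to the standard Gaussian case by scaling and then evaluate the resulting expectation as a single radial integral using spherical coordinates. First, I would set $Y = X/\sigma$, so that $Y \in \mathbb{R}^k$ has iid entries $Y_i \sim \mathcal{N}(0,1)$ and $\mathbb{E}[\|X\|] = \sigma\,\mathbb{E}[\|Y\|]$. This reduces the problem to computing the mean of the Euclidean norm of a standard Gaussian vector, i.e., the mean of a chi distribution with $k$ degrees of freedom.

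Next, I would write the expectation as
\begin{equation*}
\mathbb{E}[\|Y\|] = \frac{1}{(2\pi)^{k/2}} \int_{\mathbb{R}^k} \|y\|\, e^{-\|y\|^2/2}\, dy,
\end{equation*}
and pass to spherical coordinates. With $r = \|y\|$, the volume element becomes $r^{k-1}\, dr\, d\Omega$, and integrating out the angular part yields the surface area of the unit sphere $S_{k-1} = 2\pi^{k/2}/\Gamma(k/2)$. Thus
\begin{equation*}
\mathbb{E}[\|Y\|] = \frac{2\pi^{k/2}}{\Gamma(k/2)\,(2\pi)^{k/2}} \int_0^{\infty} r^{k}\, e^{-r^{2}/2}\, dr.
\end{equation*}

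Then I would evaluate the radial integral by the substitution $u = r^{2}/2$, so that $du = r\,dr$ and $r = \sqrt{2u}$, producing
\begin{equation*}
\int_0^{\infty} r^{k}\, e^{-r^{2}/2}\, dr = 2^{(k-1)/2}\int_0^{\infty} u^{(k-1)/2}\, e^{-u}\, du = 2^{(k-1)/2}\,\Gamma\!\left(\tfrac{k+1}{2}\right).
\end{equation*}
Substituting back and simplifying the powers of $2$ and $\pi$ (using $(2\pi)^{k/2} = 2^{k/2}\pi^{k/2}$) collapses the prefactor to $\sqrt{2}$, giving $\mathbb{E}[\|Y\|] = \sqrt{2}\,\Gamma((k+1)/2)/\Gamma(k/2)$. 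Reintroducing the scale factor $\sigma$ yields the claimed identity.

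There is no real obstacle here; the only point requiring mild care is keeping track of the constants in the reduction of $(2\pi)^{k/2}$ against $2^{(k-1)/2}$ and $S_{k-1}$, so that the final prefactor simplifies cleanly to $\sqrt{2}$. Once this bookkeeping is done, the result follows immediately from the definition of the Gamma function.
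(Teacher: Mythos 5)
Your proof is correct: the reduction to the chi distribution and the spherical-coordinates evaluation of the radial integral is the standard argument, and the constants do collapse to $\sqrt{2}$ as you claim. The paper itself does not reproduce a proof of this lemma --- it only cites Lemma~3 in the appendix of \cite{raghu2017expressive}, whose derivation is essentially the same computation you give.
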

Proof in \cite{raghu2017expressive}, Appendix, Lemma 3.

\begin{lemma}
\label{lem:norm_of_projections}
\emph{Norm of Projections \cite{raghu2017expressive}.} for a $W^{k \times k}$ with conditions of Lemma \ref{lem:norm_of_gaussian_vecor}, and two vectors, $x$ and $y$, then the following holds for $x_{\bot}$ being a non-zero vector, perpendicular to $x$:
\begin{equation}
    \mathbb{E}[\norm{\tensor[^\bot]{W}{_\bot} x_{\bot}}] = \norm{x_{\bot}} \sigma \sqrt{2} \frac{\Gamma((k)/2)}{\Gamma((k-1)/2)} \geq \norm{x_{\bot}} \sigma \sqrt{2} (\frac{k}{2}-\frac{3}{4})^{1/2}.
\end{equation}
It has also been shown in \cite{raghu2017expressive}: "that if $1_{\mathcal{A}}$ is an identity matrix with non-zero diagonal entry $i$ iff $i \in \mathcal{A} \subset [k]$ and $|\mathcal{A}|>2$, then:

\begin{equation}
    \mathbb{E}[\norm{1_{\mathcal{A}}\tensor[^\bot]{W}{_\bot} x_{\bot}}] = \norm{x_{\bot}} \sigma \sqrt{2} \frac{\Gamma(|\mathcal{A}|/2)}{\Gamma((|\mathcal{A}|-1)/2)} \geq \norm{x_{\bot}} \sigma \sqrt{2} (\frac{|\mathcal{A}|}{2}-\frac{3}{4})^{1/2}.~\emph{"}
\end{equation}
\end{lemma}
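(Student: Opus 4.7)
The plan is to use rotational invariance of the iid Gaussian ensemble to reduce $\tensor[^\bot]{W}{_\bot} x_\bot$ to (a scaling of) a lower-dimensional standard Gaussian vector, then invoke the preceding Lemma \ref{lem:norm_of_gaussian_vecor}, and close with a Gamma-function inequality of Gautschi type.

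First I would fix orthonormal bases adapted to $x$ and $y$. On the input side, take $e_1 = x/\norm{x}$ and $e_2 = x_\bot/\norm{x_\bot}$, extended arbitrarily to $e_1,\ldots,e_k$; on the output side, take $f_1 = y/\norm{y}$, extended to $f_1,\ldots,f_k$. Since $W$ has iid $\mathcal{N}(0,\sigma^2)$ entries, its matrix representation in these bases (obtained by left- and right-multiplying by orthogonal matrices) still has iid $\mathcal{N}(0,\sigma^2)$ entries. Unpacking the decomposition definitions, $\tensor[^\bot]{W}{_\bot}$ corresponds in these bases to the sub-block indexed by rows $i\geq 2$ and columns $j\geq 2$, padded with zeros elsewhere: the subscript $\bot$ restricts the input to $\mathrm{span}(e_2,\ldots,e_k)$, and the superscript $\bot$ projects the output onto $\mathrm{span}(f_2,\ldots,f_k)$.

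Since $x_\bot = \norm{x_\bot}\,e_2$ in this basis, the vector $\tensor[^\bot]{W}{_\bot} x_\bot$ equals $\norm{x_\bot}$ times the second column of that block, i.e.\ $(0,W_{22},W_{32},\ldots,W_{k2})^{T}$ in the $f_i$-basis. Its $k-1$ nonzero entries are iid $\mathcal{N}(0,\sigma^2)$, so $\norm{\tensor[^\bot]{W}{_\bot} x_\bot}$ is distributed as $\norm{x_\bot}$ times the norm of a $(k-1)$-dimensional Gaussian with coordinate variance $\sigma^2$. Applying Lemma \ref{lem:norm_of_gaussian_vecor} with dimension $k-1$ yields the equality
\[
\mathbb{E}\bigl[\norm{\tensor[^\bot]{W}{_\bot} x_\bot}\bigr]
= \norm{x_\bot}\,\sigma\sqrt{2}\,\frac{\Gamma(k/2)}{\Gamma((k-1)/2)}.
\]
For the lower bound, I would invoke the Gautschi-type inequality $\Gamma(s+\tfrac12)/\Gamma(s)\geq \sqrt{s-\tfrac14}$ for $s>0$, applied with $s=(k-1)/2$, producing $\Gamma(k/2)/\Gamma((k-1)/2)\geq \sqrt{k/2-3/4}$, which is the stated bound.

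The $1_\mathcal{A}$ variant follows by the same recipe: the projector $1_\mathcal{A}$ further restricts the output to the coordinates indexed by $\mathcal{A}$, and since the $f_1$-component is already forced to zero, the surviving Gaussian subspace has (effective) dimension $|\mathcal{A}|-1$; Lemma \ref{lem:norm_of_gaussian_vecor} and the same Gautschi bound then produce the stated inequality with $k$ replaced by $|\mathcal{A}|$. The only genuinely delicate step is the dimensional bookkeeping in the basis-change argument — verifying that in the chosen bases $\tensor[^\bot]{W}{_\bot}$ really is the $(k-1)\times(k-1)$ lower-right block with independent $\mathcal{N}(0,\sigma^2)$ entries, and that the surviving coordinate count is $k-1$ (resp.\ $|\mathcal{A}|-1$); once that is in hand the rest is a direct substitution into the preceding lemma.
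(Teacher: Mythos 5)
The paper does not prove this lemma at all: it is imported verbatim from Raghu et al.\ (2017), and the ``proof'' is the pointer ``Proof in \cite{raghu2017expressive}, Appendix, Lemma 4.'' Your proposal therefore supplies an argument the paper omits, and for the first identity it is correct and essentially the standard one: by rotational invariance of the iid Gaussian ensemble, $W$ expressed in bases adapted to $x$ and $y$ still has iid $\mathcal{N}(0,\sigma^2)$ entries, $\tensor[^\bot]{W}{_\bot}$ becomes the lower-right $(k-1)\times(k-1)$ block, $\tensor[^\bot]{W}{_\bot}x_\bot$ is $\norm{x_\bot}$ times a column of that block, and Lemma \ref{lem:norm_of_gaussian_vecor} in dimension $k-1$ gives the equality; the Gautschi-type bound $\Gamma(s+\tfrac12)/\Gamma(s)\geq\sqrt{s-\tfrac14}$ with $s=(k-1)/2$ gives the inequality. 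Your dimensional bookkeeping for this part checks out.

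The one step I would push back on is the $1_{\mathcal{A}}$ variant. There, $1_{\mathcal{A}}$ is a coordinate projector in the \emph{standard} basis, while the output-side projector $I-\hat{y}\hat{y}^{T}$ implicit in $\tensor[^\bot]{W}{}$ is aligned with $y$, not with the coordinate axes. The two projectors do not commute in general, their product is not an orthogonal projection, and so $1_{\mathcal{A}}\tensor[^\bot]{W}{_\bot}x_\bot$ is not exactly a Gaussian vector supported on an $(|\mathcal{A}|-1)$-dimensional subspace; your phrase ``the surviving Gaussian subspace has effective dimension $|\mathcal{A}|-1$'' does not yield the stated \emph{equality} by the same substitution. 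This is not really your error --- the equality is quoted as such from \cite{raghu2017expressive}, and the downstream use in Theorems \ref{theorem:neural_ode} and \ref{theorem:LTC} only needs the lower bound --- but a self-contained proof would either have to argue this case separately (e.g., condition appropriately and bound below by restricting to the intersection of the two subspaces, which has dimension at least $|\mathcal{A}|-1$) or state it only as an inequality.
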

Proof in \cite{raghu2017expressive}, Appendix, Lemma 4.

\begin{lemma}
\label{lem:norm_and_translation}
\emph{Norm and Translation \cite{raghu2017expressive}.} For $X$ being a zero-mean multivariate Gaussian and having a diagonal covariance matrix, and $\mu$ a vector of constants, we have:
\begin{equation}
    \mathbb{E}[\norm{X - \mu}] \geq \mathbb{E}[\norm{X}].
\end{equation}
\end{lemma}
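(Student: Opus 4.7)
The plan is to exploit two elementary properties of the map $g(\mu) \coloneqq \mathbb{E}[\norm{X - \mu}]$: its convexity, and the reflection symmetry inherited from $X$ being a zero-mean Gaussian. Combining the two gives the desired inequality almost immediately, so the bulk of the write-up is simply verifying these two ingredients carefully.

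First, I would establish convexity of $g$. For every fixed realization $x \in \mathbb{R}^k$, the map $\mu \mapsto \norm{x - \mu}$ is convex, since it is the composition of the (convex) norm with an affine function. Taking expectations preserves convexity (monotone linear operators send convex integrands to convex functions), so $g$ is convex on $\mathbb{R}^k$, provided the expectation is finite, which it is because $\mathbb{E}[\norm{X - \mu}] \leq \mathbb{E}[\norm{X}] + \norm{\mu} < \infty$ for a Gaussian $X$.

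Next, I would exploit the symmetry. Because $X$ is a zero-mean Gaussian (diagonal covariance is more than enough; what is really used is that the distribution of $X$ is invariant under $x \mapsto -x$), we have $-X \stackrel{d}{=} X$, and hence
\begin{equation*}
g(-\mu) = \mathbb{E}\bigl[\norm{X + \mu}\bigr] = \mathbb{E}\bigl[\norm{-X - \mu}\bigr] = \mathbb{E}\bigl[\norm{X - \mu}\bigr] = g(\mu).
\end{equation*}
Combining this with convexity via the midpoint inequality,
\begin{equation*}
\mathbb{E}[\norm{X}] = g(0) = g\!\left(\tfrac{\mu + (-\mu)}{2}\right) \leq \tfrac{1}{2} g(\mu) + \tfrac{1}{2} g(-\mu) = g(\mu) = \mathbb{E}[\norm{X - \mu}],
\end{equation*}
yields the claim.

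There is no real obstacle here; the only subtlety is making explicit that the diagonal-covariance hypothesis is invoked only through the reflection symmetry $-X \stackrel{d}{=} X$, which in fact follows from zero mean alone for Gaussians. If a fully elementary path is preferred, an alternative is to write $\norm{X - \mu} = \tfrac{1}{2}(\norm{X - \mu} + \norm{-X - \mu}) \geq \norm{X}$ pointwise by the triangle inequality applied to $2X = (X - \mu) - (-X - \mu)$, then take expectations and use $-X \stackrel{d}{=} X$ to identify the two terms in distribution; this avoids explicit convexity language but relies on the same two ingredients.
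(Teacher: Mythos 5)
Your argument is correct, and it is worth noting that the paper does not actually prove this lemma in-house: it simply defers to Raghu et al. (2017), Appendix, Lemma 5, where the statement is established as part of their trajectory-length machinery. Your route — convexity of $g(\mu)=\mathbb{E}[\norm{X-\mu}]$ plus the reflection symmetry $-X \stackrel{d}{=} X$, combined through the midpoint inequality $g(0)\le \tfrac12 g(\mu)+\tfrac12 g(-\mu)$ — is a clean, self-contained alternative, and it is strictly more general than the stated hypotheses: it uses neither Gaussianity nor the diagonal covariance, only that the law of $X$ is symmetric under $x\mapsto -x$ with finite first moment, and it works for any norm. That generality is exactly what you point out, and it is a genuine advantage over leaning on the cited lemma. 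One small wording issue in your ``fully elementary'' variant: the displayed relation $\norm{X-\mu} = \tfrac12(\norm{X-\mu}+\norm{-X-\mu})$ is not a pointwise identity, since $\norm{-X-\mu}=\norm{X+\mu}$ generally differs from $\norm{X-\mu}$ realization by realization; what holds pointwise is only the triangle-inequality bound $\tfrac12(\norm{X-\mu}+\norm{X+\mu})\ge \norm{X}$ obtained from $2X=(X-\mu)-(-X-\mu)$, and the identification of the two summands happens only after taking expectations via $-X \stackrel{d}{=} X$. You do say this in words immediately afterwards, so the intended argument is sound; just phrase the chain as ``pointwise inequality, then expectation, then symmetry'' rather than as an equality of random variables.
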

Proof in \cite{raghu2017expressive}, Appendix, Lemma 5.

\subsection{Beginning of the proof of Theorem 4}
We first establish the lower bound for Neural ODEs and then extend the results to that of CT-RNNs.
\begin{proof}
For a successive layer $d+1$ of a Neural ODE the gradient between the states at $t+\delta t$ and $t$, $x^{d+1}(t+\delta t)$ and $x^{d+1}(t)$ is determined by:

\begin{equation}
 \frac{dx}{dt}^{(d+1)} = f(h^{(d)}),~~~~~~~h^{(d)} = W^{(d)}x^{(d)}+b^{(d)}.
\end{equation}

Accordingly, for the latent representation (the first two principle components of the hidden state $x^{(d+1)}$), which is denoted by $z^{(d+1)}(t)$, this gradient can be determined by:

\begin{equation}
 \frac{dz}{dt}^{(d+1)} = f(h^{(d)}),~~~~~~~h^{(d)} = W^{(d)}z^{(d)}+b^{(d)}
\end{equation}

Let us continue with the zero bias case and discuss the non-zero bias case later. 

We decompose $W^{(d)}$ in respect to the $z^{(d)}$, as $W^{(d)} = W^{(d)}_{\parallel} + W^{(d)}_{\bot}$. For this decomposition, the hidden state $h^{(d+1)} = W^{(d)}_{\parallel}z^{(d)}$ as the vertical components maps $z^{(d)}$ to zero. 

We determine the set of indices for which the gradient state is not saturated as if $f$ is defined by \url{Hard-tanh} activations:
\begin{equation}
    \mathcal{A}_{W^{(d)}_{\parallel}} = \{i : i \in [k], |h_{i}^{(d+1)}| < 1 \}
\end{equation}

As the decomposition components of $W^{(d)}$ are independent random variables, based on Lemma \ref{lem:norm_of_projections}, we can build the expectation of the gradient state as follows:
\begin{equation}
\label{eq:similar_1}
    \mathbb{E}_{W^{(d)}}\Bigg[\norm{\frac{dz}{dt}^{(d+1)}}\Bigg] = \mathbb{E}_{W^{(d)}_{\parallel}}\mathbb{E}_{W^{(d)}_{\bot}}\Big[\norm{f(W^{(d)}z^{(d)})}\Big].
\end{equation}

Now, if we condition on $W^{(d)}_{\parallel}$, we can replace the right-hand-side norm with the sum over the non-saturated indices, $\mathcal{A}_{W^{(d)}_{\parallel}}$ as follows:
\begin{equation}
\label{eq:expected_global}
    \mathbb{E}_{W^{(d)}}\Bigg[\norm{\frac{dz}{dt}^{(d+1)}}\Bigg] = \mathbb{E}_{W^{(d)}_{\parallel}}\mathbb{E}_{W^{(d)}_{\bot}}\Bigg[\Bigg(\sum_{i \in \mathcal{A}_{W^{(d)}_{\parallel}}} \big( (W^{(d)}_{\bot})_{i}\ z^{(d)} + (W^{(d)}_{\parallel})_{i}\ z^{(d)} \big)^{2}\Bigg)^{1/2}\Bigg].
\end{equation}

We need to derive a recurrence for the Eq. \ref{eq:expected_global}. To do this, we start a decomposition of the gradient state in respect to $z^{(d)}$ as $\frac{dz}{dt}^{(d)} = \frac{dz}{dt}^{(d)}_{\parallel} + \frac{dz}{dt}^{(d)}_{\bot}$. 

Now, let $\frac{\tilde{dz}}{dt}^{(d+1)} = 1_{\mathcal{A}_{W^{(d)}_{\parallel}}} h^{(d+1)}$, be the latent gradient vector of all unsaturated units, and zeroed saturated units. Also we decompose the column space of the weight matrix in respect to $\tilde{z}^{(d+1)}$ as: $\tensor[]{W}{^{(d)}} = \tensor[^\bot]{W}{^{(d)}} + \tensor[^\parallel]{W}{^{(d)}}$.

Then by definition, we have the following expressions:
\begin{equation}
\label{eq:def_perpendi_dzdt}
\frac{dz}{dt}^{(d+1)}_{\bot} = W^{(d)}z^{(d)} 1_{\mathcal{A}} - \langle W^{(d)}z^{(d)} 1_{\mathcal{A}}, \hat z^{(d+1)} \rangle \hat z^{(d+1)},~~~~~ \hat . = unit~vector
\end{equation}

\begin{equation}
\label{eq:def_perpendi_Wz}
 \tensor[^\bot]{W}{^{(d)}} z^{(d)} = \tensor[]{W}{^{(d)}} z^{(d)} -  \langle \tensor[]{W}{^{(d)}} z^{(d)}, \hat{\tilde{z}}^{(d+1)}\rangle \hat{\tilde{z}}^{(d+1)}
\end{equation}

Looking at Eq. \ref{eq:def_perpendi_dzdt} and Eq. \ref{eq:def_perpendi_Wz}, and based on the definitions provided, their right-hand-side are equal to each other for any $i \in \mathcal{A}$. Therefore, their left-hand-sides are equivalent as well. More precisely:

\begin{equation}
\label{eq:the_statement_to_start_recurrence}
    \frac{dz}{dt}^{(d+1)}_{\bot} . 1_{\mathcal{A}} =  \tensor[^\bot]{W}{^{(d)}} z^{(d)} . 1_{\mathcal{A}}. 
\end{equation}

The statement in Eq. \ref{eq:the_statement_to_start_recurrence} allows us to determine the following inequality, which builds up the first steps for the recurrence:

\begin{equation}
\label{eq:first_statement_to_recurrence}
    \norm{\frac{dz}{dt}^{(d+1)}_{\bot}} \geq \norm{ \frac{dz}{dt}^{(d+1)}_{\bot} . 1_{\mathcal{A}}}
\end{equation}

Now let us return to Eq. \ref{eq:expected_global}, and plug in the following decompositions:
\begin{equation}
    \frac{dz}{dt}^{(d)} = \frac{dz}{dt}^{(d)}_{\bot} + \frac{dz}{dt}^{(d)}_{\parallel} 
\end{equation}
\begin{equation}
    \tensor[]{W}{_{\bot}}^{(d)} = \tensor[^\parallel]{W}{_{\bot}}^{(d)} + \tensor[^\bot]{W}{_{\bot}}^{(d)}~~~~~~~~~~~~\tensor[]{W}{_{\parallel}}^{(d)} = \tensor[^\parallel]{W}{_{\parallel}}^{(d)} + \tensor[^\bot]{W}{_{\parallel}}^{(d)},
\end{equation}

we have:

\begin{gather}
\label{eq:expected_global_recurrence}
    \mathbb{E}_{W^{(d)}}\Bigg[\norm{\frac{dz}{dt}^{(d+1)}}\Bigg] =\\ \mathbb{E}_{W^{(d)}_{\parallel}}\mathbb{E}_{W^{(d)}_{\bot}}\Bigg[\Bigg(\sum_{i \in \mathcal{A}_{W^{(d)}_{\parallel}}} \big( (\tensor[^\parallel]{W}{_{\bot}}^{(d)} + \tensor[^\bot]{W}{_{\bot}}^{(d)})_{i}\ z^{(d)}_{\bot} + (\tensor[^\parallel]{W}{_{\parallel}}^{(d)} + \tensor[^\bot]{W}{_{\parallel}}^{(d)})_{i}\ z^{(d)}_{\parallel} \big)^{2}\Bigg)^{1/2}\Bigg]
\end{gather}

As stated in Theorem \ref{theorem:neural_ode}, we conditioned the input on its perpendicular components. Therefore, we write the recurrence of the states also for their perpendicular components by dropping the parallel components, $\tensor[^\parallel]{W}{_{\bot}}^{(d)}$ and $\tensor[^\parallel]{W}{_{\parallel}}^{(d)}$, and using Eq. \ref{eq:first_statement_to_recurrence} as follows:

\begin{gather}
\label{eq:second_statement_to_recurrence}
    \mathbb{E}_{W^{(d)}}\Bigg[\norm{\frac{dz}{dt}^{(d+1)}_{\bot}} \Bigg] \geq \mathbb{E}_{W^{(d)}_{\parallel}}\mathbb{E}_{W^{(d)}_{\bot}}\Bigg[\Bigg(\sum_{i \in \mathcal{A}_{W^{(d)}_{\parallel}}} \big( (\tensor[^\bot]{W}{_{\bot}}^{(d)})_{i}\ z^{(d)}_{\bot} + (\tensor[^\bot]{W}{_{\parallel}}^{(d)})_{i}\ z^{(d)}_{\parallel} \big)^{2}\Bigg)^{1/2}\Bigg]
\end{gather}

The term $\tensor[^\bot]{W}{_{\parallel}}^{(d)} z^{(d)}_{\parallel}$ is constant, as the inner expectation is conditioned on $W^{(d)}_{\parallel}$. Now by using Lemma \ref{lem:norm_and_translation}, we can wirte:

\begin{gather}
\label{eq:third_statement_to_recurrence}
    \mathbb{E}_{W^{(d)}_{\bot}}\Bigg[\Bigg(\sum_{i \in \mathcal{A}_{W^{(d)}_{\parallel}}} \big( (\tensor[^\bot]{W}{_{\bot}}^{(d)})_{i}\ z^{(d)}_{\bot} + (\tensor[^\bot]{W}{_{\parallel}}^{(d)})_{i}\ z^{(d)}_{\parallel} \big)^{2}\Bigg)^{1/2}\Bigg] \geq \\ 
    \mathbb{E}_{W^{(d)}_{\bot}}\Bigg[\Bigg(\sum_{i \in \mathcal{A}_{W^{(d)}_{\parallel}}} \big( (\tensor[^\bot]{W}{_{\bot}}^{(d)})_{i}\ z^{(d)}_{\bot}\big)^{2}\Bigg)^{1/2}\Bigg]
\end{gather}

By applying Lemma \ref{lem:norm_of_projections} we get:

\begin{gather}
\label{eq:forth_statement_to_recurrence}
    \mathbb{E}_{W^{(d)}_{\bot}}\Bigg[\Bigg(\sum_{i \in \mathcal{A}_{W^{(d)}_{\parallel}}} \big( (\tensor[^\bot]{W}{_{\bot}}^{(d)})_{i}\ z^{(d)}_{\bot}\big)^{2}\Bigg)^{1/2}\Bigg]
    \geq \frac{\sigma_{w}}{\sqrt{k}}\sqrt{2}\frac{\sqrt{2|\mathcal{A}_{W^{(d)}_{\parallel}}|-3}}{2} \mathbb{E}\Big[\norm{z^{(d)}_{\bot}}\Big].
\end{gather}

As we selected \url{Hard-tanh} activation functions with $p = \mathbb{P}(|h^{(d+1)}_{i} |< 1)$, and the condition $|\mathcal{A}_{W^{(d)}_{\parallel}}| \geq 2$ we have $\sqrt{2}\frac{\sqrt{2|\mathcal{A}_{W^{(d)}_{\parallel}}|-3}}{2} \geq \frac{1}{\sqrt{2}}\sqrt{|\mathcal{A}_{W^{(d)}_{\parallel}}|}$, and therefore we get: 

\begin{gather}
\label{eq:fifth_statement_to_recurrence}
    \mathbb{E}_{W^{(d)}}\Bigg[\norm{\frac{dz}{dt}^{(d+1)}_{\bot}} \Bigg] 
    \geq
    \frac{1}{\sqrt{2}} \Bigg(\sum_{j=2}^{k} \begin{pmatrix} k \\ j \end{pmatrix} p^j {(1-p)}^{k-j} \frac{\sigma_{w}}{\sqrt{k}}\sqrt{j} \Bigg) \mathbb{E}\Big[\norm{z^{(d)}_{\bot}}\Big]
\end{gather}

Keep in mind that we are referring to $|\mathcal{A}_{W^{(d)}_{\parallel}}|$ as $j$. Now we need to bound the $\sqrt{j}$ term, by considering the binomial distribution represented by the sum. Consequently, we can rewrite the sum in Eq. \ref{eq:fifth_statement_to_recurrence} as follows:

\begin{gather*}
    \sum_{j=2}^{k} \begin{pmatrix} k \\ j \end{pmatrix} p^j {(1-p)}^{k-j} \frac{\sigma_{w}}{\sqrt{k}}\sqrt{j}  = - \begin{pmatrix} k \\ 1 \end{pmatrix} p^j {(1-p)}^{k-1} \frac{\sigma_{w}}{\sqrt{k}} + \sum_{j=2}^{k} \begin{pmatrix} k \\ j \end{pmatrix} p^j {(1-p)}^{k-j} \frac{\sigma_{w}}{\sqrt{k}}\sqrt{j} \\
    = -\sigma_{w} \sqrt{k}p(1-p)^{k-1} + k p \frac{\sigma_{w}}{\sqrt{k}} \underbrace{\sum_{j=2}^{k} \frac{1}{\sqrt{j}}\begin{pmatrix} k-1 \\ j-1 \end{pmatrix} p^{j-1} {(1-p)}^{k-j}}_\text{XT}
\end{gather*}

and by utilizing Jensen's inequality with $1/\sqrt{x}$, we can simplify XT as follows as it is the expectation of the binomial distribution $(k-1,p)$ \cite{raghu2017expressive}:

\begin{gather*}
    \sum_{j=2}^{k} \frac{1}{\sqrt{j}}\begin{pmatrix} k-1 \\ j-1 \end{pmatrix} p^{j-1} {(1-p)}^{k-j} \geq \frac{1}{\sqrt{\sum_{j=2}^{k} j \begin{pmatrix} k-1 \\ j-1 \end{pmatrix} p^{j-1} {(1-p)}^{k-j}}} = \frac{1}{\sqrt{(k-1)p+1}}
\end{gather*}

and therefore:

\begin{gather}
\label{eq:pp}
    \mathbb{E}_{W^{(d)}}\Bigg[\norm{\frac{dz}{dt}^{(d+1)}_{\bot}} \Bigg] 
    \geq
    \frac{1}{\sqrt{2}} \Bigg( -\sigma_{w} \sqrt{k}p(1-p)^{k-1} + \sigma_{w} \frac{\sqrt{k} p}{{\sqrt{(k-1)p+1}}} \Bigg) \mathbb{E}\Big[\norm{z^{(d)}_{\bot}}\Big]
\end{gather}

Now we need to find a range for $p$. \cite{raghu2017expressive} showed that for \url{Hard-tanh} activations, given the fact that $h^{(d+1)}_{i}$ is a random variable with variance less than $\sigma_w$, for an input argument $|A| \sim \mathcal{N}(0,\sigma^{2}_{w})$, we can lower bound $p = \mathbb{P}(|h^{(d+1)}_{i} |< 1)$, as follows:

\begin{equation}
    p = \mathbb{P}(|h^{(d+1)}_{i} |< 1) \geq \mathbb{P}(|A|< 1) \geq \frac{1}{\sqrt{2\pi}\sigma_{w}},~~~~ \forall~\sigma_{w} \geq 1,
\end{equation}

and find an upper bound equal to $\frac{1}{\sigma_w}$ \cite{raghu2017expressive}. Therefore the equation becomes:

\begin{gather}
\label{eq:sixth_statement_to_recurrence}
    \mathbb{E}_{W^{(d)}}\Bigg[\norm{\frac{dz}{dt}^{(d+1)}_{\bot}} \Bigg] 
    \geq
    \frac{1}{\sqrt{2}} \Bigg( -\sigma_{w} \sqrt{k}\frac{1}{\sigma_w}(1-\frac{1}{\sigma_w})^{k-1} + \sigma_{w} \frac{\sqrt{k} \frac{1}{\sqrt{2\pi}\sigma_{w}}}{{\sqrt{(k-1)\frac{1}{\sqrt{2\pi}\sigma_{w}}+1}}} \Bigg) \mathbb{E}\Big[\norm{z^{(d)}_{\bot}}\Big]
\end{gather}

and with some simplifications: 
\begin{gather}
\label{eq:sixth_2_statement_to_recurrence}
    \mathbb{E}_{W^{(d)}}\Bigg[\norm{\frac{dz}{dt}^{(d+1)}_{\bot}} \Bigg] 
    \geq
    \frac{1}{\sqrt{2}} \Bigg( - \sqrt{k}(1-\frac{1}{\sigma_w})^{k-1} + (2\pi)^{-1/4}  \frac{\sqrt{k \sigma_w}}{{\sqrt{(k-1)+\sqrt{2\pi}\sigma_w}}} \Bigg) \mathbb{E}\Big[\norm{z^{(d)}_{\bot}}\Big]
\end{gather}

Now, we want to roll back Eq. \ref{eq:sixth_2_statement_to_recurrence} to arrive at the inputs. To do this, we replace the expectation term on the right-hand-side by: 

\begin{equation}
    \label{eq:help_help}
    \mathbb{E}\Big[\norm{z^{(d)}_{\bot}}\Big] = \mathbb{E}\Bigg[\norm{\int_t \frac{dz}{dt}^{(d)}_{\bot} dt} \Bigg] 
\end{equation}

\begin{proposition}
\label{prop:1}
Let $f:\mathbb{R} \rightarrow S$, be an integratable function, on Banach space S. Then the following holds:
\begin{equation}
  \int_t \norm{f(t)} dt \geq \norm{\int_t f(t) dt}.
\end{equation}
\end{proposition}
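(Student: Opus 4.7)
The plan is to prove this as the triangle inequality for vector-valued (Bochner-style) integrals, reduced to the finite-sum case by approximation. First I would verify the statement for simple functions: if $f(t) = \sum_{i=1}^{n} c_i \chi_{E_i}(t)$ with $c_i \in S$ and $E_i$ pairwise disjoint measurable sets of finite measure, then $\int_t f(t)\,dt = \sum_i c_i \mu(E_i)$ while $\int_t \norm{f(t)}\,dt = \sum_i \norm{c_i}\, \mu(E_i)$. Iterating the Banach-space triangle inequality $\norm{x+y} \leq \norm{x} + \norm{y}$, together with $\norm{\alpha x} = |\alpha|\,\norm{x}$ for $\alpha \geq 0$, immediately gives $\norm{\int_t f(t)\,dt} \leq \int_t \norm{f(t)}\,dt$ in the simple-function case.

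Second, I would extend the inequality to a general integrable $f$ via the defining property of Bochner integrability: there exists a sequence of simple functions $f_n$ with $\int_t \norm{f_n(t) - f(t)}\,dt \to 0$. This forces $\int_t f_n(t)\,dt \to \int_t f(t)\,dt$ in $S$ by definition of the Bochner integral, and $\int_t \norm{f_n(t)}\,dt \to \int_t \norm{f(t)}\,dt$ in $\mathbb{R}$ by applying the reverse triangle inequality $|\,\norm{f_n(t)} - \norm{f(t)}\,| \leq \norm{f_n(t) - f(t)}$ pointwise and integrating.

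Third, applying the simple-function case to each $f_n$ gives $\norm{\int_t f_n(t)\,dt} \leq \int_t \norm{f_n(t)}\,dt$, and passing to the limit using continuity of the norm $\norm{\cdot} : S \to \mathbb{R}$ on the left and the convergence established above on the right yields the claim for $f$.

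The main obstacle is purely one of bookkeeping in the measure-theoretic framework. In the actual setting of the paper, $t$ ranges over a compact interval and $f$ takes values in a finite-dimensional Euclidean space, so an alternative and more elementary route is to approximate the integral directly by Riemann sums $\sum_i f(t_i)\Delta t_i$, apply the triangle inequality on each finite sum, and pass to the limit of vanishing mesh using continuity of $f$ (or at least piecewise continuity, which is the regularity implicitly assumed for the hidden states in the preceding expressivity arguments). Either route produces the same bound; the Bochner formulation is only needed if one insists on the fully general Banach-space statement.
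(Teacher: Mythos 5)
Your argument is correct, but it takes a different route from the paper. You prove the triangle inequality for the Bochner integral in the standard definitional way: verify it for simple functions by iterating the finite triangle inequality, then pass to a general integrable $f$ through an approximating sequence of simple functions, using the reverse triangle inequality to transfer convergence of $\int_t \norm{f_n(t)}\,dt$ and continuity of the norm on the left-hand side; your Riemann-sum variant for the finite-dimensional, (piecewise) continuous case of the paper is likewise fine. The paper instead uses a duality argument: setting $x = \int_t f(t)\,dt$, it picks $\Lambda \in S^{*}$ with $\norm{\Lambda} = 1$, bounds $\Lambda x = \int_t \Lambda f(t)\,dt \leq \int_t \norm{f(t)}\,dt$ pointwise, and then invokes Hahn--Banach to choose a norming functional achieving $\Lambda x = \norm{x}$, which yields the claim in two lines. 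The trade-off is that the paper's proof is shorter and avoids all approximation bookkeeping, at the price of invoking Hahn--Banach, whereas your proof uses only the definition of Bochner integrability and the finite triangle inequality, so it is more elementary in its ingredients but requires the limit-passing machinery you describe. Both establish the stated inequality in full generality.
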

\begin{proof}
let $x = \int_t f(t) dt ~\in S$, and $\Lambda \in S^{*}$ with $\norm{\Lambda} = 1$. Then we have:
\begin{equation}
    \Lambda x = \int_t \Lambda f(t) dt \leq \int_t \norm{\Lambda}_{S^*} \norm{f(t)}_{S} dt = \int_t \norm{f(t)} dt.
\end{equation}
Now based on Hahn-Banach we have: $ \norm{x} \leq \int_t \norm{f(t)} dt$.
\end{proof}

Based on Proposition \ref{prop:1} and Eq. \ref{eq:help_help} we have:

\begin{equation}
    \mathbb{E}\Bigg[\norm{\int_t \frac{dz}{dt}^{(d)}_{\bot} dt} \Bigg] \geq \mathbb{E}\Bigg[\int_t \norm{\frac{dz}{dt}^{(d)}_{\bot} } dt\Bigg] = l(z^{(d)}_{\bot} (t)).
\end{equation}

Now by By recursively rolling out the the expression of Eq. \ref{eq:sixth_2_statement_to_recurrence} to arrive at input, $I(t)$ and denoting $c_1 = \frac{l(I_{\bot}(t))}{l(I(t))}$, we have:

\begin{gather}
\label{eq:seventh_statement_to_recurrence}
    \mathbb{E}_{W^{(d)}}\Bigg[\norm{\frac{dz}{dt}^{(d+1)}_{\bot}} \Bigg] 
    \geq
    \Bigg(\frac{1}{\sqrt{2}} \Bigg( - \sqrt{k}(1-\frac{1}{\sigma_w})^{k-1} + (2\pi)^{-1/4}  \frac{\sqrt{k \sigma_w}}{{\sqrt{(k-1)+\sqrt{2\pi}\sigma_w}}} \Bigg)\Bigg)^{d} c_1 l(I(t))
\end{gather}

Finally, the asymptotic form of the bound, and considering $c_1 \approx 1$ for input trajectories which are orthogonal to their successive time-points gives us:

\begin{gather}
\label{eq:eights_statement_to_recurrence}
    \mathbb{E}_{W^{(d)}}\Bigg[\norm{\frac{dz}{dt}^{(d+1)}_{\bot}} \Bigg] 
    \geq
    O\Bigg(\frac{\sqrt{k \sigma_{w}}}{\sqrt{k+ \sigma_w}}\Bigg)^{d} \norm{I(t)}.
\end{gather}

Eq. \ref{eq:eights_statement_to_recurrence} shows the lower bound for every infinitesimal fraction of the length of the hidden state (in principle components state, $z$, for a neural ODE architecture. consequently, the overall trajectory length is bounded by:

\begin{gather}
\label{eq:ninth_statement_to_recurrence}
    \mathbb{E}\Bigg[ l(z^{(d)}(t))\Bigg] 
    \geq
    O\Big(\frac{\sqrt{k\sigma_w}}{\sqrt{k+\sigma_w}}\Big)^{d \times L} l(I(t)),
\end{gather}
with $L$ being the number ODE steps. Finally we consider the non-zero bias case:

As stated in the Notations section, network parameters are set by $W^{(d)} \sim \mathcal{N}(0,\sigma^{2}_{w}/k)$ and bias vectors as $b^{(d)} \sim \mathcal{N}(0,\sigma^{2}_{b})$. Therefore, the variance of the $h^{(d+1)}_{i}$ will be smaller than $\sigma^{2}_{w} + \sigma^{2}_{b}$. Therefore we have \cite{raghu2017expressive}: 
\begin{equation}
    p = \mathbb{P}(|h^{(d+1)}_{i} |< 1) \geq  \frac{1}{\sqrt{2\pi} \sqrt{\sigma^{2}_{w} + \sigma^{2}_{b}}}
\end{equation}

By replacing this into Eq. \ref{eq:pp}, and simplify further we get:

\begin{gather}
\label{eq:last_statement_to_recurrence}
    \mathbb{E}\Bigg[ l(z^{(d)}(t))\Bigg] 
    \geq
    O\Bigg(\frac{\sigma_w\sqrt{k}}{\sqrt{ \sigma^{2}_{w} + \sigma^{2}_{b} + k \sqrt{\sigma^{2}_{w} + \sigma^{2}_{b}}}}\Bigg)^{d \times L} l(I(t)),
\end{gather}

the main statement of Theorem \ref{theorem:neural_ode} for Neural ODEs is obtained.

\textbf{Deriving the trajectory length lower-bound for CT-RNNs   } For a successive layer $d+1$ of a CT-RNN the gradient between the states at $t+\delta t$ and $t$, $x^{d+1}(t+\delta t)$ and $x^{d+1}(t)$ is determined by:

\begin{equation}
 \frac{dx}{dt}^{(d+1)} = - w^{(d+1)}_{\tau}x^{(d+1)} + f(h^{(d)}),~~~~~~~h^{(d)} = W^{(d)}x^{(d)}+b^{(d)}.
\end{equation}

With $W^{(d+1)}_{\tau}$ standing for the parameter vector $\frac{1}{\tau^{(d+1)}}$, which is conditioned to be strictly positive. Accordingly, for the latent representation (the first two principle components of the hidden state $x^{(d+1)}$), which is denoted by $z^{(d+1)}(t)$, this gradient can be determined by:

\begin{equation}
 \frac{dz}{dt}^{(d+1)} = - W^{(d+1)}_{\tau}z^{(d+1)} + f(h^{(d)}),~~~~~~~h^{(d)} = W^{(d)}z^{(d)}+b^{(d)}
\end{equation}

An explicit Euler discretization of this ODE gives us: 
\begin{equation}
    z^{(d+1)}(t+\delta t) = (1- \delta t W^{(d+1)}_{\tau}) z^{(d+1)} + \delta t f(h^{(d)}),~~~~~~~h^{(d)} = W^{(d)}z^{(d)}+b^{(d)}.
\end{equation}

the same discretization model for Neural ODEs gives us:

\begin{equation}
    z^{(d+1)}(t+\delta t) = z^{(d+1)} + \delta t f(h^{(d)}),~~~~~~~h^{(d)} = W^{(d)}z^{(d)}+b^{(d)}.
\end{equation}

The difference between the two representations is only a $- \delta t W^{(d+1)}_{\tau}$ term before $z^{(d+1)}$, which consists of $W^{(d+1)}_{\tau}$ that is a strictly positive random variable sampled from a folded normal distribution $\mathcal{N}(|x|;\mu_Y,\sigma_Y)$, with mean $\mu_Y = \sigma \sqrt{\frac{2}{\pi}} e^{(-\mu^2/2\sigma^2)} -\mu (1-2 \Phi(\frac{\mu}{\sigma}))$ and variance $\sigma^{2}_{Y} = \mu^2 + \sigma^2 - \mu^{2}_{Y}$ \cite{tsagris2014folded}. $\mu$ and $\sigma$ are the mean and variance of the normal distribution over random variable $x$, and $\Phi$ is a normal cumulative distribution function. For a zero-mean normal distribution with variance of $\sigma^{2}_{b}$, we get: 

\begin{equation}
    \mathcal{N}(|W_{\tau}|; \sigma_{b} \sqrt{\frac{2}{\pi}},~(1-\frac{2}{\pi})\sigma^{2}_{b}).
\end{equation}

Accordingly, we approximate the lower-bound for the CT-RNNs, with the simplified asymptotic form of:

\begin{equation}
    \mathbb{E}\Bigg[ l(z^{(d)}(t))\Bigg] 
    \geq
    O\Bigg(\frac{(\sigma_w-\sigma_b)\sqrt{k}}{\sqrt{ \sigma^{2}_{w} + \sigma^{2}_{b} + k \sqrt{\sigma^{2}_{w} + \sigma^{2}_{b}}}}\Bigg)^{d \times L} l(I(t)),
\end{equation}

This gives of the statement of the theorem for CT-RNNs.

\end{proof}

\section*{Proof of Theorem \ref{theorem:LTC}}

\textbf{Distribution of parameters of LTCs   }

The Weight matrix for each layer d of the form $W^{(d)} \sim \mathcal{N}(0,\sigma^{2}_{w}/k)$. The bias vectors as $b^{(d)} \sim \mathcal{N}(0,\sigma^{2}_{b})$. The vector parameter $W^{(d+1)}_{\tau}$ is strictly positive and it is sampled from a folded normal distribution \cite{tsagris2014folded} $ \mathcal{N}(|W_{\tau}|; \sigma_{b} \sqrt{\frac{2}{\pi}},~(1-\frac{2}{\pi})\sigma^{2}_{b})$. The parameter stands for the inverse of the time-constant of neurons, $\frac{1}{\tau^{(d+1)}}$
The parameter $A^{(d)}$ is a weight matrix sampled from $\sim \mathcal{N}(0,\sigma^{2}_{w}/k)$.

\begin{proof}
For a successive layer $d+1$ of an LTC network, the gradient between the states at $t+\delta t$ and $t$, $x^{d+1}(t+\delta t)$ and $x^{d+1}(t)$ is determined by:

\begin{equation}
 \frac{dx}{dt}^{(d+1)} = - (w^{(d+1)}_{\tau} + f(h^{(d)}))x^{(d+1)} + A^{(d)}f(h^{(d)}),~~~~~~~h^{(d)} = W^{(d)}x^{(d)}+b^{(d)}.
\end{equation}

Accordingly, for the latent representation (the first two principle components of the hidden state $x^{(d+1)}$), which is denoted by $z^{(d+1)}(t)$, this gradient can be determined by:

\begin{equation}
\label{eq:ltc_recurrence_first}
 \frac{dz}{dt}^{(d+1)} = - (w^{(d+1)}_{\tau} + f(h^{(d)}))z^{(d+1)} + A^{(d)}f(h^{(d)}),~~~~~~~h^{(d)} = W^{(d)}z^{(d)}+b^{(d)}.
\end{equation}

We first take the expectation of norms from both side of Eq. \ref{eq:ltc_recurrence_first}, while similar to Eq. \ref{eq:similar_1} and based on Lemma \ref{lem:norm_of_projections}, we decompose the expectation over parallel and orthogonal components of the weight matrix $W^{(d)}$ as follows:

\begin{equation}
    \mathbb{E}_{W^{(d)}}\Bigg[\norm{\frac{dz}{dt}^{(d+1)}}\Bigg] = \mathbb{E}_{W^{(d)}_{\parallel}}\mathbb{E}_{W^{(d)}_{\bot}}\Big[\norm{- (w^{(d+1)}_{\tau} + f(h^{(d)}))z^{(d+1)} + A^{(d)}f(h^{(d)})}\Big].
\end{equation}

We can now derive the following inequality for the norms of difference versus difference of norms as follows:

\begin{gather}
    \mathbb{E}_{W^{(d)}}\Bigg[\norm{\frac{dz}{dt}^{(d+1)}}\Bigg] =
    \\
    \mathbb{E}_{W^{(d)}_{\parallel}}\mathbb{E}_{W^{(d)}_{\bot}}\Big[\norm{A^{(d)}f(h^{(d)} - (w^{(d+1)}_{\tau} + f(h^{(d)}))z^{(d+1)})}\Big] \geq
    \\
    \mathbb{E}_{W^{(d)}_{\parallel}}\mathbb{E}_{W^{(d)}_{\bot}}\Big[\norm{A^{(d)}f(h^{(d)})} - \norm{(w^{(d+1)}_{\tau} + f(h^{(d)}))z^{(d+1)}}\Big] \geq
    \\
    \mathbb{E}_{W^{(d)}_{\parallel}}\mathbb{E}_{W^{(d)}_{\bot}}\Big[\norm{A^{(d)}f(h^{(d)})}\Big] - \mathbb{E}_{W^{(d)}_{\parallel}}\mathbb{E}_{W^{(d)}_{\bot}}\Big[\norm{(w^{(d+1)}_{\tau} + f(h^{(d)}))z^{(d+1)}}\Big].
    \label{eq:ltc_recurrence_second}
\end{gather}

Let us first focus on the \textbf{right expression} in Eq. \ref{eq:ltc_recurrence_second}. The norm can be split into the norm of products, as follows:
\begin{equation}
\mathbb{E}_{W^{(d)}_{\parallel}}\mathbb{E}_{W^{(d)}_{\bot}}\Big[\norm{(w^{(d+1)}_{\tau} + f(h^{(d)}))}\norm{z^{(d+1)}}\Big].
\end{equation}

Now by conditioning the expectations by the following rule $\mathbb{E}[XY] = \mathbb{E}[X]\mathbb{E}[Y]$, we get:

\begin{equation}
\mathbb{E}_{W^{(d)}_{\parallel}}\mathbb{E}_{W^{(d)}_{\bot}}\Big[\norm{(w^{(d+1)}_{\tau} + f(h^{(d)}))}\Big]\mathbb{E}\Big[\norm{z^{(d+1)}}\Big].
\end{equation}

We determine the set of indices for which $f$ is not saturated and we assume that it is defined by \url{Hard-tanh} activations:
\begin{equation}
    \mathcal{A}_{W^{(d)}_{\parallel}} = \{i : i \in [k], |h_{i}^{(d+1)}| < 1 \}
\end{equation}

Now, if we condition on $W^{(d)}_{\parallel}$, we can replace the first norm by the sum over the non-saturated indices, $\mathcal{A}_{W^{(d)}_{\parallel}}$ as follows:
\begin{equation}
\label{eq:ltc_recurrence_third}
\Resize{13cm}{\mathbb{E}_{W^{(d)}_{\parallel}}\mathbb{E}_{W^{(d)}_{\bot}}\Bigg[\Bigg(\sum_{i \in \mathcal{A}_{W^{(d)}_{\parallel}}} \big( (W^{(d)}_{\bot} + \frac{w^{(d+1)}_{\tau}}{|\mathcal{A}|})_{i}\ z^{(d)} + (W^{(d)}_{\parallel} + \frac{w^{(d+1)}_{\tau}}{|\mathcal{A}|})_{i}\ z^{(d)} \big)^{2}\Bigg)^{1/2}\Bigg]\mathbb{E}\Big[\norm{z^{(d+1)}}\Big].}
\end{equation}

In Eq. \ref{eq:ltc_recurrence_third}, the term $ \frac{w^{(d+1)}_{\tau}}{|\mathcal{A}|}$ determines the average effect of the time-constant weights in the computation of each state which is a constant addition. $|\mathcal{A}|$ is the number of non-saturated states. Now by taking similar steps, from Eq. \ref{eq:expected_global} to Eq. \ref{eq:forth_statement_to_recurrence}, and by applying Lemma \ref{lem:norm_of_projections} to Eq. \ref{eq:ltc_recurrence_third}, we have:

\begin{equation}
\begin{gathered}
\label{eq:ltc_recurrence_forth}
    \mathbb{E}_{W^{(d)}_{\bot}}\Bigg[\Bigg(\sum_{i \in \mathcal{A}_{W^{(d)}_{\parallel}}} \big( (\tensor[^\bot]{W}{_{\bot}}^{(d)}+ \frac{w^{(d+1)}_{\tau}}{|\mathcal{A}_{W^{(d)}_{\parallel}}|})_{i}\ z^{(d)}_{\bot}\big)^{2}\Bigg)^{1/2}\Bigg] \mathbb{E}_{W^{(d)}}\Big[\norm{z^{(d+1)}}\Big]
    \geq 
    \\
    \sqrt{\frac{\sigma^{2}_{w}}{k}  + \frac{\sigma^{2}_{b}}{|\mathcal{A}_{W^{(d)}_{\parallel}}|^2}}\sqrt{2}\frac{\sqrt{2|\mathcal{A}_{W^{(d)}_{\parallel}}|-3}}{2} \mathbb{E}\Big[\norm{z^{(d)}_{\bot}}\Big]\mathbb{E}\Big[\norm{z^{(d+1)}}\Big].
\end{gathered}
\end{equation}

As we selected \url{Hard-tanh} activation functions with $p = \mathbb{P}(|h^{(d+1)}_{i} |< 1)$, and the condition $|\mathcal{A}_{W^{(d)}_{\parallel}}| \geq 2$ we have $\sqrt{2}\frac{\sqrt{2|\mathcal{A}_{W^{(d)}_{\parallel}}|-3}}{2} \geq \frac{1}{\sqrt{2}}\sqrt{|\mathcal{A}_{W^{(d)}_{\parallel}}|}$, and therefore we can simplify further:

\begin{equation}
\begin{gathered}
\label{eq:ltc_recurrence_fifth}
    \mathbb{E}_{W^{(d)}_{\bot}}\Bigg[\Bigg(\sum_{i \in \mathcal{A}_{W^{(d)}_{\parallel}}} \big( (\tensor[^\bot]{W}{_{\bot}}^{(d)}+ \frac{w^{(d+1)}_{\tau}}{|\mathcal{A}_{W^{(d)}_{\parallel}}|})_{i}\ z^{(d)}_{\bot}\big)^{2}\Bigg)^{1/2}\Bigg] \mathbb{E}\Big[\norm{z^{(d+1)}}\Big]
    \geq 
    \\
    \frac{1}{\sqrt{2}}\sqrt{\frac{\sigma^{2}_{w} |\mathcal{A}_{W^{(d)}_{\parallel}}|}{k}  + \underbrace{\frac{\sigma^{2}_{b}}{|\mathcal{A}_{W^{(d)}_{\parallel}}|}}_\text{$< < 1$}} \mathbb{E}\Big[\norm{z^{(d)}_{\bot}}\Big]~\mathbb{E}\Big[\norm{z^{(d+1)}}\Big].
\end{gathered}
\end{equation}

Finally, we have:

\begin{gather}
\label{eq:ltc_recurrence_sixth}
\mathbb{E}_{W^{(d)}_{\bot}}\Big[\norm{(w^{(d+1)}_{\tau} + f(h^{(d)}))}\Big]\mathbb{E}\Big[\norm{z^{(d+1)}}\Big]
    \geq 
    \frac{1}{\sqrt{2}} \frac{\sigma_{w}}{\sqrt{k}} \sqrt{|\mathcal{A}_{W^{(d)}_{\parallel}}|} \mathbb{E}\Big[\norm{z^{(d)}_{\bot}}\Big]~\mathbb{E}\Big[\norm{z^{(d+1)}}\Big].
\end{gather}

Now if we take the computational steps from Eq. \ref{eq:fifth_statement_to_recurrence} to \ref{eq:pp}, we obtain the following:
\begin{equation}
\begin{gathered}
\label{eq:ltc_recurrence_seventh}
\mathbb{E}_{W^{(d)}_{\bot}}\Big[\norm{(w^{(d+1)}_{\tau} + f(h^{(d)}))}\Big]\mathbb{E}\Big[\norm{z^{(d+1)}}\Big]
    \geq
    \\
        \frac{1}{\sqrt{2}} \Bigg( -\sigma_{w} \sqrt{k}p(1-p)^{k-1} + \sigma_{w} \frac{\sqrt{k} p}{{\sqrt{(k-1)p+1}}} \Bigg) \mathbb{E}\Big[\norm{z^{(d)}_{\bot}}\Big]~\mathbb{E}\Big[\norm{z^{(d+1)}}\Big].
\end{gathered}
\end{equation}

As stated before, network parameters are set by $W^{(d)} \sim \mathcal{N}(0,\sigma^{2}_{w}/k)$ and bias vectors as $b^{(d)} \sim \mathcal{N}(0,\sigma^{2}_{b})$. Therefore, the variance of the $h^{(d+1)}_{i}$ will be smaller than $\sigma^{2}_{w} + \sigma^{2}_{b}$. Therefore we have \cite{raghu2017expressive}: 
\begin{equation}
    p = \mathbb{P}(|h^{(d+1)}_{i} |< 1) \geq  \frac{1}{\sqrt{2\pi} \sqrt{\sigma^{2}_{w} + \sigma^{2}_{b}}}
\end{equation}

This will give us the following asymptotic bound for the right expression of Eq. \ref{eq:ltc_recurrence_second} as follows:

\begin{equation}
\begin{gathered}
\mathbb{E}_{W^{(d)}_{\bot}}\Big[\norm{(w^{(d+1)}_{\tau} + f(h^{(d)}))}\Big]\mathbb{E}\Big[\norm{z^{(d+1)}}\Big]
    \geq
    \\
O\Bigg(\frac{\sigma_w\sqrt{k}}{\sqrt{ \sigma^{2}_{w} + \sigma^{2}_{b} + k \sqrt{\sigma^{2}_{w} + \sigma^{2}_{b}}}}\Bigg) \mathbb{E}\Big[\norm{z^{(d)}_{\bot}}\Big]~\mathbb{E}\Big[\norm{z^{(d+1)}}\Big]
\end{gathered}
\end{equation}

Now let us work with the \textbf{Left expression} in Eq. \ref{eq:ltc_recurrence_second}:

\begin{gather}
\label{eq:ltc_important_1}
    \mathbb{E}_{W^{(d)}_{\parallel}}\mathbb{E}_{W^{(d)}_{\bot}}\Big[\norm{A^{(d)}f(h^{(d)})}\Big]
\end{gather}

As $A$ serves as a constant, we can take it out of the norm and the expectations. The resulting expectation of the norm, precisely expresses a deep neural network $f$ with \url{Hard-tanh} activations, for which \cite{raghu2017expressive} showed that it can be bound as follows:

\begin{gather}
  |A^{(d)}| \mathbb{E}_{W^{(d)}_{\parallel}}\mathbb{E}_{W^{(d)}_{\bot}}\Big[\norm{f(h^{(d)})}\Big] 
  \geq
  O\Bigg(\frac{\sigma_w\sqrt{k}}{\sqrt{ \sigma^{2}_{w} + \sigma^{2}_{b} + k \sqrt{\sigma^{2}_{w} + \sigma^{2}_{b}}}}\Bigg) |A^{(d)}| \mathbb{E}\Big[\norm{z^{(d)}_{\bot}}\Big]
\end{gather}

And since $A \sim \mathcal{N}(0,\sigma^{2}_{w})$, the bound can be computed as follows: 

\begin{gather}
\label{eq:ltc_important_2}
  \mathbb{E}_{W^{(d)}_{\parallel}}\mathbb{E}_{W^{(d)}_{\bot}}\Big[\norm{A^{(d)}f(h^{(d)})}\Big] 
  \geq
  O\Bigg(\frac{\sigma_{w}^{2} \sqrt{k}}{\sqrt{ \sigma^{2}_{w} + \sigma^{2}_{b} + k \sqrt{\sigma^{2}_{w} + \sigma^{2}_{b}}}}\Bigg) \mathbb{E}\Big[\norm{z^{(d)}_{\bot}}\Big].
\end{gather}

Therefore, for the perpendicular compartments of the gradient of the hidden state, we have:

\begin{equation}
\begin{gathered}
     \mathbb{E}_{W^{(d)}}\Bigg[\norm{\frac{dz}{dt}^{(d+1)}_{\bot}} \Bigg] 
     \geq
  O\Bigg(\frac{\sigma_w\sqrt{k}}{\sqrt{ \sigma^{2}_{w} + \sigma^{2}_{b} + k \sqrt{\sigma^{2}_{w} + \sigma^{2}_{b}}}}\Bigg) \mathbb{E}\Big[\norm{z^{(d)}_{\bot}}\Big]~\mathbb{E}\Big[\norm{z^{(d+1)}}\Big]
  +\\
  O\Bigg(\frac{\sigma_{w}^{2} \sqrt{k}}{\sqrt{ \sigma^{2}_{w} + \sigma^{2}_{b} + k \sqrt{\sigma^{2}_{w} + \sigma^{2}_{b}}}}\Bigg) \mathbb{E}\Big[\norm{z^{(d)}_{\bot}}\Big].
\end{gathered}
\end{equation}

If we simplify further and considering the fact that we are shaping the recurrence for every infinitesimal $\delta t$ of the system's dynamics, we get the following asymptotic bound:

\begin{equation}
\begin{gathered}
     \mathbb{E}_{W^{(d)}}\Bigg[\norm{\frac{dz}{dt}^{(d+1)}_{\bot}} \Bigg] 
     \geq
  O\Bigg(\frac{\sigma_w\sqrt{k}}{\sqrt{ \sigma^{2}_{w} + \sigma^{2}_{b} + k \sqrt{\sigma^{2}_{w} + \sigma^{2}_{b}}}}\Bigg) \mathbb{E}\Big[\norm{z^{(d)}_{\bot}}\Big]~ \Big(\sigma_{w} + \frac{\norm{z^{(d+1)}}}{\min(\delta t, L)}\Big).
\end{gathered}
\end{equation}

Now similar as before, by recursively unrolling the $n$ layer neural network $f$ to reach the input, denoting $c_1 = \frac{l(I_{\bot}(t))}{l(I(t))} \approx 1$, and establishing the bound for an input sequence of length $T$, for a layer $d$ of a network we get:

\begin{equation}
\begin{gathered}
\label{eq:done_ltc}
     \mathbb{E}\Bigg[ l(z^{(d)}(t))\Bigg] 
     \geq
  O\Bigg(\Big(\frac{\sigma_w\sqrt{k}}{\sqrt{ \sigma^{2}_{w} + \sigma^{2}_{b} + k \sqrt{\sigma^{2}_{w} + \sigma^{2}_{b}}}}\Big)^{d \times L} \Big(\sigma_{w} + \frac{\norm{z^{(d)}}}{\min(\delta t, L)}\Big)\Bigg) l(I(t)).
\end{gathered}
\end{equation}

Equation \ref{eq:done_ltc} gives us the statement of the theorem.

\end{proof}

\section{Experimental Setup - Section 6}
Here, we describe the experimental setup for the tasks discussed in Tables \ref{tab:res_32}, \ref{tab:per-time-point_classification}, \ref{tab:per-time-point_classification_2}, and \ref{tab:per-sequence}.

For each experiment we performed a training-validation-test split of 75:10:15 ratio, with the exact ratios depending on the specific dataset. 
After each training epoch the validation metric was evaluated. We kept a backup of the network weights of the configuration that achieved the best validation metric over the whole training process. At the end of the training process, we restored the backed-up weights and evaluated the network on the test-set.
We repeated this procedure for five times with different weight initializations and reported the mean and standard deviation in Tables \ref{tab:res_32}, \ref{tab:per-time-point_classification}, \ref{tab:per-time-point_classification_2}, and \ref{tab:per-sequence}.
Hyper-parameters are shown in Table \ref{tab:hyperparams}.

Each RNN consists of 32 hidden units. As each task requires a different number of output units, the output of the RNNs were fed through a learnable linear layer to project the output to the required dimension.
Note that the objective of our experimental setup is not to build the best predictive models, but to empirically compare the expressive power and generalization abilities of various RNN models.

We implemented all RNN models in \url{TensorFlow 1.14}. For the sake of reproducability, we have submitted all code and data along with our submission and will make them publicly available upon acceptance.

\textbf{ODE solvers}
For simulating the differential equations we used an explicit Euler methods for CT-RNNs, a 4-th order Runge-Kutta method for the Neural ODE as suggested in \cite{chen2018neural}, and our fused ODE solver for LTCs.
All ODE solvers were fixed-step solvers. The time-step is set to 1/6 of the input sampling frequency, i.e., each RNN step consists of 6 ODE solver steps.

\textbf{Hand Gesture Segmentation   } The experiment concerns the temporal segmentation of hand gestures. The dataset consists of seven recordings of individuals performing a sequence of hand gesticulations \cite{wagner2014gesture}. The input features at each time-step are comprised of 32 data points recorded from a motion detection sensor. The output, at each time step, represents one of the five possible hand gestures; rest position, preparation, stroke, hold, and retraction. The objective is to train a classifier to detect hand gestures from the motion data.

We cut each of the seven recordings into overlapping sub-sequences of exactly 32 time-steps. We randomly separated all sub-sequences into non-overlapping training  (75\%), validation (10\%), and test (15\%) sets.
Input features were normalized to have zero mean and unit standard deviation.  We used the categorical classification accuracy as the performance metric.

\textbf{Room Occupancy   } The objective is to detect whether a room is occupied by observations recorded from five physical sensor streams, such as temperature, humidity, and CO2 concentration sensors \cite{candanedo2016accurate}. Input data and binary labels are sampled in one-minute long intervals.

The original dataset consists of a pre-defined training and test set.
We used the binary classification accuracy as the performance metric.
We cut the sequences of each of the two sets into a training and test set of overlapping sub-sequences of exactly 32 time-steps. Note that no item from the test set was leaking into the training set during this process. 
Input features of all data were normalized by the mean and standard deviation of the training set, such that the training set has zero mean and unit standard deviation. We select 10\% of the training set as the validation set. 

\textbf{Human Activity Recognition   } This task involves the recognition of human activities, such as walking, sitting, and standing, from inertial measurements of the user's smartphone \cite{anguita2013public}. Data consists of recordings from 30 volunteers performing activities form six possible categories. Input variables are filtered and are pre-processed to obtain a feature column of 561 items at each time step.

The output variable represents one of six activity categories at each time step. We employed the categorical classification accuracy as our performance metric. The original data is already split into a training and test set and preprocessed by temporal filters. The accelerometer and gyroscope sensor data were transformed into 561 features in total at each time step.
We aligned the sequences of the training and test set into overlapping sub-sequences of exactly 32 time-steps. We select 10\% of the training set as the validation set. 

\textbf{Sequential MNIST   } We also worked with MNIST. While the original MNIST is a computer vision classification problem, we transform the dataset into a sequence classification task. In particular, each sample is encoded as a 28-dimensional time-series of length 28. Moreover, we downscale all input feature to the range [0,1]. We exclude 10\% of the training set and use it as our validation set.

\textbf{Traffic Estimation   } The objective of this experiment is to predict the hourly westbound traffic volume at the US Interstate 94 highway between Minneapolis and St. Paul. Input features consist of weather data and date information such as local time and flags indicating the presence of weekends, national, or regional holidays. The output variable represents the hourly traffic volume.

The original data consists of hourly recordings between October 2012 and October 2018, provided by the Minnesota Department of Transportation and OpenWeatherMap. We selected the seven columns of the data as input features: 1. Flag indicating whether the current day is a holiday, 2. The temperature in Kelvin normalized by annual mean, 3. Amount of rainfall, 4. Amount of snowfall, 5. Cloud coverage in percent, 6.  Flag indicating whether the current day is a weekday, and 7. time of the day preprocessed by a sine function to avoid the discontinuity at midnight. The output variable was normalized to have zero mean and unit standard deviation. We used the mean-squared-error as training loss and evaluation metric.
We split the data into partially overlapping sequences lasting 32 hours.
We randomly separated all sequences into non-overlapping training  (75\%), validation (10\%), and test (15\%) set.

\textbf{Power  } We used the ''Individual household electric power consumption Data Set'' from the UCI machine learning repository \cite{dua2019uci}.
Objective of this task is to predict the hourly active power consumption of a household. Input features are secondary measurement such as the reactive power draw and sub-meterings.
Approximately 1.25\% of all measurements are missing, which we overwrite by the most recent measurement of the same feature. 
We apply a feature-wise whitening normalization and split the dataset into non-overlapping sub-sequences of length 32 time-steps. 
The prediction variable (active power consumption) is also whitened. We use the squared-error as optimization loss and evaluation metric.

\textbf{Ozone Day Prediction   } The objective of task is to forecast ozone days, i.e., days when the local ozone concentration exceeds a critical level. Input features consist of wind, weather, and solar radiation readings.

The original dataset ''Ozone Level Detection Data Set'' was taken from the UCI repository \cite{dua2019uci} consists of daily data points collected by the Texas Commission on Environmental Quality (TCEQ). We split the 6-years period into overlapping sequences of 32 days. 
A day was labeled as ozone day if, for at least 8 hours, the exposure to ozone exceeded 80 parts per billion. Inputs consist of 73 features, including wind, temperature, and solar radiation data. 
The binary predictor variable has a prior of 6.31\%, i.e., expresses a 1:15 imbalance. For the training procedure, we weighted the cross-entropy loss at each day, depending on the label. Labels representing an ozone day were assigned 15 times the weight of a non-ozone day. 
Moreover, we reported the $F_1$-score instead of standard accuracy (higher score is better).

In roughly 27\% of all samples, some of the input features were missing. To not disrupt the continuity of the collected data, we set all missing features to zero. Note that such zeroing of some input features potentially negatively affects the performance of our RNN models compared to non-recurrent approaches and filtering out the missing data. Consequently, ensemble methods and model-based approaches, i.e., methods that leverage domain knowledge \cite{zhang2008forecasting}, can outperform the end-to-end RNNs studied in our experiment. We randomly split all sub-sequences into training (75\%), validation (10\%), and test (15\%) set.

\textbf{Person Activity - 1st Setting   } In this setting we used the "Human Activity" dataset described in \cite{rubanova2019latent}. However, as we use different random seeds for the training-validation-test splitting, and a different input representation, our results are not transferable directly to those obtained by \cite{rubanova2019latent}, in the current setting.

The dataset consists of 25 recordings of various physical activity of human participants, for instance, among others lying down, walking, sitting on the ground. The participants were equipped with four different sensors, each sampling at a period of 211 ms. 

Similar to \cite{rubanova2019latent}, we packed the 11 activity categories into 7 classes. No normalization is applied to the input features. The 25 sequences were split into partially overlapping sub-sequences of length 32 time-steps.

unlike Rubanova et al. \cite{rubanova2019latent}, we represented the input time-series as a 7-dimensional feature vector, where the first 4 entries specified the sensor ID and the last 3 entries the sensor values. Due to the high sampling frequency we discarded all timing information.

The results are reported in Table \ref{tab:per-time-point_classification}.

\textbf{Person Activity - 2nd Setting }
We setup a second experimental setup based on the same dataset as the person activity task above. In contrast to the first setting, we made sure that the training and test sets are equivalent to \cite{rubanova2019latent} in order to be able to directly compare results.  However, we apply the same pre-processing as in our experiment before. In particular, represent the datasets as irregularly sampled in time and dimension using a padding and masking, which results in a 24-dimensional input vector. On the other hand, we discard all time information and feed the input data as described above in the form of a 7-dimensional vector. Note that the data is still the same, just represented in a different format.

Based on the training - test split of \cite{rubanova2019latent} we select 10\% of the training set as our validation set.
Moreover, we train our model for 400 epochs and select the epoch checkpoint which achieved the best results on the validation set.
This model is then selected to be tested on the test set provided by \cite{rubanova2019latent}. Results are reported in Table \ref{tab:per-time-point_classification_2}.

\textbf{Half-Cheetah Kinematic modeling   } This task is inspired by the physics simulation experiment of Chen et al. \cite{rubanova2019latent}, which evaluated how well RNNs are suited to model kinematic dynamics. In our experiment, we collected 25 rollouts of a pre-trained controller for the HalfCheetah-v2 gym environment \cite{brockman2016openai}. Each rollout is composed of a series of 1000 17-dimensional observation vectors generated by the MuJoCo physics engine \cite{todorov2012mujoco}. The task is then to fit the observation space time-series in an autoregressive fashion.To increase the difficulty, we overwrote $5\%$ of the actions produced by the pre-trained controller by random actions. We split the data into training, test, and validation sets by a ratio of 2:2:1. Training loss and test metric were mean squared error (MSE). Results were reported in Table \ref{tab:per-sequence}.

\section{Hyperparameters and Parameter counts - Tables \ref{tab:res_32}, \ref{tab:per-time-point_classification}, and \ref{tab:per-sequence}}

\begin{table}[H]
    \centering
    \caption{Hyperparameters used for the experimental evaluations}
    \begin{tabular}{l|c|l}
    \toprule
         \textbf{Parameter} & \textbf{Value} & \textbf{Description} \\
    \midrule
         Number of hidden units & 32 & \\
         Minibatch size & 16 &  \\
         Learning rate & 0.001 - 0.02 & \\
         ODE-solver step & 1/6 & relative to input sampling period \\
         Optimizer & Adam \cite{kingma2014adam} & \\
         $\beta_1$ & 0.9 & Parameter of Adam\\
         $\beta_2$ & 0.999 & Parameter of Adam \\
         $\hat{\epsilon}$ & 1e-08 & Parameter of Adam \\
         BPTT length & 32 & Backpropagation through time length \\
         & & in time-steps\\
         Validation evaluation interval & 1 & Every x-th epoch the validation \\
         & & metric will be evaluated\\
         Training epochs & 200 & \\
         \bottomrule
    \end{tabular}
    \label{tab:hyperparams}
\end{table}

\begin{table}[H]
    \centering
    \caption{Number of parameters of various RNN model in relation to the RNN width $k$, the number of hidden layers $n$, and the number of decay slots $m$. }
    \begin{tabular}{l|c|l}
    \toprule
         \textbf{Model} & \textbf{Parameter count (asymptotic)} & \textbf{Parameter count (exact)}  \\
         \midrule
         CT-RNN & $O(nk^2) $ & $nk^2+2nk$ \\
         ODE-RNN & $O(nk^2)$ & $nk^2+nk$ \\
         LSTM & $O(nk^2)$ & $4nk^2 + 4nk$ \\
         CT-GRU & $O(m k^2)$ & $2m k^2 + 2mk + k^2+k$ \\
         LTC & $O(nk^2)$ & $4nk^2+3nk$ \\
         \bottomrule
    \end{tabular}
    \label{tab:param_efficiency}
\end{table}

\section{Additional trajectory space representations:}
Trajectory space representation for the results provided can be viewed at: \url{https://www.dropbox.com/s/ly6my34mbvsfi6k/additional_LTC_neurIPS_2020.zip?dl=0}

\section{Trajectory Length results}
\begin{figure}[H]
\centering
\includegraphics[width=1\textwidth]{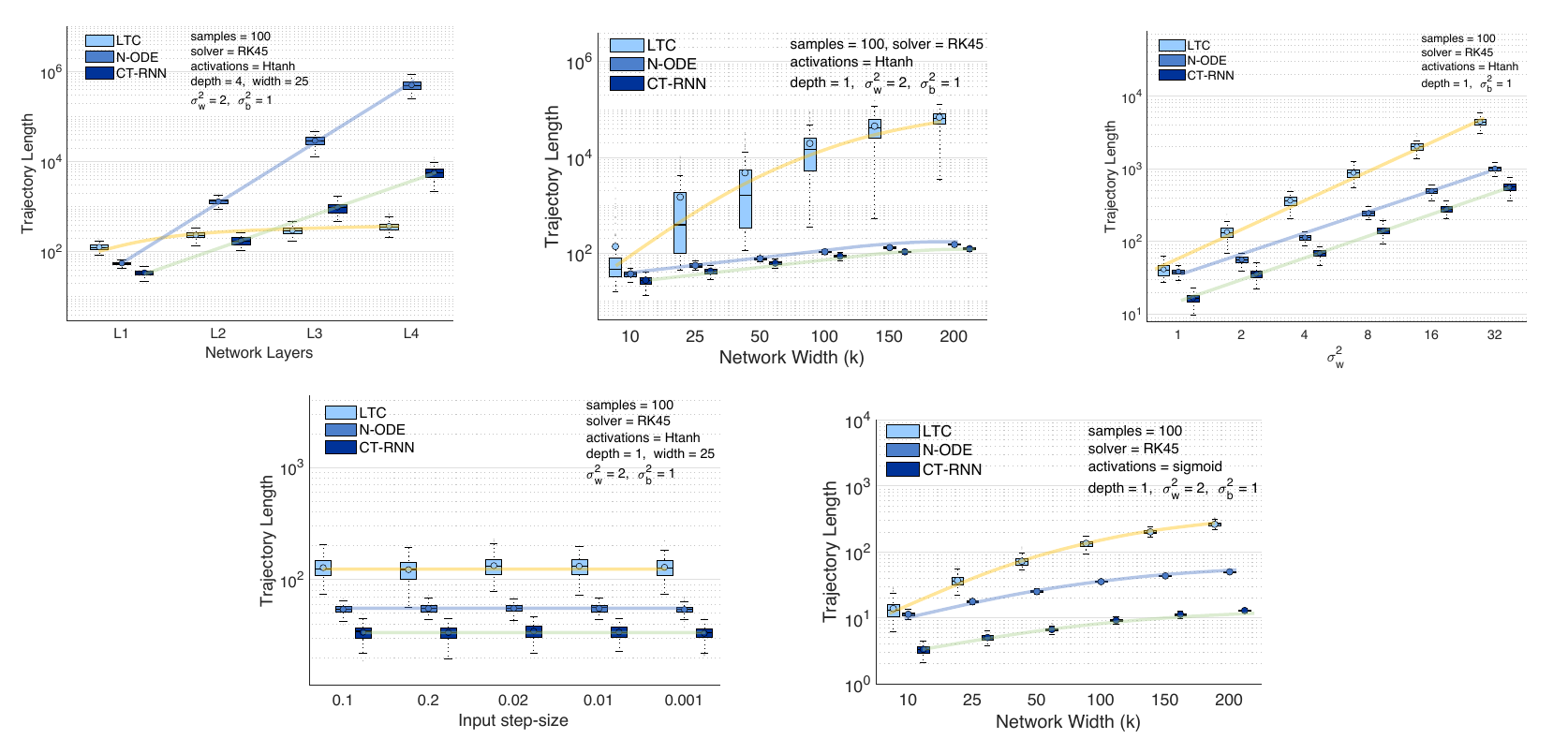}
\caption{Additional trajectory length results.}
\label{fig:trajectory_length_results}
\end{figure}

\section{Code and Data availability}
All code and data are publicly accessible at: \url{https://github.com/raminmh/liquid_time_constant_networks}.

\end{document}